\documentclass{article}

\usepackage{arxiv}

\usepackage{lipsum}		

\usepackage{amsmath,amsfonts,bm}

\def\eqref#1{equation~\ref{#1}}

\def\1{\bm{1}}

\DeclareMathAlphabet{\mathsfit}{\encodingdefault}{\sfdefault}{m}{sl}
\SetMathAlphabet{\mathsfit}{bold}{\encodingdefault}{\sfdefault}{bx}{n}

\DeclareMathOperator*{\argmax}{arg\,max}

\usepackage[utf8]{inputenc}
\usepackage[T1]{fontenc}

\usepackage{microtype}
\usepackage{graphicx}
\usepackage{subcaption}
\usepackage{booktabs}
\usepackage{placeins}

\usepackage{amsmath}
\usepackage{amssymb}
\usepackage{amsfonts}
\usepackage{amsthm}
\usepackage{mathtools}
\usepackage{bm}
\usepackage{bbm}
\usepackage{nicefrac}

\usepackage{xcolor}
\usepackage{soul}
\usepackage{comment}
\usepackage{enumitem}
\usepackage{refcount}
\usepackage[multiple]{footmisc}
\usepackage[ruled,vlined]{algorithm2e}
\usepackage[textsize=tiny]{todonotes}
\usepackage{natbib}

\definecolor{citecolor}{HTML}{0071BC}
\definecolor{linkcolor}{HTML}{ED1C24}
\usepackage[
  pagebackref=false,
  breaklinks=true,
  letterpaper=true,
  colorlinks,
  bookmarks=false,
  citecolor=citecolor,
  linkcolor=linkcolor,
  urlcolor=gray
]{hyperref}

\usepackage[capitalize,noabbrev]{cleveref}

\theoremstyle{plain}
\newtheorem{theorem}{Theorem}[section]
\newtheorem{proposition}[theorem]{Proposition}

\theoremstyle{definition}

\theoremstyle{remark}

\newsavebox{\imagebox}

\newcommand{\HVI}{\textsc{HVI}}

\newcommand{\SPI}{\textsc{SPI}}

\newcommand{\aESPI}{\alpha_{\textsc{ESPI}}}
\newcommand{\hataESPI}{\hat{\alpha}_{\textsc{ESPI}}}
\newcommand{\aNESPI}{\alpha_{\textsc{NESPI}}}
\newcommand{\hataNESPI}{\hat{\alpha}_{\textsc{NESPI}}}

\newcommand{\best}[1]{\textbf{#1}}

\newcommand{\papertitle}{Do We Really Need to Approach the Entire Pareto Front in Many-Objective Bayesian Optimisation?}

\title{\papertitle}

\author{ Chao Jiang \\
	School of Computer Science\\
	University of Birmingham\\
	\texttt{cxj249@student.bham.ac.uk} \\
	\And
    Jingyu Huang \\
	School of Mathematics\\
	University of Birmingham\\
	\texttt{j.huang.4@bham.ac.uk} \\
	\And
	Miqing Li\thanks{{Corresponding author: m.li.8@bham.ac.uk}} \\
	School of Computer Science\\
	University of Birmingham\\
	\texttt{m.li.8@bham.ac.uk} \\
}

\renewcommand{\shorttitle}{\textit{arXiv} Template}

\hypersetup{
pdftitle={A template for the arxiv style},
pdfsubject={q-bio.NC, q-bio.QM},
pdfauthor={David S.~Hippocampus, Elias D.~Striatum},
pdfkeywords={First keyword, Second keyword, More},
}

\begin{document}
\maketitle

\begin{abstract}
	Many-objective optimisation, a subset of multi-objective optimisation, involves optimisation problems with more than three objectives. As the number of objectives increases, the number of solutions needed to adequately represent the entire Pareto front typically grows substantially. This makes it challenging, if not infeasible, to design a search algorithm capable of effectively exploring the entire Pareto front. This difficulty is particularly acute in the Bayesian optimisation paradigm, where sample efficiency is critical and only a limited number of solutions (often a few hundred) are evaluated. Moreover, after the optimisation process, the decision-maker eventually selects just one solution for deployment, regardless of how many high-quality, diverse solutions are available. In light of this, we argue an idea that under a limited evaluation budget, it may be more useful to focus on finding a single solution of the highest possible quality for the decision-maker, rather than aiming to approximate the entire Pareto front as existing many-/multi-objective Bayesian optimisation methods typically do. Bearing this idea in mind, this paper proposes a \underline{s}ingle \underline{p}oint-based \underline{m}ulti-\underline{o}bjective search framework (SPMO) that aims to improve the quality of solutions along a direction that leads to a good tradeoff between objectives. Within SPMO, we present a simple acquisition function, called expected single-point improvement (ESPI), working under both noiseless and noisy scenarios. We show that ESPI can be optimised effectively with gradient-based methods via the sample average approximation (SAA) approach and theoretically prove its convergence guarantees under the SAA. We also empirically demonstrate that the proposed SPMO is computationally tractable and outperforms state-of-the-arts on a wide range of benchmark and real-world problems.
\end{abstract}


\section{Introduction}

Multi-objective optimisation problems (MOPs)~\citep{emmerich2018tutorial,zheng2024boundary} involve scenarios where multiple objectives need to be optimised simultaneously. Unlike single-objective optimisation problems which typically have a single optimal solution, in MOPs there is a set of optimal solutions known as Pareto optimal solutions. The corresponding points in the objective space form what is known as the Pareto front. In general, a multi-objective optimisation algorithm aims to generate a set of solutions that well approximate the Pareto front, from which the decision-maker chooses a solution to deploy based on their preferences.  

In modern applications, optimisation is becoming increasingly complex, with a growing number of requirements and objectives that need to be considered at the same time~\citep{matrosov2015many,hierons2020many,lin2025few}. 
Taking the car cab design as an example, there are up to nine objectives to be optimised~\citep{deb2013evolutionary}, 
including cabin space, fuel efficiency, acceleration time, and road noise at various speeds. 
This has given rise to a new research topic, many-objective optimisation, focusing on MOPs involving more than three objectives~\citep{ishibuchi2008evolutionary,li2015many}.


At the same time, many real-world multi-/many-objective optimisation problems are black-box and costly in terms of solution evaluation. This is evident across a range of fields, including chemistry~\citep{park2018multi,shields2021bayesian,dunlap2023continuous}, materials science~\citep{liang2021benchmarking,low2024evolution,peng2025bayesian}, and transportation~\citep{deb2013evolutionary,jain2013evolutionary,cheaitou2019greening,deb2009reliability}. 
For example, in vehicle design optimisation, it can take about 20 hours to evaluate a vehicle design~\citep{youn2004reliability,daulton2021parallel}.
To tackle such problems, multi-objective Bayesian optimisation (MOBO) is a very effective approach~\citep{garnett2023bayesian}, along with other alternatives (e.g., surrogate-assisted evolutionary algorithms~\citep{jin2011surrogate,liang2024survey}). 
Over the past decades, a variety of effective MOBO methods have emerged, including scalarisation-based methods~\citep{knowles2006parego,paria2020flexible,lin2022pareto} which convert a multi-objective problem into a number of single-objective problems, and Pareto-based methods which consider Pareto dominance relations over objectives~\citep{daulton2020differentiable,emmerich2006single,tu2022joint}. Most works aim to find a good approximation of the entire Pareto front.  

However, with the increase of the objective number, the number of solutions needed to adequately represent the problem's Pareto front typically grows substantially. For example, for a 10-objective problem, it normally needs $220$ points (i.e., $\binom{12}{3}$) even if only three divisions on each objective are considered~\citep{das1998normal}.  
This difficulty is especially pronounced in Bayesian optimisation, where often only a few hundred solutions can be generated and evaluated. With such a limited budget, it is highly unlikely for an optimisation algorithm to reach or even be close to the Pareto front.
On top of that, after the optimisation process, the decision-maker eventually selects just one solution to deploy, regardless of how many high-quality, diverse solutions are provided.

Given the above, this paper argues an idea that under a very limited evaluation budget, it may be more useful to focus on finding a single solution of the highest possible quality for the decision-maker, rather than aiming to approximate the entire Pareto front.
That is, we may not need to care about diversifying solutions to represent the entire Pareto front, but focus on improving the quality of a single solution. Figure~\ref{fig:motivation} illustrates this idea in a bi-objective case. As can be seen from the figure, in contrast to aiming for a set of diversified solutions which existing MOBO methods typically do~\citep{li2025expensive,lin2022pareto}, our method aims for a single solution with better convergence (i.e., closer to the Pareto front). Although it may yield a worse \textit{hypervolume} (HV) value~\citep{Zitzler1999} compared to the diverse solution set obtained by existing methods, it may be more likely to be chosen by the decision-maker as it achieves a more favourable trade-off among the objectives.

\begin{figure}[h]
    \centering
    \begin{minipage}{0.42\linewidth}{
    \includegraphics[width=0.95\linewidth]{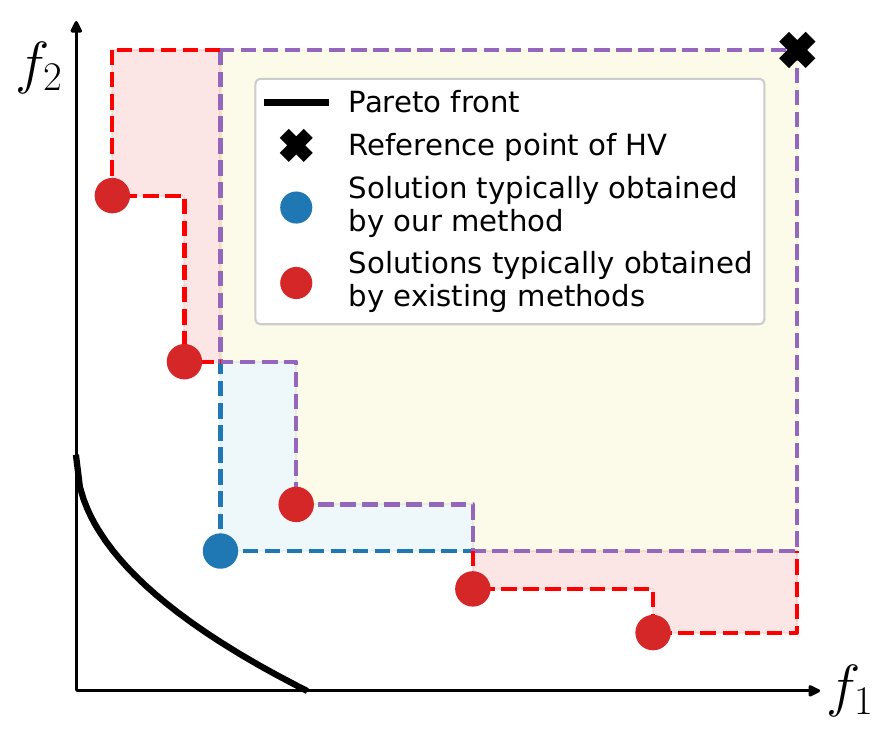}	
    }\end{minipage}
    \begin{minipage}{0.57\linewidth}{
    \caption{An illustration of our idea in a bi-objective case, in comparison with existing methods that aim to search for the entire Pareto front. The red points represent (nondominated) solutions that existing methods may obtain, and the blue point represents what our method aims for. It can be seen that the red points are much more diversified, hence, as a whole, having a better hypervolume (HV) value~\citep{Zitzler1999}. However, the blue point has better convergence (i.e., closer to the Pareto front) than any single red point, which may be more likely to be preferred by the decision-maker. 
    }
    \label{fig:motivation}
    }\end{minipage}
\end{figure}


Bearing this idea in mind, this paper proposes a single point-based multi-objective search framework, called SPMO.
The contributions of this work can be summarised as follows. 

\begin{itemize}
    \item We propose a novel MOBO search framework that does not aim to approximate the entire Pareto front, but rather focuses on improving the quality of solutions along a single direction that leads to a good tradeoff between objectives. 
    
    \item Within SPMO, we present a simple acquisition function, called Expected Single-Point Improvement (ESPI). We show that ESPI can be optimised effectively with gradient-based methods via the sample average approximation (SAA) approach from~\citet{balandat2020botorch} and also theoretically prove its theoretical convergence guarantees under the SAA. 

    \item We consider both noiseless and noisy cases, resulting in two versions of the proposed ESPI. 

    \item We verify SPMO through an extensive experimental study, including in comparison with various state-of-the-arts, under both sequential and batch optimisation settings, through sensitivity analysis, with different metrics embedded, and on a range of benchmark and real-world problems.

\end{itemize}

\section{Background and Related Work}
\subsection{Background}
\paragraph{Many-Objective Optimisation.} 
Many-objective optimisation, a subset of multi-objective optimisation, refers to an optimisation scenario having more than three objectives to be considered. 
Without loss of generality, this paper considers the problem of minimising a vector-valued function: $\bm{f}(\bm{x}): \mathcal{X} \to \mathbb{R}^m$, where $\bm{x}\in \mathcal{X}$ ($\mathcal{X} \subset \mathbb{R}^{d}$) and $m$ is the number of objectives. 
In multi-/many-objective optimisation, 
a solution $\bm x_1$ is said to dominate $\bm x_2$, denoted by $\bm x_1\prec \bm x_2$, if $\forall i \in\{1,...,m\}$, $f_i(\bm x_1) \le f_i(\bm x_2)$ and $\exists j \in \{1,...,m\}$, $f_j(\bm x_1) < f_j(\bm x_2)$. 
If a solution $\bm x_1\in\mathcal{X}$ is not dominated by any other solution, then $\bm x_1$ is said to be Pareto optimal. 
The collection of Pareto optimal solutions of a problem is called the Pareto set, and its mapping to the objective space is called Pareto front. 


\paragraph{Bayesian Optimisation (BO).} 
BO is a sample-efficient global optimisation approach that builds a probabilistic surrogate, typically a Gaussian process (GP), and uses an acquisition function $\alpha(\bm x): \mathcal{X} \to \mathbb{R}$ to decide which points to evaluate. 
In this work, we model each objective with an independent Gaussian process $f_i \sim \mathcal{GP}(m_i(\bm x),k_i(\bm x,\bm x'))$, where $m_i(\bm x): \mathcal{X} \to \mathbb{R}$ is the $i$th mean function, and $k_i(\cdot,\cdot): \mathcal{X} \times \mathcal{X} \to \mathbb{R}$ is the $i$th covariance function. 
Given $n$ observed points $\mathcal{D}^n= \{(\bm x^t,\bm y^t)\}_{t=1}^n$ where $\bm y^t = \bm{f}(\bm x^t) + \bm\zeta^t$ and the noise $\bm \zeta^t \sim \mathcal{N}(0, \text{diag} (\bm \sigma_\zeta^2))$, the posterior distribution of the $i$th objective at a new location $\bm x$ is a Gaussian distribution: 
$p(f_i(\bm x)|\mathcal{D}^n) \sim \mathcal{N}(\mu_i(\bm x),\sigma_i^2(\bm x))$ where $\mu_i(\bm x)$ and $\sigma_i^2(\bm x)$ are the mean and variance at $\bm x$, respectively. 
Detailed expressions of the mean and variance are given in Appendix~\ref{appdx:sec:BO}. 

\subsection{Related Work}


Over the past decades, various MOBO methods have been proposed~\citep{konakovic2020diversity,daulton2022robust}. 
They can be loosely divided into scalarisation-based and Pareto-based methods. 
In scalarisation-based methods~\citep{knowles2006parego,paria2020flexible}, a multi-objective problem is converted into a number of single-objective problems~\citep{chugh2020scalarizing}. 
Hence, one can leverage acquisition functions~\citep{lai1985asymptotically} from single-objective BO to decide which point to evaluate. 
For instance, using random augmented Tchebycheff scalarisations~\citep{miettinen1999nonlinear}, ParEGO~\citep{knowles2006parego} and TS-TCH~\citep{paria2020flexible} optimise expected improvement (EI)~\citep{jones1998efficient} and Thompson sampling (TS)~\citep{thompson1933likelihood}, respectively. 
In contrast, Pareto-based methods consider Pareto dominance relations over objectives~\citep{emmerich2006single,tu2022joint}. 
A popular idea is to use HV as maximising the HV value is equivalent to finding the entire Pareto front~\citep{shang2020survey}. 
Expected hypervolume improvement (EHVI) is commonly used in MOBO~\citep{couckuyt2014fast,daulton2020differentiable,daulton2021parallel}.
Another idea is to leverage information theory to guide exploration toward regions likely contributing to the Pareto front. 
For instance, joint entropy search (JES)~\citep{tu2022joint} selects points that maximise the joint information gain for optimal inputs (i.e., the approximated Pareto set) and outputs (i.e., the approximated Pareto front). 
All of the above methods aim to approach the entire Pareto front. 

It is worth noting that, similar to our approach, a few studies do not attempt to approximate the entire Pareto front. Some methods instead target a specific region of the front, such as the central area~\citep{gaudrie2018budgeted,gaudrie2020targeting,binois2020kalai}. Another line of work incorporates decision-maker preferences by dynamically adjusting the target region based on elicited or updated preferences during optimisation~\citep{abdolshah2019multi,astudillo2020multi,ozaki2024multi,ip2025user}. In contrast, our method assumes no prior knowledge of decision-maker preferences and seeks to identify a high-quality trade-off solution across objectives. A more detailed discussion of related work can be found in Appendix~\ref{Appendix:related_work}.

\section{The Proposed Method}


In this section, we first give the proposed MOBO framework. We then present the considered acquisition function (called ESPI), which is based on a simple distance-based metric. We note that analytically solving ESPI is not feasible, and thus consider its Monte Carlo approximation. Lastly, we consider ESPI under noisy cases, namely noisy ESPI (NESPI), and also its MC approximation.

\begin{algorithm*}[!ht]
    \DontPrintSemicolon
    \caption{Single Point-based Multi-Objective (SPMO) Framework}
    \label{alg:SPMO}
    
    \KwIn {$\bm f$: Expensive black-box problem with $m$ objectives;
    $T$: Maximum number of evaluations; \\
    \qquad \quad $g$: Metric that measures the quality of a single point;
    $\alpha$: Acquisition function; \\
    \qquad \quad $\mathcal{D}^{n_0}:= \{(\bm x^t,\bm{y}^t)\}_{t=1}^{n_0} $: Initial observed points.

    }
    
    \nl \For{$n=n_0+1:T$}{
        \nl $GPs \leftarrow$ Train $\mathcal{GP}s(\mathcal{D}^{n})$  \tcp*{Train $m$ Gaussian process models}
        \nl $\bm x^n \leftarrow \arg\max_{\bm x\in \mathcal{X}}\alpha\big(g(\bm x, GPs)\big)$ \tcp*{Maximise the acquisition function based \hspace*{21.3em} on the single-point quality metric $g(\cdot)$}
        \nl $\bm y^n \leftarrow \bm f(\bm x^n)+\bm\zeta^n$ \tcp*{Evaluate the solution $\bm x^n$}
        \nl $\mathcal{D}^{n} \leftarrow \mathcal{D}^{n-1} \cup \{(\bm x^n,\bm{y}^n)\}$  \tcp*{Augment the observed solution}
    }

    \KwOut{$\mathcal{D}^{T}$: Observed solutions.}

\end{algorithm*}








\paragraph{Single Point-based Multi-Objective (SPMO) Framework.}



Algorithm~\ref{alg:SPMO} gives the procedure of the proposed SPMO framework. As can be seen, SPMO is very similar to a standard MOBO algorithm, except for the step of maximising the acquisition function based on a single-point quality metric (line 3). In principle, any metric that can reflect the quality of a solution in achieving a good trade-off between objectives can be adopted. This includes distance-based metrics and scalarisation-based metrics, such as the weighted sum or augmented Tchebycheff~\citep{miettinen1999nonlinear} with a fixed weight vector (e.g., $(\frac{1}{m},\dots,\frac{1}{m}) \in \mathbb{R}^m$ in the $m$-objective case). 
Here, we consider a simple distance-based metric, and we will compare different metrics in our experiments (Section~\ref{sec:exp_results}; details in Appendix~\ref{appendix:sec:metric}). 

\paragraph{Single-Point Improvement (SPI).} 
We consider the distance of solutions to a utopian point: 
$g(\bm f(\bm x),\bm z^{*}) = \|\bm f(\bm x)-\bm z^*\|=\sqrt{\sum_{i=1}^m \bigl(f_i(\bm x) - z_i^*\bigr)^2}$, where $m$ denotes the number of objectives, and $\bm z^{*}=(z_1^*, z_2^*, \dots, z_m^*)$ is a utopian point, i.e., $z_i^{*}\leq\min_{\bm x\in\mathcal{X}} f_i(\bm x)$. 
In many real-world cases, the utopian value of an objective can be loosely estimated, for example, by assuming idealised conditions such as zero cost, time or error~\citep{branke2008multiobjective}. 
In our experimental evaluation, we perform a sensitivity analysis on the choice of the utopian point, and it shows that substantially different settings can yield consistent results.

We first consider noiseless cases, i.e., $\bar{\bm y}^t = \bm{f}(\bar{\bm x}^t)$. 
Let $\bar{\mathcal{D}}^n= \{(\bar{\bm x}^t,\bar{\bm y}^t\}_{t=1}^n$ be $n$ observed points and $\bar{X}^n=\{\bar{\bm x}^t\}_{t=1}^n$ be the set of all the observed decision vectors in $\bar{\mathcal{D}}^n$.  
For any point $\bm x \in \mathcal{X}$, we define the single-point improvement (SPI) as: 
\begin{equation}
    I_{SP}(\bm f(\bm x)|g^*, \bm z^{*},\bar{\mathcal{D}}^n)
    = 
    \max\Bigl( 0,\, g^* - \|\bm f(\bm x)-\bm z^*\| \Bigr)
\end{equation}
where $g^*= \min_{\bm x\in \bar{X}^n} g( \bm f(\bm x),\bm z^{*})$. 

\paragraph{Expected Single-Point Improvement (ESPI).} 
We now present ESPI to account for the posterior distribution $p(\bm f|\bar{\mathcal{D}}^n)$. 
Suppose that we independently model each objective
$f_i$ as a Gaussian process based on $\bar{\mathcal{D}}^n$, Then, the posterior of each $f_i$ at a new location $\bm x$ is a Gaussian random variable, i.e., $p(f_i(\bm x)|\bar{\mathcal{D}}^n) \sim \mathcal{N}\!\bigl(\mu_i(\bm x),\sigma_i^2(\bm x)\bigr)$, in which $f_1, \dots, f_m$ are mutually independent Gaussians. Let $\eta_i:=f_i-z_i^{*}$. 
Then we obtain $\eta_i \sim \mathcal{N}\!\bigl(\mu_i(\bm x)-z_i^{*},\sigma_i^2(\bm x)\bigr)$ which is a Gaussian as well. 
The proposed ESPI is defined as:
\begin{equation}\label{euqa:ESPI}
\begin{aligned}
    \aESPI(\bm x)
    &= \mathbb{E}\bigl[I_{SP}(\bm f(\bm x)|g^*, \bm z^{*},\bar{\mathcal{D}}^n)\bigr]\\
    &= 
    \mathbb{E}_{p(\bm \eta)}\bigl[\max( 0,\, g^* - \|\boldsymbol{\eta}\| )] \\
\end{aligned}
\end{equation}

An illustration of ESPI in a bi-objective case is given in Figure~\ref{fig:espi} of Appendix~\ref{appdx:sec:ESPI} for aiding understanding. 
Note that the integral in Eq.~\ref{euqa:ESPI} cannot be solved analytically as it involves the distribution of $\|\boldsymbol{\eta}\|$, whose PDF and CDF have no closed-form expressions and are typically computed via numerical methods~\citep{imhof1961computing,ruben1962probability,das2025new}.

Hence, we use the MC integration with samples from the posterior $\tilde{\bm f}_t(\bm x) \sim p(\bm f (\bm x)|\bar{\mathcal{D}}^n)$ for $t=1,\dots,N$ to estimate Eq.~\ref{euqa:ESPI}:
\begin{equation}
    \aESPI(\bm x) \approx \hataESPI(\bm x) = \frac{1}{N}\sum_{t=1}^N I_{SP}({\tilde{\bm f}_t(\bm x)}|g^*, \bm z^*,\bar{\mathcal{D}}^n)
    \label{eq_MC_ESPI}
\end{equation}


\paragraph{Noisy Expected Single-Point Improvement (NESPI).} 
In the real world, it is not uncommon to encounter an optimisation problem with noises: $\bm y^t = \bm{f}(\bm x^t) + \bm\zeta^t$, where $\bm \zeta^t \sim \mathcal{N}(0, \text{diag} (\bm \sigma_\zeta^2))$. 
In noisy cases, simply using the observed best distance $g^*$ may adversely affect the optimisation performance. 
Here, we present an extension of ESPI, i.e., noisy ESPI (NESPI). 
Let $\mathcal{D}^n= \{(\bm x^t,\bm y^t)\}_{t=1}^n$ be $n$ observed points and $X^n=\{\bm x^t\}_{t=1}^n$ be the set of all the observed decision vectors in $\mathcal{D}^n$. 
By considering the uncertainty in the function values at $X^n$, the proposed NESPI is defined as: 
\begin{equation}\label{equa:NESPI}
    \aNESPI(\bm x) = \int\aESPI(\bm x|\hat{g^*})p(\bm f|\mathcal{D}^n)d\bm f
\end{equation}
where $\hat{g^*}$ denotes the smallest distance to the utopian point over $\bm f(X^n)$. Note that in noiseless cases, NESPI is equivalent to ESPI. Additionally, ESPI and NESPI can be naturally extended to the parallel (batch) setting by using sequential greedy approximation~\citep{balandat2020botorch}.

Like in the noiseless case, the integral in Eq.~\ref{equa:NESPI} is also analytically intractable but can be approximated using the MC integration. 
Let $\tilde{\bm f_t}(\bm x) \sim p(\bm f(\bm x)|\mathcal{D}^n)$ for $t=1,\dots,N$ be samples from the posterior, and let $\hat{g^*}= \min_{\bm x \in X^n}\|\tilde{\bm f_t}(\bm x) - \bm z^*\|$ be the smallest distance to utopian point over the previously evaluated points under the sampled function $\tilde{\bm f_t}(\bm x)$. 
Then, $\aNESPI \approx \frac{1}{N}\sum_{t=1}^N \aESPI(\bm x|\hat{g^*},\bm z^*,\mathcal{D}^n)$. 
Using the MC integration, the inner expectation in $\aNESPI$ can be computed simultaneously using samples from the joint posterior $\tilde{\bm f_t}(\bm x,X^n) \sim p(f(\bm x,X^n)|\mathcal{D}^n)$ over $\bm x$ and $X^n$:
\begin{equation}
    \aNESPI(\bm x) \approx \hataNESPI(\bm x) = \frac{1}{N}\sum_{t=1}^N I_{SP}(\tilde{\bm f_t}|\hat{g^*},\bm z^*,\mathcal{D}^n)
    \label{eq_MC_NESPI}
\end{equation}


\section{Optimising ESPI and NESPI}\label{sec:optimising}

Having presented the MC estimators of ESPI and NESPI, we are now ready to optimise them.

\paragraph{Differentiability.} The MC estimators of ESPI and NESPI ($\hataESPI(\bm x)$ in Eq.~{\ref{eq_MC_ESPI}} and $\hataNESPI(\bm x)$ in Eq.~{\ref{eq_MC_NESPI}}) are differentiable with respect to $\bm x$. 
We are able to automatically compute exact gradients of the MC estimators of ESPI and NESPI ($\nabla_{\bm{x}} \hataESPI(\bm{x})$ and $\nabla_{\bm{x}} \hataNESPI(\bm{x})$) by leveraging the auto-differentiation in modern computational frameworks. 
This facilitates efficient gradient-based optimisation of ESPI and NESPI.

\paragraph{SAA Convergence Results.}
The sample average approximation (SAA) approach~\citep{kleywegt2002sample}, which addresses stochastic optimisation problems by using the MC simulation, has gained increasing popularity and has become a standard technique in BO for optimising MC-based acquisition functions~\citep{balandat2020botorch}. 
By fixing the base samples, the SAA yields a deterministic acquisition function which enables using (quasi-) higher-order optimisation algorithms to obtain fast convergence rates for acquisition optimisation. 
We now give the theoretical convergence guarantees of ESPI under the SAA.

\begin{theorem}
\label{thm:SAA_ESPI}
    Suppose that $\mathcal X$ is compact and $\bm f$ has a multi-output GP prior whose mean and covariance functions are continuously differentiable. 
    Let $\aESPI^* := \max_{\bm x \in \mathcal X} \aESPI(\bm x)$ denote the maximum of ESPI, 
    $S^* := \argmax_{\bm x \in \mathcal X} \aESPI(\bm x)$ denote the set of maximisers of $\aESPI$, 
    $\hataESPI^N(\bm x)$ denote the deterministic function via the base samples $\{\epsilon^t\}_{t=1}^N \sim \mathcal N(0,I_{m})$.
    Suppose $\hat{\bm x}^*_N \in \argmax_{\bm x \in \mathcal X} \hataESPI^N(\bm x)$, then
    \begin{enumerate}[label={(\arabic*)}]
        \item $\hataESPI^N(\hat{\bm x}^*_N) \rightarrow \aESPI^*$ a.s.
        \item $d(\hat{\bm x}_{\!N}^*, S^*) \rightarrow 0$ a.s., where $d(\hat{\bm x}_{\!N}^*,S^*):=\inf_{\bm x\in S^*}\lVert\hat{\bm x}_{\!N}^*-\bm x\rVert$. 
    \end{enumerate}

\end{theorem}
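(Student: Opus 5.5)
We would follow the standard route of \citet{balandat2020botorch} and reduce both claims to a uniform strong law of large numbers on the compact set $\mathcal X$, in the spirit of \citet{kleywegt2002sample}. First we make the reparameterisation explicit. Under the posterior, $\bm f(\bm x)=\bm\mu(\bm x)+L(\bm x)\epsilon$, where $L(\bm x)=\mathrm{diag}(\sigma_1(\bm x),\dots,\sigma_m(\bm x))$ because the outputs are modelled independently. Hence
\[
\hataESPI^N(\bm x)=\frac1N\sum_{t=1}^N h(\bm x,\epsilon^t),\qquad h(\bm x,\epsilon):=\max\bigl(0,\,g^*-\|\bm\mu(\bm x)+L(\bm x)\epsilon-\bm z^*\|\bigr),
\]
and $\aESPI(\bm x)=\mathbb E_\epsilon[h(\bm x,\epsilon)]$. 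The GP prior has continuously differentiable mean and covariance, so the posterior mean $\bm\mu$ and posterior covariance are continuous on $\mathcal X$. The standard deviation $\sigma_i=\sqrt{\sigma_i^2}$ is also continuous, being the square root of a nonnegative continuous function. Consequently $h(\cdot,\epsilon)$ is continuous in $\bm x$ for every fixed $\epsilon$, as a composition of continuous maps with the norm and $\max(0,\cdot)$.

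Next we need an integrable envelope. By construction $0\le h(\bm x,\epsilon)\le g^*$, and $g^*$ is a finite constant determined by the data. So $h$ is dominated by a constant, which is trivially integrable. Together with the continuity in $\bm x$ and the compactness of $\mathcal X$, this puts us under the uniform law of large numbers (e.g.\ Lemma A1 of Rubinstein--Shapiro, or Theorem 7.48 in Shapiro, Dentcheva and Ruszczy\'nski). That result gives two things. First, $\aESPI$ is continuous, which by dominated convergence ensures that $\aESPI^*$ is attained and $S^*$ is nonempty and compact. Second,
\[
\sup_{\bm x\in\mathcal X}\bigl|\hataESPI^N(\bm x)-\aESPI(\bm x)\bigr|\to 0 \quad\text{a.s.}
\]
If we prefer a self-contained argument, we would prove the uniform convergence directly. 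For each $\bm x_0$ and $\delta>0$, the oscillation $\sup_{\|\bm x-\bm x_0\|<\delta}|h(\bm x,\epsilon)-h(\bm x_0,\epsilon)|$ tends to $0$ as $\delta\to0$, and its expectation does too by dominated convergence using the envelope $2g^*$. We then cover $\mathcal X$ by finitely many such balls and apply the pointwise SLLN at the centres and to the oscillation terms.

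Part (1) then follows from the elementary inequality
\[
\bigl|\max_{\mathcal X}\hataESPI^N-\max_{\mathcal X}\aESPI\bigr|\le\sup_{\mathcal X}\bigl|\hataESPI^N-\aESPI\bigr|.
\]
For part (2) we argue by contradiction on the almost-sure event where uniform convergence holds. Suppose there are $\varepsilon>0$ and a subsequence along which $d(\hat{\bm x}^*_N,S^*)\ge\varepsilon$. By compactness, a further subsequence converges to some $\bar{\bm x}$ with $d(\bar{\bm x},S^*)\ge\varepsilon$, so $\bar{\bm x}\notin S^*$. Uniform convergence combined with continuity of $\aESPI$ gives $\hataESPI^N(\hat{\bm x}^*_N)\to\aESPI(\bar{\bm x})$. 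Part (1) gives the same sequence the limit $\aESPI^*$, so $\aESPI(\bar{\bm x})=\aESPI^*$, which forces $\bar{\bm x}\in S^*$ and contradicts $\bar{\bm x}\notin S^*$.

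The main obstacle is the regularity step: making sure $\sigma_i(\bm x)$ stays continuous at points where the posterior variance vanishes, such as the observed inputs in the noiseless case. This is exactly where differentiability can fail even though continuity survives. Because $\sqrt{\cdot}$ is continuous on $[0,\infty)$, and the kink of $\max(0,\cdot)$ and of the norm at $0$ do not affect continuity, the argument only needs continuity and so avoids this difficulty. We would state this point carefully and would not rely on the differentiability of $\hataESPI^N$ anywhere in the proof.
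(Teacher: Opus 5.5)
Your proof is correct, but it follows a different route from the paper.

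\textbf{The paper's route.} The paper does not prove a uniform law of large numbers. It only verifies the hypothesis of Theorem 3 in the BoTorch paper: a pathwise bound $|\bar A(\bm x,\epsilon)-\bar A(\bm y,\epsilon)|\le \ell(\epsilon)\|\bm x-\bm y\|$ with integrable modulus $\ell(\epsilon)=C_\mu+C_L\|\epsilon\|$. It writes $\max(0,a)=\tfrac12(a+|a|)$ and uses the reverse triangle inequality to reduce both pieces to $\|\bar\mu(\bm x)-\bar\mu(\bm y)\|+\|(\bar L(\bm x)-\bar L(\bm y))\epsilon\|$. It then asserts that $\bar\mu$ and the root $\bar L$ have uniformly bounded gradients on compact $\mathcal X$, and inherits (1)--(2) from that theorem as a black box.

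\textbf{Your route.} You use only continuity of $h(\cdot,\epsilon)$ and the trivial envelope $0\le h\le g^*$. From these you derive (1)--(2) directly, via a uniform SLLN plus the standard argmax-consistency argument.

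\textbf{What your route buys.} You need weaker regularity: you never require $\bar L$ to be Lipschitz. That property does not follow from $C^1$ mean and covariance at inputs where the posterior variance vanishes, which is exactly the noiseless setting of ESPI. For example, the $C^1$ kernel $\exp(-\|\bm x-\bm x'\|^{3/2})$ gives $\sigma(\bm x)\sim\|\bm x-\bm x_0\|^{3/4}$ near an observed $\bm x_0$. There the paper's bounded-gradient step fails, while your continuity argument is unaffected.

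\textbf{What the paper's route buys.} It is brief, and from the same Lipschitz-with-integrable-modulus hypothesis it gets whatever else BoTorch's SAA result delivers. That includes, under extra conditions, an exponential rate for $P(d(\hat{\bm x}^*_N,S^*)>\delta)$, and the quasi-MC extension mentioned in the paper's footnote. Neither follows directly from your envelope-plus-continuity argument.

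\textbf{One small point.} You take $L(\bm x)$ diagonal because the outputs are modelled independently, but the statement allows a general multi-output prior. Your argument goes through unchanged for any root decomposition that is continuous in $\bm x$ (e.g.\ the symmetric square root of the posterior covariance), so you should phrase it that way.
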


The proof of the theorem is given in Appendix~\ref{appdx:sec:proof1}. 
This theorem indicates that one is able to optimise the MC estimator of the acquisition function ESPI to obtain a solution that converges almost surely to the optimal solution of the original function. 
For the noisy case NESPI, the theorem of the theoretical convergence guarantees under the SAA (Theorem~\ref{thm:SAA_NESPI}), together with its proof, is provided in Appendix~\ref{appdx:sec:proof2}.


\section{Experimental Design}\label{sec:exp_design}

\paragraph{Compared Methods.} 
To evaluate the proposed SPMO, we consider six MOBO methods.
They include one baseline method (Sobol~\citep{sobol1967distribution}), four well-established methods that aim to approximate the entire Pareto front, and one method that aims at the trade-off region of the Pareto front. 
The four methods consist of two scalarisation-based methods, ParEGO~\citep{knowles2006parego} (along with its noisy variant NParEGO~\citep{daulton2021parallel}) and TS-TCH~\citep{paria2020flexible}, and two Pareto-based methods, EHVI~\citep{daulton2020differentiable} (along with its noisy variant NEHVI~\citep{daulton2021parallel}), and JES~\citep{tu2022joint}. 
For the method that does not aim at the entire Pareto front, we consider C-EHVI \citep{gaudrie2018budgeted,gaudrie2020targeting}. C-EHVI prefers the central region of the Pareto front, and we would like to see if it is competitive against our method in identifying a well-balanced solution. 
For all EI-based methods, i.e., ParEGO, NParEGO, EHVI, NEHVI, C-EHVI and SPMO, we use the log version as suggested by~\cite{ament2023unexpected}. 
Note that we only consider NESPI in this work, as it is equivalent to ESPI under noiseless cases.\footnote{Although NEHVI is equivalent to EHVI under noiseless cases, the wall time of NEHVI is much higher than EHVI (see Table~\ref{tbl:wall_time}). Hence we consider EHVI and NEHVI under noiseless and noisy cases, respectively.}

\paragraph{Benchmarks and Real-World Problems.} For benchmark problems, we first choose two most widely scalable functions, DTLZ1 and DTLZ2 \citep{deb2005scalable}. However, their Pareto fronts are rather homogeneous, i.e., with a simplex shape. We then include their inverted versions, i.e., inverted DTLZ1 and inverted DTLZ2 \citep{deb2013evolutionary}. 
These problems do not include the one with convex Pareto fronts nor different objective scales. We thus add convex DTLZ2 and scaled DTLZ2 \citep{jain2013evolutionary}. 
We also give the results of other DTLZ problems which have different features (e.g., degenerate and disconnected Pareto fronts) \citep{deb2005scalable,cheng2017benchmark} in Appendix~\ref{appendix:sec:noiseless}; those problems are widely used in many-objective optimisation \citep{deb2013evolutionary,li2014shift,li2014evolutionary,li2015bi}. Each problem is considered with 3, 5 and 10 objectives, following the practice in~\cite{deb2013evolutionary,jain2013evolutionary,li2014evolutionary}, and tested under both noiseless and noisy cases. 
We also consider two well-studied expensive real-world problems~\citep{tanabe2020easy}, i.e., car side impact design~\citep{jain2013evolutionary} and car cab design~\citep{deb2013evolutionary}. 
The former is a four-objective problem without noise. The latter, which has nine objectives, involves four stochastic variables (out of the total seven variables) that introduce noise into the optimisation process, thus a natural optimisation problem with noise. 
For the other problems without noise, to make their noise cases, we use the additive zero-mean Gaussian noise with a standard deviation of 0.1, as suggested in~\cite{hernandez2016predictive,jiang2025trading}. 
The details of the problem formulations are given in Appendix~\ref{appendix:subsec:problems}. 

\paragraph{Performance Metrics.} 
The proposed method aims to find a single trade-off solution between objectives which has a high chance of being favoured by the decision-maker, and therefore requires the use of appropriate metrics to fairly evaluate our method~\citep{li2019quality,li2022evaluate}. 
As such, we first consider two single-point-based metrics, i.e., the distance-based metric used in the proposed method (reported as log-distance for better visualisation) and the HV-based metric~\citep{Zitzler1999}, which measures the HV contribution of a solution. 
We expect that our method performs well on the distance-based metric as we directly optimise it. Notably, we are not certain whether our method performs best on the single-point HV since we do not directly optimise it. On the other hand, since most of the compared methods aim to achieve a good approximation of the entire Pareto front, we also consider the HV of the whole (nondominated) solution set obtained. 
It is expected that our method performs poorly, compared to other methods since we only optimise one point, rather than maximising the HV of the whole set (see Figure~\ref{fig:motivation}). 
For the reference point of the two HV metrics, we followed the practice in~\cite{ishibuchi2018specify,balandat2020botorch,chugh2020scalarizing,daulton2020differentiable} (see detailed settings in Appendix~\ref{appdx:sec:method}). 



\paragraph{Budget and Statistical Validation.} For all the methods, we allow a maximum of 200 evaluations, following the practice in~\cite{daulton2020differentiable,daulton2021parallel,konakovic2020diversity}. 
To enable statistical comparisons, each optimisation was repeated 30 times. 
We use the Wilcoxon rank-sum test~\citep{wilcoxon1992individual} at a significance level of $\alpha=0.05$ and Holm-Bonferroni correction~\citep{holm1979simple} to see if our method differs significantly from each peer method.



\section{Experimental Results}\label{sec:exp_results}

We first report the results under noiseless cases, then under noisy cases and under batch settings.
Next, we perform the sensitivity analysis of the utopian point used.
Then, we compare the proposed framework working with different single-point metrics, e.g., the distance-based, weighted sum-based and Tchebycheff-based. 
Lastly, we give the acquisition optimisation wall time for all the algorithms.

    

    

\paragraph{Noiseless Cases.} We begin our evaluation by considering the distance metric, for which we expect good performance obtained by our SPMO. Table~\ref{tbl:Dist_M5} shows the results of SPMO and the other methods on the seven benchmark (with five objectives) and real-world problems. Unsurprisingly, as can be seen from the table, our method significantly outperforms the other methods on all the problems. 
In addition, to understand the anytime performance, Figure~\ref{fig:convergence_distance_M5} presents the trajectories of the distance-based metric obtained by the six methods. 
As seen, SPMO demonstrates a clear advantage over the other methods, obtaining a better convergence rate from the very beginning.

We now compare the methods by the HV metric of a single solution. Table~\ref{tbl:HV_SP_M5} shows the results of the best solution (in terms of its HV value) obtained by SPMO and the five peer methods. 
As can be seen, SPMO is still very competitive, performing best on all the problems except DTLZ2 where EHVI obtains the best HV. 
To get a sense of what such a best-HV solution looks like, we use a spider chart to plot the solution on the five-objective inverted DTLZ1 problem in Figure~\ref{fig:spider_inverted_DTLZ1}.   
As seen, the solution of SPMO has the largest area, and it actually Pareto dominates the solutions of the other methods (i.e., better or at least equal on all five objectives).

\begin{table*}[!ht]
    \centering
    \caption{Results of the distance-based metric (log distance) obtained by the SPMO and the six peer methods on the benchmark problems with 5 objectives and the car side impact design problem on 30 runs. The method with the best mean is highlighted in bold. The symbols ``$+$'', ``$\sim$'' and ``$-$'' indicate that the method is statistically worse than, equivalent to and better than our SPMO, respectively.}
    \resizebox{\textwidth}{!}{%
    \begin{tabular}{llllllllc}
    \toprule
    \bfseries Method
    & \multicolumn{1}{c}{\bfseries DTLZ1} 
    & \multicolumn{1}{c}{\bfseries DTLZ2} 
    & \multicolumn{1}{c}{\bfseries Inverted DTLZ1} 
    & \multicolumn{1}{c}{\bfseries Inverted DTLZ2} 
    & \multicolumn{1}{c}{\bfseries Convex DTLZ2} 
    & \multicolumn{1}{c}{\bfseries Scaled DTLZ2} 
    & \multicolumn{1}{c}{\bfseries Car side impact} 
    & {\bfseries Sum up} \\ 
    & \multicolumn{1}{c}{Mean (Std)}
    & \multicolumn{1}{c}{Mean (Std)}
    & \multicolumn{1}{c}{Mean (Std)}
    & \multicolumn{1}{c}{Mean (Std)}
    & \multicolumn{1}{c}{Mean (Std)}
    & \multicolumn{1}{c}{Mean (Std)}
    & \multicolumn{1}{c}{Mean (Std)}
    & \multicolumn{1}{c}{$+$/$\sim$/$-$}  \\ \midrule

    \bfseries Sobol & 3.7e+0 (3.0e--1)$^+$ & 2.4e--1 (4.6e--2)$^+$ & 4.8e+0 (3.1e--1)$^+$ & 6.0e--1 (5.7e--2)$^+$ & -3.1e--1 (2.5e--1)$^+$ & 2.3e--1 (5.1e--2)$^+$ & -1.9e--1 (2.5e--2)$^+$ & \bfseries 7/ 0/ 0\\
    \bfseries ParEGO & 3.4e+0 (1.9e--1)$^+$ & 6.7e--2 (8.1e--2)$^+$ & 3.2e+0 (5.4e--1)$^+$ & 2.5e--1 (1.3e--2)$^+$ & -1.7e+0 (2.0e--1)$^+$ & 1.3e--1 (8.9e--2)$^+$ & -3.3e--1 (6.0e--3)$^+$ & \bfseries 7/ 0/ 0\\
    \bfseries TS-TCH & 3.9e+0 (2.2e--1)$^+$ & 2.0e--1 (4.1e--2)$^+$ & 4.8e+0 (3.2e--1)$^+$ & 4.6e--1 (1.2e--2)$^+$ & -7.1e--1 (2.1e--1)$^+$ & 2.6e--1 (3.8e--2)$^+$ & -2.7e--1 (1.5e--2)$^+$ & \bfseries 7/ 0/ 0\\
    \bfseries EHVI & 3.5e+0 (9.6e--2)$^+$ & 9.3e--3 (3.5e--3)$^+$ & 4.0e+0 (4.4e--1)$^+$ & 2.3e--1 (5.5e--3)$^+$ & -1.4e+0 (2.1e--1)$^+$ & 3.1e--1 (6.0e--2)$^+$ & -3.3e--1 (1.0e--2)$^+$ & \bfseries 7/ 0/ 0\\
    \bfseries C-EHVI & 3.6e+0 (1.4e--1)$^+$ & 3.3e--3 (3.8e--3)$^+$ & 3.9e+0 (4.6e--1)$^+$ & 2.5e--1 (1.8e--2)$^+$ & -3.5e--1 (2.9e--1)$^+$ & 7.6e--2 (8.0e--2)$^+$ & -3.3e--1 (1.0e--2)$^+$ & \bfseries 7/ 0/ 0\\
    \bfseries JES & 3.4e+0 (1.3e--1)$^+$ & 1.1e--1 (8.0e--2)$^+$ & 4.5e+0 (1.7e--1)$^+$ & 2.6e--1 (2.0e--2)$^+$ & -1.0e+0 (4.5e--1)$^+$ & 8.8e--2 (9.3e--2)$^+$ & -3.3e--1 (8.3e--3)$^+$ & \bfseries 7/ 0/ 0\\
    \bfseries SPMO & \best {3.1e+0} (\best {3.0e--1})  & \best {9.0e--4} (\best {8.3e--4})  & \best {2.9e+0} (\best {4.9e--1})  & \best {2.1e--1} (\best {5.0e--5})  & \best {-2.1e+0} (\best {7.7e--3})  & \best {1.7e--4} (\best {1.2e--4})  & \best {-3.4e--1} (\best {2.3e--6})  & \\
    \bottomrule
    \end{tabular}
    }
    \label{tbl:Dist_M5}
\end{table*}

\begin{figure*}[!ht]
    \centering
    \begin{subfigure}[b]{\textwidth}
    \begin{minipage}{\textwidth}
        \centering
        \includegraphics[width=\textwidth]{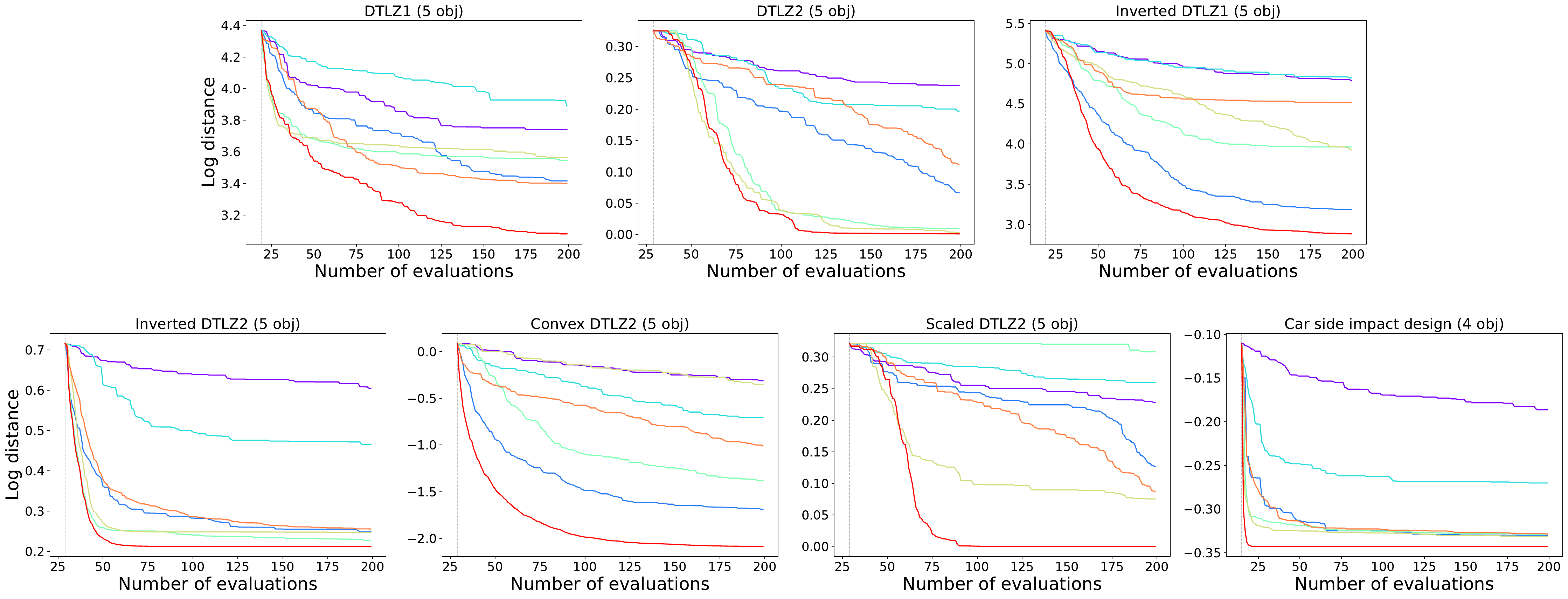}
    \end{minipage}
    \end{subfigure}

    \begin{minipage}{\textwidth}
        \centering
        \includegraphics[width=0.6\textwidth]{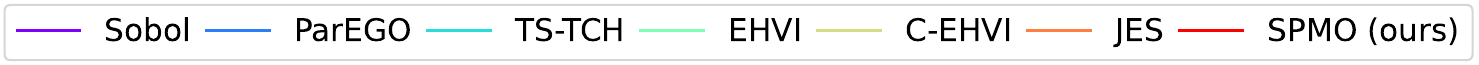}
    \end{minipage}

    \caption{Trajectories of the distance-based metric (log distance) obtained by the seven methods on the benchmark problems with 5 objectives and the car side impact design problem. Each coloured line represents the mean metric value on 30 independent runs (after the initial Sobol samples, represented by the dashed grey line). 
    }
    \label{fig:convergence_distance_M5}
\end{figure*}

Lastly, we consider the HV results of all the solutions obtained by the compared methods (Table~\ref{tbl:HV_M5}).
Interestingly, although SPMO does not aim to approximate the entire Pareto front, it still gets fairly good results, outperforming the other methods on at least 3 out of the 7 problems. One explanation for this is that within very tight budget, searching for improving convergence of solutions may play a bigger part than searching for improving diversity, thus contributing more to the HV value.

For simplicity, we here only show results of the benchmark problems with five objectives. 
Results on 3- and 10-objective problems can be found in Appendix~\ref{appendix:sec:noiseless}. 
A general pattern is that as the number of objectives increases, the advantages of SPMO become more pronounced. 
On the 3-objective problems, the differences between SPMO and the peer methods in terms of the two single-point metrics are relatively small, whereas on the 10-objective problems, the gaps in both metrics become substantially larger (see Figures~\ref{fig:violin_dist_M3} and~\ref{fig:violin_single_HV_M3} in the Appendix).
Regarding the HV of all evaluated solutions, SPMO statistically outperforms the peer methods on at least 2 and 5 out of the 6 problems on the 3-objective and 10-objective cases, respectively. 

\paragraph{Noisy Cases.}

We compare the proposed SPMO with the peer methods on the seven noisy benchmark and real-world problems. 
The results (the distance-based metric, two HV metrics, and convergence trajectories) are given in Appendix~\ref{appendix:sec:noisy}.
Like in the noiseless setting, SPMO significantly outperforms the peer methods on all the problems with respect to the single-point metrics (distance and HV). 
Regarding the HV of all the (nondominated) solutions obtained, SPMO achieves the best performance on the majority of the problems.
Figure~\ref{fig:RE91} shows the spider chart of the best solution (with respect to its HV) obtained by each algorithm in a typical run on the 9-objective car cab design problem. The violin plot of their HV values in 30 independent runs is given in the left panel for reference. As can be seen, the solution of SPMO is the best or close to the best on most of the objectives (except the first objective), thus having a clearly larger area.   


\begin{table*}[!ht]
    \centering
    \caption{The HV of the best solution (in terms of its HV value) obtained by SPMO and the peer methods on the benchmark problems with 5 objectives and the car side impact design problem on 30 runs. 
    The method with the best mean is highlighted in bold. The symbols ``$+$'', ``$\sim$'' and ``$-$'' indicate that the method is statistically worse than, equivalent to and better than our SPMO, respectively.
    } 
    \resizebox{\textwidth}{!}{%
    \begin{tabular}{llllllllc}
    \toprule
    \bfseries Method
    & \multicolumn{1}{c}{\bfseries DTLZ1} 
    & \multicolumn{1}{c}{\bfseries DTLZ2} 
    & \multicolumn{1}{c}{\bfseries Inverted DTLZ1} 
    & \multicolumn{1}{c}{\bfseries Inverted DTLZ2} 
    & \multicolumn{1}{c}{\bfseries Convex DTLZ2} 
    & \multicolumn{1}{c}{\bfseries Scaled DTLZ2} 
    & \multicolumn{1}{c}{\bfseries Car side impact} 
    & {\bfseries Sum up} \\ 
    & \multicolumn{1}{c}{Mean (Std)}
    & \multicolumn{1}{c}{Mean (Std)}
    & \multicolumn{1}{c}{Mean (Std)}
    & \multicolumn{1}{c}{Mean (Std)}
    & \multicolumn{1}{c}{Mean (Std)}
    & \multicolumn{1}{c}{Mean (Std)}
    & \multicolumn{1}{c}{Mean (Std)}
    & \multicolumn{1}{c}{$+$/$\sim$/$-$}  \\ \midrule

    \bfseries Sobol & 8.7e+12 (3.7e+11)$^+$ & 3.5e--2 (1.5e--2)$^+$ & 5.1e+12 (1.1e+12)$^+$ & 8.7e--4 (1.5e--3)$^+$ & 3.8e--1 (1.7e--1)$^+$ & 3.8e--2 (2.1e--2)$^+$ & 2.6e--1 (1.8e--2)$^+$ & \bfseries 7/ 0/ 0\\
    \bfseries ParEGO & 9.1e+12 (1.8e+11)$^+$ & 1.3e--1 (5.6e--2)$^+$ & 8.8e+12 (7.0e+11)$^+$ & 4.0e--2 (2.9e--3)$^+$ & 1.1e+0 (6.2e--2)$^+$ & 8.7e--2 (5.3e--2)$^+$ & 3.6e--1 (2.6e--3)$^+$ & \bfseries 7/ 0/ 0\\
    \bfseries TS-TCH & 8.4e+12 (3.8e+11)$^+$ & 3.2e--2 (1.3e--2)$^+$ & 4.9e+12 (1.2e+12)$^+$ & 7.1e--3 (1.1e--3)$^+$ & 6.5e--1 (1.4e--1)$^+$ & 2.8e--2 (1.5e--2)$^+$ & 3.1e--1 (9.3e--3)$^+$ & \bfseries 7/ 0/ 0\\
    \bfseries EHVI & 9.0e+12 (9.5e+10)$^+$ & \best {1.9e--1} (\best {7.2e--3})$^\sim$ & 7.5e+12 (9.7e+11)$^+$ & 4.5e--2 (1.4e--3)$^+$ & 1.0e+0 (9.7e--2)$^+$ & 1.5e--2 (1.4e--2)$^+$ & 3.6e--1 (6.8e--3)$^+$ & \bfseries 6/ 1/ 0\\
    \bfseries C-EHVI & 9.0e+12 (1.0e+11)$^+$ & 1.6e--1 (1.9e--2)$^+$ & 7.7e+12 (7.8e+11)$^+$ & 4.1e--2 (4.1e--3)$^+$ & 4.3e--1 (2.1e--1)$^+$ & 1.2e--1 (5.1e--2)$^+$ & 3.6e--1 (4.4e--3)$^\sim$ & \bfseries 6/ 1/ 0\\
    \bfseries JES & 9.0e+12 (1.2e+11)$^+$ & 1.0e--1 (3.9e--2)$^+$ & 6.2e+12 (5.0e+11)$^+$ & 3.9e--2 (4.5e--3)$^+$ & 8.5e--1 (2.4e--1)$^+$ & 1.1e--1 (5.5e--2)$^+$ & 3.6e--1 (5.1e--3)$^+$ & \bfseries 7/ 0/ 0\\
    \bfseries SPMO & \best {9.3e+12} (\best {2.8e+11})  & 1.9e--1 (1.4e--2)  & \best {9.2e+12} (\best {4.8e+11})  & \best {4.9e--2} (\best {1.2e--5})  & \best {1.3e+0} (\best {5.0e--3})  & \best {1.6e--1} (\best {1.6e--2})  & \best {3.6e--1} (\best {3.2e--3})  & \\
    \bottomrule
    \end{tabular}
    }
    \label{tbl:HV_SP_M5}
\end{table*}

\begin{table*}[!ht]
    \centering
    \caption{The HV of all the solutions obtained by the seven methods on the seven benchmark (with five objectives) and real-world problems on 30 independent runs. 
    The method with the best mean HV is highlighted in bold. The symbols ``$+$'', ``$\sim$'', and ``$-$'' indicate that a method is statistically worse than, equivalent to, and better than SPMO, respectively.}
    \resizebox{\textwidth}{!}{%
    \begin{tabular}{llllllllc}
    \toprule
    \bfseries Method
    & \multicolumn{1}{c}{\bfseries DTLZ1} 
    & \multicolumn{1}{c}{\bfseries DTLZ2} 
    & \multicolumn{1}{c}{\bfseries Inverted DTLZ1} 
    & \multicolumn{1}{c}{\bfseries Inverted DTLZ2} 
    & \multicolumn{1}{c}{\bfseries Convex DTLZ2} 
    & \multicolumn{1}{c}{\bfseries Scaled DTLZ2} 
    & \multicolumn{1}{c}{\bfseries Car side impact} 
    & {\bfseries Sum up} \\ 
    & \multicolumn{1}{c }{Mean (Std)}
    & \multicolumn{1}{c }{Mean (Std)}
    & \multicolumn{1}{c }{Mean (Std)}
    & \multicolumn{1}{c }{Mean (Std)}
    & \multicolumn{1}{c }{Mean (Std)}
    & \multicolumn{1}{c }{Mean (Std)}
    & \multicolumn{1}{c }{Mean (Std)}
    & \multicolumn{1}{c}{$+$/$\sim$/$-$}  \\ \midrule

    \bfseries Sobol & 1.0e+13 (3.1e+10)$^+$ & 7.9e--2 (2.5e--2)$^+$ & 5.6e+12 (8.7e+11)$^+$ & 9.3e--4 (1.5e--3)$^+$ & 5.8e--1 (2.1e--1)$^+$ & 8.6e--2 (3.1e--2)$^+$ & 5.2e--1 (1.0e--2)$^+$ & \bfseries 7/ 0/ 0\\
    \bfseries ParEGO & 1.0e+13 (3.8e+10)$^-$ & 3.6e--1 (2.1e--1)$^+$ & 9.1e+12 (6.2e+11)$^+$ & 1.4e--1 (1.0e--2)$^-$ & \best {1.5e+0} (\best {2.8e--2})$^-$ & 2.2e--1 (1.6e--1)$^+$ & 7.4e--1 (1.4e--2)$^-$ & \bfseries 3/ 0/ 4\\
    \bfseries TS-TCH & 1.0e+13 (2.6e+10)$^+$ & 5.9e--2 (2.7e--2)$^+$ & 5.4e+12 (1.0e+12)$^+$ & 2.0e--2 (3.6e--3)$^+$ & 1.0e+0 (1.5e--1)$^+$ & 5.1e--2 (2.7e--2)$^+$ & 6.5e--1 (1.1e--2)$^-$ & \bfseries 6/ 0/ 1\\
    \bfseries EHVI & \best {1.0e+13} (\best {3.3e+8})$^-$ & \best {8.3e--1} (\best {6.2e--2})$^-$ & 8.2e+12 (6.6e+11)$^+$ & \best {2.1e--1} (\best {1.9e--3})$^-$ & 1.5e+0 (6.9e--2)$^+$ & 2.0e--2 (1.7e--2)$^+$ & 7.4e--1 (9.6e--3)$^-$ & \bfseries 3/ 0/ 4\\
    
    \bfseries C-EHVI & 1.0e+13 (1.6e+11)$^\sim$ & 3.6e--1 (8.5e--2)$^+$ & 8.0e+12 (6.9e+11)$^+$ & {7.2e--2} ({9.2e--3})$^-$ & 5.8e--1 (2.7e--1)$^+$ & 2.1e--1 (1.1e--1)$^+$ & {5.9e--1} ( {3.0e--2})$^-$ & \bfseries 4/ 1/ 2\\
    
    \bfseries JES & 1.0e+13 (9.0e+9)$^-$ & 2.8e--1 (1.5e--1)$^+$ & 8.6e+12 (3.5e+11)$^+$ & 1.4e--1 (9.4e--3)$^-$ & 1.3e+0 (2.6e--1)$^+$ & \best {3.1e--1} (\best {2.0e--1})$^\sim$ & \best {7.4e--1} (\best {1.3e--2})$^-$ & \bfseries 3/ 1/ 3\\
    \bfseries SPMO & 1.0e+13 (8.1e+10)  & 5.0e--1 (8.2e--2)  & \best {9.6e+12} (\best {2.9e+11})  & 6.4e--2 (2.8e--3)  & 1.5e+0 (3.2e--2)  & 3.0e--1 (1.1e--1)  & 5.3e--1 (3.1e--2)  & \\
    \bottomrule
    \end{tabular}
    }
    \label{tbl:HV_M5}
\end{table*}


\begin{figure}[!ht]
    \centering
    \begin{minipage}{0.4\linewidth}{
    \includegraphics[width=0.9\linewidth]{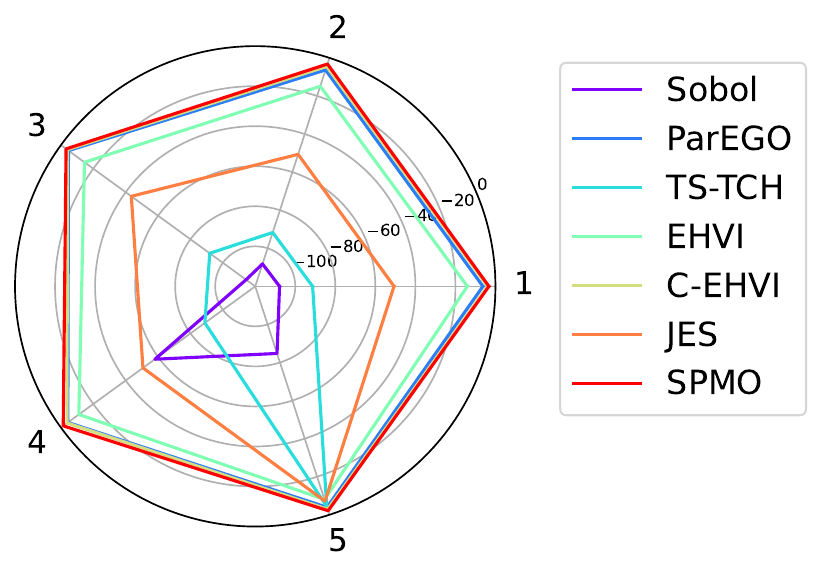}	
    }\end{minipage}
    \begin{minipage}{0.58\linewidth}{
    \caption{Spider chart of the best solution (in terms of its HV) obtained by the seven methods on the inverted DTLZ1 problem with 5 objectives in a typical run. 
    Each axis in the spider chart represents one objective. Here, the objective values are multiplied by $-1$ in this minimisation problem, such that a solution with a larger area indicates better quality. 
    }
    \label{fig:spider_inverted_DTLZ1}
    }\end{minipage}
\end{figure}

\paragraph{Batch Setting.} 
Previously, we considered the case in the sequential setting, where solutions are evaluated sequentially. Now we want to see if the proposed method works in the batch setting. Here, the batch size $q$ is set to $5$, a commonly used value~\citep{lin2022pareto}.
As can be seen in 
Tables \ref{tbl:Dist_M5_batch}--\ref{tbl:HV_M5_batch} and Figures~\ref{fig:violin_dist_M5_batch}--\ref{fig:violin_all_HV_M5_batch} (Appendix~\ref{appendix:sec:parallel}), similar to the results in the sequential setting, SPMO generally performs best. On the single-point metrics, it achieves the smallest distance on all the problems and the highest HV on 5 out of the 6 problems (except on DTLZ2).
As for the HV of all evaluated solutions, SPMO obtains the best HV on two problems and takes the second or third places on the remaining ones. This indicates that prioritising convergence is also very useful for many-objective problems in batch settings.


    

    

\paragraph{Sensitivity Analysis.}  
A parameter needed in the proposed method is the utopian point. In our experiment, we set it to be the vector consisting of the best value on each objective (i.e., the problem's ideal point). However, in real life, the ideal point is usually unknown before the optimisation. Hence, we would like to investigate how much different utopian points affect the performance. In this context, we consider three different settings. The first one is slightly better than the ideal point, i.e., with a difference of $0.01$, the second is fairly better than the ideal point (i.e. $0.1$),  and the last one is significantly better than the ideal point (i.e. $1.0$).
The results are given in Appendix~\ref{appendix:sec:sensitivity}. As can be seen, interestingly, SPMO with the three lower utopian points with different levels performs better than or at least equivalently to SPMO with our current setting. This indicates that 1) using the ideal point in the proposed method may not be the best choice (though it performs better than the other MOBO methods), and 2) SPMO's performance is robust to the choice of the utopian point and a liberal estimate is sufficient to achieve good results, e.g., considering zero cost in real-world cases.


\begin{figure*}[!ht]
    \centering
    \begin{minipage}[b]{0.4\textwidth}
        \centering
        \includegraphics[width=0.7\textwidth]{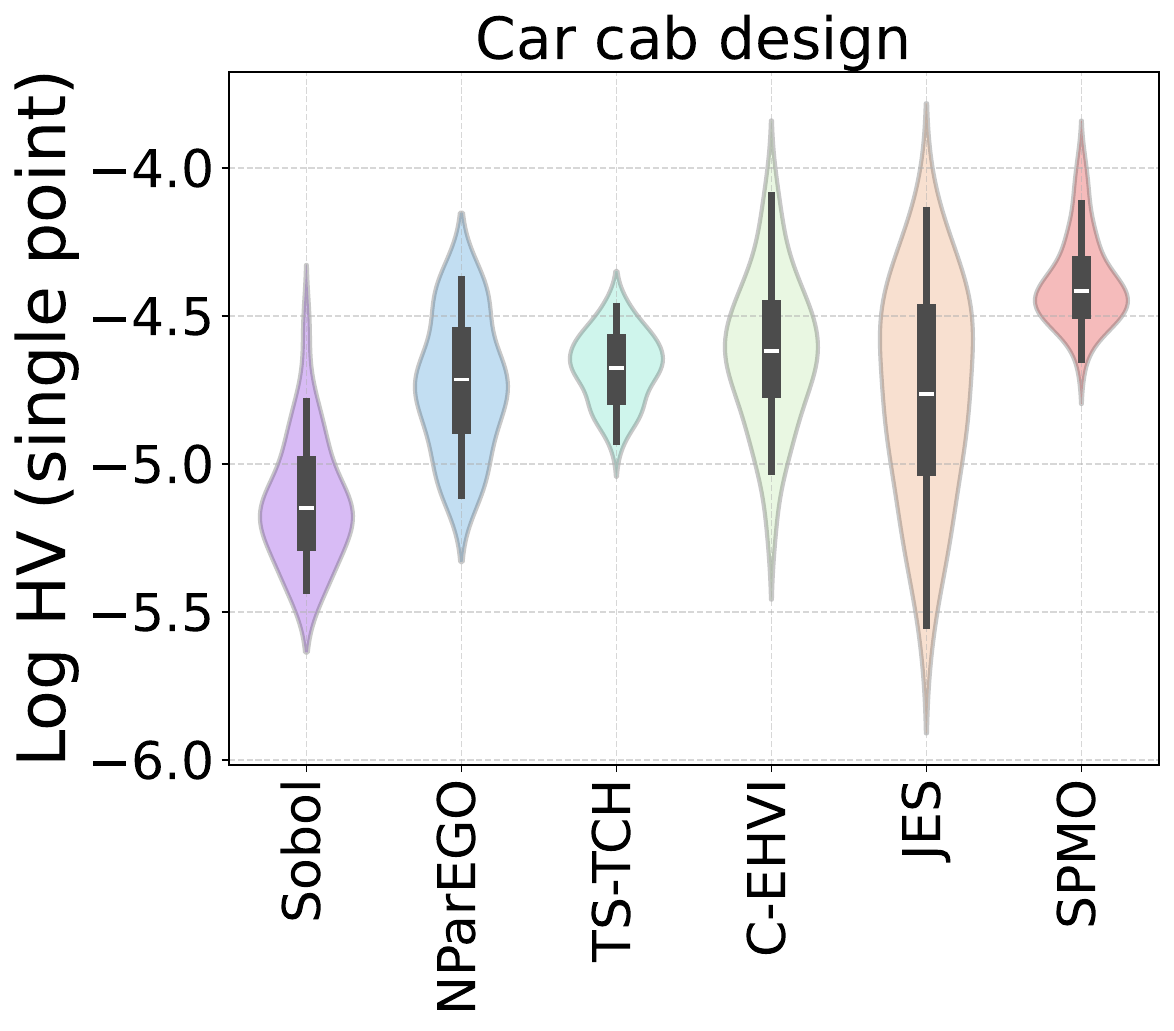}
    \end{minipage}
    \hfill
    \begin{minipage}[b]{0.55\textwidth}
        \centering
        \includegraphics[width=0.7\textwidth]{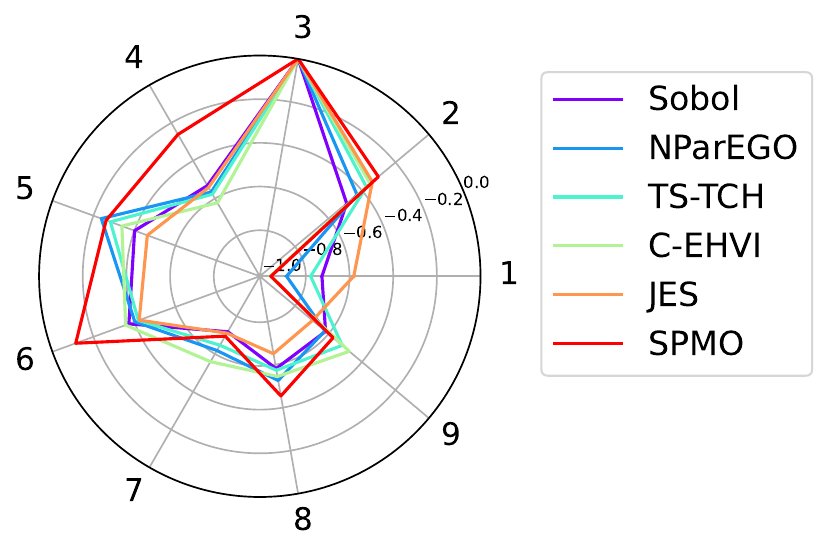}
    \end{minipage}
    \caption{\textit{Left:} Violin plots of the HV values of the best solution (with respect to its HV) obtained by all the methods on the car cab design problem in 30 independent runs. \textit{Right:} The objective values (normalised and multiplied by $-1$) of the best solution (with the highest HV) obtained by each method on the cab design problem in a typical run. 
    }
    \label{fig:RE91}
\end{figure*}

\paragraph{Comparison of Single-Point Metrics within SPMO.} 
In the proposed SPMO framework, we employ a distance metric (i.e., the distance of a solution to the utopian point), denoted as SPMO$_{dist}$. 
However, different metrics can be adopted provided that they can reflect the quality of a solution in achieving a good trade-off between objectives. 
We now consider two other well-known metrics, weighted sum and Tchebycheff scalarisation (with the same weights $(\frac{1}{m},\dots,\frac{1}{m})$), denoted by SPMO$_{ws}$ and SPMO$_{Tch}$, respectively. 
We compare these three versions of SPMO. The results (given in Appendix~\ref{appendix:sec:metric}) show that SPMO$_{dist}$ performs in general better than SPMO$_{ws}$ and SPMO$_{Tch}$. It obtains the best result on at least 4 out of the 6 problems on the HV of the best solution.
As for the HV of all the solutions, SPMO$_{dist}$ performs best on DTLZ1 and its variants, but worse than SPMO$_{Tch}$ on DTLZ2 and its variants (except convex DTLZ2). A possible explanation is that SPMO$_{Tch}$ has slower convergence and can be better in exploring different solutions, thus better on relatively easy-to-converge problems.

\paragraph{Acquisition Optimisation Wall Time.}\label{sec:walltime}


Lastly, we present the wall time for optimising the acquisition function (i.e., determining a solution to be evaluated). 
The results (Appendix~\ref{appendix:sec:time}) show that our method is among the fastest algorithms. 
When the number of objectives is 3 or 5, the time of all the methods is acceptable with a maximum of 98 seconds. 
As the number of objectives increases to 10, hypervolume-based methods (i.e., EHVI and NEHVI) become very expensive (taking about half an hour and more than 3 hours, respectively).\footnote{Wall time is measured based on the initial samples. Due to the exponentially increasing computational complexity with objectives, EHVI and NEHVI are fully evaluated only for problems with objectives $m\leq 5$.} 
The proposed SPMO method shows high computational efficiency, achieving the lowest time requirement in four out of the six instances. 


\section{Conclusion}\label{sec:con}

This work presented a multi-objective BO framework that aims to find a single trade-off solution of the highest possible quality with respect to multiple objectives, rather than seeking to explore their entire Pareto front. We theoretically proved the convergence guarantees under the SAA and empirically verified the proposed framework through extensive experiments, including on noiseless/noisy and sequential/batch cases, by sensitivity analysis, with different metrics for the acquisition function, and on a range of benchmark and real-world problems. 
A noticeable limitation of the proposed framework is that it focuses on finding a single trade-off point, thus failing to capture the information about the entire Pareto front; it thus may be less useful for certain applications where such information is valuable (e.g., the Pareto front's ranges and nadir points). A detailed discussion of its applicability is provided in Appendix~\ref{appendix:sec:discussion}. However, notably, the proposed framework showed its competitiveness against existing state-of-the-arts with respect to even the quality of the whole solution set (through HV of all solutions, see Table~\ref{tbl:HV_M5}). 
Future work includes studying and enhancing the scalability of the proposed methods (i.e., in higher-dimensional search space) and extending their applicability to other scenarios, e.g., multi-fidelity optimisation (see Appendix~\ref{appendix:sec:extensions} for more details).

\bibliographystyle{unsrtnat}
\bibliography{ref}  

@inproceedings{ishibuchi2008evolutionary,
  title={Evolutionary many-objective optimization: A short review},
  author={Ishibuchi, Hisao and Tsukamoto, Noritaka and Nojima, Yusuke},
  booktitle={2008 IEEE congress on evolutionary computation (IEEE world congress on computational intelligence)},
  pages={2419--2426},
  year={2008},
  organization={IEEE}
}

@article{li2022evaluate,
  title={How to evaluate solutions in pareto-based search-based software engineering: A critical review and methodological guidance},
  author={Li, Miqing and Chen, Tao and Yao, Xin},
  journal={IEEE Transactions on Software Engineering},
  volume={48},
  number={5},
  pages={1771--1799},
  year={2022},
  publisher={IEEE}
}

@article{li2019quality,
  title={Quality evaluation of solution sets in multiobjective optimisation: A survey},
  author={Li, Miqing and Yao, Xin},
  journal={ACM Computing Surveys (CSUR)},
  volume={52},
  number={2},
  pages={1--38},
  year={2019},
  publisher={ACM New York, NY, USA}
}

@article{li2014shift,
  title={Shift-based density estimation for Pareto-based algorithms in many-objective optimization},
  author={Li, Miqing and Yang, Shengxiang and Liu, Xiaohui},
  journal={IEEE Transactions on Evolutionary Computation},
  volume={18},
  number={3},
  pages={348--365},
  year={2014},
  publisher={IEEE}
}

@article{matrosov2015many,
  title={Many-objective optimization and visual analytics reveal key trade-offs for London’s water supply},
  author={Matrosov, Evgenii S and Huskova, Ivana and Kasprzyk, Joseph R and Harou, Julien J and Lambert, Chris and Reed, Patrick M},
  journal={Journal of Hydrology},
  volume={531},
  pages={1040--1053},
  year={2015},
  publisher={Elsevier}
}

@article{li2015bi,
  title={Bi-goal evolution for many-objective optimization problems},
  author={Li, Miqing and Yang, Shengxiang and Liu, Xiaohui},
  journal={Artificial Intelligence},
  volume={228},
  pages={45--65},
  year={2015},
  publisher={Elsevier}
}

@article{hierons2020many,
  title={Many-objective test suite generation for software product lines},
  author={Hierons, Robert M and Li, Miqing and Liu, Xiaohui and Parejo, Jose Antonio and Segura, Sergio and Yao, Xin},
  journal={ACM Transactions on Software Engineering and Methodology (TOSEM)},
  volume={29},
  number={1},
  pages={1--46},
  year={2020},
  publisher={ACM New York, NY, USA}
}

@article{park2023botied,
  title={BOtied: Multi-objective Bayesian optimization with tied multivariate ranks},
  author={Park, Ji Won and Tagasovska, Nata{\v{s}}a and Maser, Michael and Ra, Stephen and Cho, Kyunghyun},
  journal={arXiv preprint arXiv:2306.00344},
  year={2023}
}

@article{bhatija2025multi,
  title={Multi-Objective Causal Bayesian Optimization},
  author={Bhatija, Shriya and Zuercher, Paul-David and Thumm, Jakob and Bohn{\'e}, Thomas},
  journal={arXiv preprint arXiv:2502.14755},
  year={2025}
}

@article{ngo2025mobo,
  title={MOBO-OSD: Batch Multi-Objective {B}ayesian Optimization via Orthogonal Search Directions},
  author={Ngo, Lam and Ha, Huong and Chan, Jeffrey and Zhang, Hongyu},
  journal={arXiv preprint arXiv:2510.20872},
  year={2025}
}

@article{li2024constrained,
  title={Constrained Multi-objective {B}ayesian Optimization through Optimistic Constraints Estimation},
  author={Li, Diantong and Zhang, Fengxue and Liu, Chong and Chen, Yuxin},
  journal={arXiv preprint arXiv:2411.03641},
  year={2024}
}

@article{hoang2025high,
  title={High Dimensional {B}ayesian Optimization using Lasso Variable Selection},
  author={Hoang, Vu Viet and Tran, Hung The and Gupta, Sunil and Nguyen, Vu},
  journal={arXiv preprint arXiv:2504.01743},
  year={2025}
}

@InProceedings{renganathan2025qpots,
  title = 	 {$q\texttt{POTS}$: Efficient Batch Multiobjective {B}ayesian Optimization via Pareto Optimal {T}hompson Sampling},
  author =       {Renganathan, Ashwin and Carlson, Kade},
  booktitle = 	 {Proceedings of The 28th International Conference on Artificial Intelligence and Statistics},
  pages = 	 {4051--4059},
  year = 	 {2025},
  volume = 	 {258},
  series = 	 {Proceedings of Machine Learning Research},
  month = 	 {03--05 May},
  publisher =    {PMLR},
}

@article{das2025new,
  title={New methods to compute the generalized chi-square distribution},
  author={Das, Abhranil},
  journal={Journal of Statistical Computation and Simulation},
  pages={1--35},
  year={2025},
  publisher={Taylor \& Francis}
}

@article{imhof1961computing,
  title={Computing the distribution of quadratic forms in normal variables},
  author={Imhof, Jean-Pierre},
  journal={Biometrika},
  volume={48},
  number={3/4},
  pages={419--426},
  year={1961},
  publisher={JSTOR}
}

@article{ruben1962probability,
  title={Probability content of regions under spherical normal distributions, IV: The distribution of homogeneous and non-homogeneous quadratic functions of normal variables},
  author={Ruben, Harold},
  journal={The Annals of Mathematical Statistics},
  volume={33},
  number={2},
  pages={542--570},
  year={1962},
  publisher={JSTOR}
}

@article{yang2019efficient,
  title={Efficient computation of expected hypervolume improvement using box decomposition algorithms},
  author={Yang, Kaifeng and Emmerich, Michael and Deutz, Andr{\'e} and B{\"a}ck, Thomas},
  journal={Journal of Global Optimization},
  volume={75},
  number={1},
  pages={3--34},
  year={2019},
  publisher={Springer}
}

@inproceedings{zuluaga2013active,
  title={Active learning for multi-objective optimization},
  author={Zuluaga, Marcela and Sergent, Guillaume and Krause, Andreas and P{\"u}schel, Markus},
  booktitle={International conference on machine learning},
  pages={462--470},
  year={2013},
  organization={PMLR}
}

@phdthesis{parr2013improvement,
  title={Improvement criteria for constraint handling and multiobjective optimization},
  author={Parr, James},
  year={2013},
  school={University of Southampton}
}

@article{namura2017expected,
  title={Expected improvement of penalty-based boundary intersection for expensive multiobjective optimization},
  author={Namura, Nobuo and Shimoyama, Koji and Obayashi, Shigeru},
  journal={IEEE Transactions on Evolutionary Computation},
  volume={21},
  number={6},
  pages={898--913},
  year={2017},
  publisher={IEEE}
}

@inproceedings{jeong2005efficient,
  title={Efficient global optimization (EGO) for multi-objective problem and data mining},
  author={Jeong, Shinkyu and Obayashi, Shigeru},
  booktitle={2005 IEEE congress on evolutionary computation},
  volume={3},
  pages={2138--2145},
  year={2005},
  organization={IEEE}
}

@article{zhan2017expected,
  title={Expected improvement matrix-based infill criteria for expensive multiobjective optimization},
  author={Zhan, Dawei and Cheng, Yuansheng and Liu, Jun},
  journal={IEEE Transactions on Evolutionary Computation},
  volume={21},
  number={6},
  pages={956--975},
  year={2017},
  publisher={IEEE}
}

@article{svenson2016multiobjective,
  title={Multiobjective optimization of expensive-to-evaluate deterministic computer simulator models},
  author={Svenson, Joshua and Santner, Thomas},
  journal={Computational Statistics \& Data Analysis},
  volume={94},
  pages={250--264},
  year={2016},
  publisher={Elsevier}
}

@book{bautista2009sequential,
  title={A sequential design for approximating the {P}areto front using the expected {P}areto improvement function},
  author={Bautista, Dianne Carrol},
  year={2009},
  publisher={The Ohio State University}
}

@phdthesis{svenson2011computer,
  title={Computer experiments: Multiobjective optimization and sensitivity analysis},
  author={Svenson, Joshua},
  year={2011},
  school={The Ohio State University}
}

@article{keane2006statistical,
  title={Statistical improvement criteria for use in multiobjective design optimization},
  author={Keane, Andy J},
  journal={AIAA journal},
  volume={44},
  number={4},
  pages={879--891},
  year={2006}
}

@article{ip2025user,
  title={User Preference Meets {P}areto-Optimality in Multi-Objective {B}ayesian Optimization},
  author={Ip, Joshua Hang Sai and Chakrabarty, Ankush and Mesbah, Ali and Romeres, Diego},
  journal={Proceedings of the AAAI Conference on Artificial Intelligence},
  volume={39},
  number={19},
  pages={20246--20254},
  year={2025}
}

@inproceedings{astudillo2020multi,
  title={Multi-attribute {B}ayesian optimization with interactive preference learning},
  author={Astudillo, Raul and Frazier, Peter},
  booktitle={International Conference on Artificial Intelligence and Statistics},
  pages={4496--4507},
  year={2020},
  organization={PMLR}
}

@article{ozaki2024multi,
  title={Multi-objective {B}ayesian optimization with active preference learning},
  author={Ozaki, Ryota and Ishikawa, Kazuki and Kanzaki, Youhei and Takeno, Shion and Takeuchi, Ichiro and Karasuyama, Masayuki},
  journal={Proceedings of the AAAI conference on artificial intelligence},
  volume={38},
  number={13},
  pages={14490--14498},
  year={2024}
}

@book{garnett2023bayesian,
  title={Bayesian optimization},
  author={Garnett, Roman},
  year={2023},
  publisher={Cambridge University Press}
}

@article{gaudrie2020targeting,
  title={Targeting solutions in {B}ayesian multi-objective optimization: sequential and batch versions},
  author={Gaudrie, David and Le Riche, Rodolphe and Picheny, Victor and Enaux, Benoit and Herbert, Vincent},
  journal={Annals of Mathematics and Artificial Intelligence},
  volume={88},
  number={1},
  pages={187--212},
  year={2020},
  publisher={Springer}
}

@article{gaudrie2018budgeted,
  title={Budgeted Multi-Objective Optimization with a Focus on the Central Part of the {P}areto Front--Extended Version},
  author={Gaudrie, David and Riche, Rodolphe Le and Picheny, Victor and Enaux, Benoit and Herbert, Vincent},
  journal={arXiv preprint arXiv:1809.10482},
  year={2018}
}

@article{zuluaga2016pal,
  author  = {Marcela Zuluaga and Andreas Krause and Markus Püschel},
  title   = {$\epsilon$-PAL: An Active Learning Approach to the Multi-Objective Optimization Problem},
  journal = {Journal of Machine Learning Research},
  year    = {2016},
  volume  = {17},
  number  = {104},
  pages   = {1--32},
  url     = {http://jmlr.org/papers/v17/15-047.html}
}

@InProceedings{malkomes2021beyond,
  title = 	 {Beyond the {P}areto Efficient Frontier: Constraint Active Search for Multiobjective Experimental Design},
  author =       {Malkomes, Gustavo and Cheng, Bolong and Lee, Eric H and Mccourt, Mike},
  booktitle = 	 {Proceedings of the 38th International Conference on Machine Learning},
  pages = 	 {7423--7434},
  year = 	 {2021},
  volume = 	 {139},
  publisher =    {PMLR},
}

@inproceedings{papenmeier2023bounce,
 author = {Papenmeier, Leonard and Nardi, Luigi and Poloczek, Matthias},
 booktitle = {Advances in Neural Information Processing Systems},
 pages = {1764--1793},
 publisher = {Curran Associates, Inc.},
 title = {Bounce: Reliable High-Dimensional {B}ayesian Optimization for Combinatorial and Mixed Spaces},
 volume = {36},
 year = {2023}
}

@inproceedings{gonzalez2024survey,
 author = {Gonz\'{a}lez-Duque, Miguel and Michael, Richard and Bartels, Simon and Zainchkovskyy, Yevgen and Hauberg, S\o ren and Boomsma, Wouter},
 booktitle = {Advances in Neural Information Processing Systems},
 pages = {140478--140508},
 publisher = {Curran Associates, Inc.},
 title = {A survey and benchmark of high-dimensional {B}ayesian optimization of discrete sequences},
 volume = {37},
 year = {2024}
}

@article{emmerich2018tutorial,
  title={A tutorial on multiobjective optimization: fundamentals and evolutionary methods},
  author={Emmerich, Michael TM and Deutz, Andr{\'e} H},
  journal={Natural Computing},
  volume={17},
  pages={585--609},
  year={2018},
  publisher={Springer}
}

@article{cheng2017benchmark,
  title={A benchmark test suite for evolutionary many-objective optimization},
  author={Cheng, Ran and Li, Miqing and Tian, Ye and Zhang, Xingyi and Yang, Shengxiang and Jin, Yaochu and Yao, Xin},
  journal={Complex \& Intelligent Systems},
  volume={3},
  pages={67--81},
  year={2017},
  publisher={Springer}
}

@inproceedings{zheng2024boundary,
 author = {Zheng, Ruihao and Wang, Zhenkun},
 booktitle = {Advances in Neural Information Processing Systems},
 pages = {14349--14385},
 publisher = {Curran Associates, Inc.},
 title = {Boundary Decomposition for Nadir Objective Vector Estimation},
 volume = {37},
 year = {2024}
}

@article{kleywegt2002sample,
  title={The sample average approximation method for stochastic discrete optimization},
  author={Kleywegt, Anton J and Shapiro, Alexander and Homem-de-Mello, Tito},
  journal={SIAM Journal on Optimization},
  volume={12},
  number={2},
  pages={479--502},
  year={2002},
  publisher={SIAM}
}

@article{li2025expensive,
  title={Expensive multi-objective {B}ayesian optimization based on diffusion models},
  author={Li, Bingdong and Di, Zixiang and Lu, Yongfan and Qian, Hong and Wang, Feng and Yang, Peng and Tang, Ke and Zhou, Aimin},
  journal={Proceedings of the AAAI Conference on Artificial Intelligence}, 
  volume={39},
  number={25},
  pages={27063--27071},
  year={2025}
}

@book{miettinen1999nonlinear,
  title={Nonlinear Multiobjective Optimization},
  author={Miettinen, Kaisa},
  volume={12},
  year={1999},
  publisher={Springer Science \& Business Media}
}

@article{thompson1933likelihood,
  title={On the likelihood that one unknown probability exceeds another in view of the evidence of two samples},
  author={Thompson, William R},
  journal={Biometrika},
  volume={25},
  number={3/4},
  pages={285--294},
  year={1933},
  publisher={JSTOR}
}

@book{branke2008multiobjective,
  title     = {Multiobjective Optimization: Interactive and Evolutionary Approaches},
  editor    = {Branke, Jürgen and Deb, Kalyanmoy and Miettinen, Kaisa and Słowiński, Roman},
  volume    = {5252},
  year      = {2008},
  publisher = {Springer},
}

@article{emmerich2006single,
  title={Single-and multiobjective evolutionary optimization assisted by {G}aussian random field metamodels},
  author={Emmerich, Michael TM and Giannakoglou, Kyriakos C and Naujoks, Boris},
  journal={IEEE Transactions on Evolutionary Computation},
  volume={10},
  number={4},
  pages={421--439},
  year={2006},
  publisher={IEEE}
}

@article{shang2020survey,
  title={A survey on the hypervolume indicator in evolutionary multiobjective optimization},
  author={Shang, Ke and Ishibuchi, Hisao and He, Linjun and Pang, Lie Meng},
  journal={IEEE Transactions on Evolutionary Computation},
  volume={25},
  number={1},
  pages={1--20},
  year={2020},
  publisher={IEEE}
}

@article{peng2025bayesian,
  title={Bayesian optimization and explainable machine learning for High-dimensional multi-objective optimization of biodegradable magnesium alloys},
  author={Peng, Peng and Peng, Yi and Liu, Fuguo and Long, Shuai and Zhang, Cheng and Tang, Aitao and She, Jia and Zhang, Jianyue and Pan, Fusheng},
  journal={Journal of Materials Science \& Technology},
  year={2025},
  publisher={Elsevier}
}

@article{liang2021benchmarking,
  title={Benchmarking the performance of {B}ayesian optimization across multiple experimental materials science domains},
  author={Liang, Qiaohao and Gongora, Aldair E and Ren, Zekun and Tiihonen, Armi and Liu, Zhe and Sun, Shijing and Deneault, James R and Bash, Daniil and Mekki-Berrada, Flore and Khan, Saif A and others},
  journal={npj Computational Materials},
  volume={7},
  number={1},
  pages={188},
  year={2021},
  publisher={Nature Publishing Group UK London}
}

@article{low2024evolution,
  title={Evolution-guided {B}ayesian optimization for constrained multi-objective optimization in self-driving labs},
  author={Low, Andre KY and Mekki-Berrada, Flore and Gupta, Abhishek and Ostudin, Aleksandr and Xie, Jiaxun and Vissol-Gaudin, Eleonore and Lim, Yee-Fun and Li, Qianxiao and Ong, Yew Soon and Khan, Saif A and others},
  journal={npj Computational Materials},
  volume={10},
  number={1},
  pages={104},
  year={2024},
  publisher={Nature Publishing Group UK London}
}

@inproceedings{lin2022pareto,
 author = {Lin, Xi and Yang, Zhiyuan and Zhang, Xiaoyuan and Zhang, Qingfu},
 booktitle = {Advances in Neural Information Processing Systems},
 pages = {19231--19247},
 publisher = {Curran Associates, Inc.},
 title = {Pareto Set Learning for Expensive Multi-Objective Optimization},
 volume = {35},
 year = {2022}
}

@article{qing2023robust,
  title={A robust multi-objective {B}ayesian optimization framework considering input uncertainty},
  author={Qing, Jixiang and Couckuyt, Ivo and Dhaene, Tom},
  journal={Journal of Global Optimization},
  volume={86},
  number={3},
  pages={693--711},
  year={2023},
  publisher={Springer}
}

@article{liang2024survey,
  title={A survey of surrogate-assisted evolutionary algorithms for expensive optimization},
  author={Liang, Jing and Lou, Yahang and Yu, Mingyuan and Bi, Ying and Yu, Kunjie},
  journal={Journal of Membrane Computing},
  pages={1--20},
  year={2024},
  publisher={Springer}
}

@article{jin2011surrogate,
  title={Surrogate-assisted evolutionary computation: Recent advances and future challenges},
  author={Jin, Yaochu},
  journal={Swarm and Evolutionary Computation},
  volume={1},
  number={2},
  pages={61--70},
  year={2011},
  publisher={Elsevier}
}

@article{li2015many,
  title={Many-objective evolutionary algorithms: A survey},
  author={Li, Bingdong and Li, Jinlong and Tang, Ke and Yao, Xin},
  journal={ACM Computing Surveys (CSUR)},
  volume={48},
  number={1},
  pages={1--35},
  year={2015},
  publisher={Acm New York, NY, USA}
}

@Article{Zitzler1999,
  Title                    = {{Multiobjective evolutionary algorithms: A comparative case study and the strength {P}areto approach}},
  Author                   = {Zitzler, Eckart and Thiele, Lothar},
  Journal                  = {IEEE Transactions on Evolutionary Computation},
  Year                     = {1999},
  Number                   = {4},
  Pages                    = {257--271},
  Volume                   = {3}
}

@article{das1998normal,
  title={Normal-boundary intersection: A new method for generating the {P}areto surface in nonlinear multicriteria optimization problems},
  author={Das, Indraneel and Dennis, John E},
  journal={SIAM journal on optimization},
  volume={8},
  number={3},
  pages={631--657},
  year={1998},
  publisher={SIAM}
}

@article{youn2004reliability,
  title={Reliability-based design optimization for crashworthiness of vehicle side impact},
  author={Youn, Byeng D and Choi, KK and Yang, R-J and Gu, Lei},
  journal={Structural and Multidisciplinary Optimization},
  volume={26},
  pages={272--283},
  year={2004},
  publisher={Springer}
}

@inproceedings{nayebi2019framework,
  title={A framework for {B}ayesian optimization in embedded subspaces},
  author={Nayebi, Amin and Munteanu, Alexander and Poloczek, Matthias},
  booktitle={International Conference on Machine Learning},
  pages={4752--4761},
  year={2019},
  organization={PMLR}
}

@article{han2021high,
  title={High-dimensional {B}ayesian optimization via tree-structured additive models},
  author={Han, Eric and Arora, Ishank and Scarlett, Jonathan},
  journal={Proceedings of the AAAI Conference on Artificial Intelligence},
  volume={35},
  number={9},
  pages={7630--7638},
  year={2021}
}

@InProceedings{hvarfner2024vanilla,
  title = 	 {Vanilla {B}ayesian Optimization Performs Great in High Dimensions},
  author =       {Hvarfner, Carl and Hellsten, Erik Orm and Nardi, Luigi},
  booktitle = 	 {Proceedings of the 41st International Conference on Machine Learning},
  pages = 	 {20793--20817},
  year = 	 {2024},
  volume = 	 {235},
  month = 	 {21--27 Jul},
  publisher =    {PMLR},
}

@inproceedings{eriksson2019scalable,
 author = {Eriksson, David and Pearce, Michael and Gardner, Jacob and Turner, Ryan D and Poloczek, Matthias},
 booktitle = {Advances in Neural Information Processing Systems},
 title = {Scalable global optimization via local {B}ayesian optimization},
 volume = {32},
 pages = {5496--5507},
 year = {2019},
 publisher = {Curran Associates, Inc.},
}

@article{diouane2023trego,
  title={TREGO: a trust-region framework for efficient global optimization},
  author={Diouane, Youssef and Picheny, Victor and Riche, Rodolophe Le and Perrotolo, Alexandre Scotto Di},
  journal={Journal of Global Optimization},
  volume={86},
  number={1},
  pages={1--23},
  year={2023},
  publisher={Springer}
}

@inproceedings{antonov2022high,
  title={High dimensional {B}ayesian optimization with kernel principal component analysis},
  author={Antonov, Kirill and Raponi, Elena and Wang, Hao and Doerr, Carola},
  booktitle={International Conference on Parallel Problem Solving from Nature},
  pages={118--131},
  year={2022},
  organization={Springer}
}

@inproceedings{raponi2020high,
  title={High dimensional {B}ayesian optimization assisted by principal component analysis},
  author={Raponi, Elena and Wang, Hao and Bujny, Mariusz and Boria, Simonetta and Doerr, Carola},
  booktitle={Parallel Problem Solving from Nature--PPSN XVI: 16th International Conference, PPSN 2020, Leiden, The Netherlands, September 5-9, 2020, Proceedings, Part I 16},
  pages={169--183},
  year={2020},
  organization={Springer}
}

@inproceedings{letham2020re,
 author = {Letham, Ben and Calandra, Roberto and Rai, Akshara and Bakshy, Eytan},
 booktitle = {Advances in Neural Information Processing Systems},
 pages = {1546--1558},
 publisher = {Curran Associates, Inc.},
 title = {Re-Examining Linear Embeddings for High-Dimensional {B}ayesian Optimization},
 volume = {33},
 year = {2020}
}

@inproceedings{ziomek2023random,
  title={Are random decompositions all we need in high dimensional {B}ayesian optimisation?},
  author={Ziomek, Juliusz Krzysztof and Ammar, Haitham Bou},
  booktitle={International Conference on Machine Learning},
  pages={43347--43368},
  year={2023},
  organization={PMLR}
}

@inproceedings{delbridge2020randomly,
  title={Randomly projected additive {G}aussian processes for regression},
  author={Delbridge, Ian and Bindel, David and Wilson, Andrew Gordon},
  booktitle={International Conference on Machine Learning},
  pages={2453--2463},
  year={2020},
  organization={PMLR}
}

@inproceedings{eriksson2021high,
  title={High-dimensional {B}ayesian optimization with sparse axis-aligned subspaces},
  author={Eriksson, David and Jankowiak, Martin},
  booktitle={Uncertainty in Artificial Intelligence},
  pages={493--503},
  year={2021},
  organization={PMLR}
}

@article{wang2016bayesian,
  title={Bayesian optimization in a billion dimensions via random embeddings},
  author={Wang, Ziyu and Hutter, Frank and Zoghi, Masrour and Matheson, David and De Feitas, Nando},
  journal={Journal of Artificial Intelligence Research},
  volume={55},
  pages={361--387},
  year={2016}
}

@article{santoni2024comparison,
  title={Comparison of high-dimensional {B}ayesian optimization algorithms on {BBOB}},
  author={Santoni, Maria Laura and Raponi, Elena and De Leone, Renato and Doerr, Carola},
  journal={ACM Transactions on Evolutionary Learning and Optimisation},
  volume={4},
  number={3},
  pages={1--33},
  year={2024},
}

@article{binois2022survey,
  title={A survey on high-dimensional {G}aussian process modeling with application to {B}ayesian optimization},
  author={Binois, Mickael and Wycoff, Nathan},
  journal={ACM Transactions on Evolutionary Learning and Optimization},
  volume={2},
  number={2},
  pages={1--26},
  year={2022},
  publisher={ACM New York, NY}
}

@article{chen2024pg,
  title={{PG-LBO}: enhancing high-dimensional {B}ayesian optimization with pseudo-label and {G}aussian Process guidance},
  author={Chen, Taicai and Duan, Yue and Li, Dong and Qi, Lei and Shi, Yinghuan and Gao, Yang},
  journal={Proceedings of the AAAI Conference on Artificial Intelligence},
  volume={38},
  number={10},
  pages={11381--11389},
  year={2024}
}

@InProceedings{wang2018batched,
  title = 	 {Batched large-scale {B}ayesian optimization in high-dimensional spaces},
  author = 	 {Wang, Zi and Gehring, Clement and Kohli, Pushmeet and Jegelka, Stefanie},
  booktitle = 	 {Proceedings of the Twenty-First International Conference on Artificial Intelligence and Statistics},
  pages = 	 {745--754},
  year = 	 {2018},
  volume = 	 {84},
  publisher =    {PMLR},
}

@article{papenmeier2025understanding,
  title={Understanding High-Dimensional {B}ayesian Optimization},
  author={Papenmeier, Leonard and Poloczek, Matthias and Nardi, Luigi},
  journal={arXiv preprint arXiv:2502.09198},
  year={2025}
}

@InProceedings{wu2020practical,
  title = 	 {Practical Multi-fidelity {B}ayesian Optimization for Hyperparameter Tuning},
  author =       {Wu, Jian and Toscano-Palmerin, Saul and Frazier, Peter I. and Wilson, Andrew Gordon},
  booktitle = 	 {Proceedings of The 35th Uncertainty in Artificial Intelligence Conference},
  pages = 	 {788--798},
  year = 	 {2020},
  volume = 	 {115},
  series = 	 {Proceedings of Machine Learning Research},
  month = 	 {22--25 Jul},
  publisher =    {PMLR},
}

@inproceedings{zhang2017information,
  title={Information-based multi-fidelity {B}ayesian optimization},
  author={Zhang, Yehong and Hoang, Trong Nghia and Low, Bryan Kian Hsiang and Kankanhalli, Mohan},
  booktitle={NIPS workshop on {B}ayesian optimization},
  volume={49},
  year={2017},
  organization={Journal of Machine Learning Research JMLR. org Cambridge, MA}
}

@inproceedings{li2020multi,
 author = {Li, Shibo and Xing, Wei and Kirby, Robert and Zhe, Shandian},
 booktitle = {Advances in Neural Information Processing Systems},
 pages = {8521--8531},
 publisher = {Curran Associates, Inc.},
 title = {Multi-Fidelity {B}ayesian Optimization via Deep Neural Networks},
 volume = {33},
 year = {2020}
}

@InProceedings{takeno2020multi,
  title = 	 {Multi-fidelity {B}ayesian Optimization with Max-value Entropy Search and its Parallelization},
  author =       {Takeno, Shion and Fukuoka, Hitoshi and Tsukada, Yuhki and Koyama, Toshiyuki and Shiga, Motoki and Takeuchi, Ichiro and Karasuyama, Masayuki},
  booktitle = 	 {Proceedings of the 37th International Conference on Machine Learning},
  pages = 	 {9334--9345},
  year = 	 {2020},
  volume = 	 {119},
  series = 	 {Proceedings of Machine Learning Research},
  publisher =    {PMLR},
}

@InProceedings{song2019general,
  title = 	 {A General Framework for Multi-fidelity {B}ayesian Optimization with {G}aussian Processes},
  author =       {Song, Jialin and Chen, Yuxin and Yue, Yisong},
  booktitle = 	 {Proceedings of the Twenty-Second International Conference on Artificial Intelligence and Statistics},
  pages = 	 {3158--3167},
  year = 	 {2019},
  volume = 	 {89},
  series = 	 {Proceedings of Machine Learning Research},
  publisher =    {PMLR},
}

@article{moss2021gibbon,
  author  = {Henry B. Moss and David S. Leslie and Javier Gonzalez and Paul Rayson},
  title   = {GIBBON: General-purpose Information-Based {B}ayesian Optimisation},
  journal = {Journal of Machine Learning Research},
  year    = {2021},
  volume  = {22},
  number  = {235},
  pages   = {1--49},
}

@InProceedings{kandasamy2017multi,
  title = 	 {Multi-fidelity {B}ayesian Optimisation with Continuous Approximations},
  author =       {Kirthevasan Kandasamy and Gautam Dasarathy and Jeff Schneider and Barnab{\'a}s P{\'o}czos},
  booktitle = 	 {Proceedings of the 34th International Conference on Machine Learning},
  pages = 	 {1799--1808},
  year = 	 {2017},
  volume = 	 {70},
  series = 	 {Proceedings of Machine Learning Research},
  publisher =    {PMLR},
}

@inproceedings{lin2025few,
  title={Few for many: Tchebycheff set scalarization for many-objective optimization},
  author={Lin, Xi and Liu, Yilu and Zhang, Xiaoyuan and Liu, Fei and Wang, Zhenkun and Zhang, Qingfu},
  booktitle={The Thirteenth International Conference on Learning Representations},
  year={2025}
}

@inproceedings{xu2025standard,
  title={Standard {G}aussian process is all you need for high-dimensional {B}ayesian optimization},
  author={Xu, Zhitong and Wang, Haitao and Phillips, Jeff M and Zhe, Shandian},
  booktitle={The Thirteenth International Conference on Learning Representations},
  year={2025}
}

@article{cheaitou2019greening,
  title={Greening of maritime transportation: a multi-objective optimization approach},
  author={Cheaitou, Ali and Cariou, Pierre},
  journal={Annals of Operations Research},
  volume={273},
  number={1},
  pages={501--525},
  year={2019},
  publisher={Springer}
}

@article{deb2009reliability,
  title={Reliability-based optimization using evolutionary algorithms},
  author={Deb, Kalyanmoy and Gupta, Shubham and Daum, David and Branke, J{\"u}rgen and Mall, Abhishek Kumar and Padmanabhan, Dhanesh},
  journal={IEEE Transactions on Evolutionary Computation},
  volume={13},
  number={5},
  pages={1054--1074},
  year={2009},
  publisher={IEEE}
}

@article{tanabe2020easy,
  title={An easy-to-use real-world multi-objective optimization problem suite},
  author={Tanabe, Ryoji and Ishibuchi, Hisao},
  journal={Applied Soft Computing},
  volume={89},
  pages={106078},
  year={2020},
  publisher={Elsevier}
}

@article{belakaria2020multi,
  title={Multi-fidelity multi-objective {B}ayesian optimization: An output space entropy search approach},
  author={Belakaria, Syrine and Deshwal, Aryan and Doppa, Janardhan Rao},
  journal={Proceedings of the AAAI Conference on artificial intelligence},
  volume={34},
  number={06},
  pages={10035--10043},
  year={2020}
}

@article{belakaria2020uncertainty,
  title={Uncertainty-aware search framework for multi-objective {B}ayesian optimization},
  author={Belakaria, Syrine and Deshwal, Aryan and Jayakodi, Nitthilan Kannappan and Doppa, Janardhan Rao},
  journal={Proceedings of the AAAI Conference on Artificial Intelligence},
  volume={34},
  number={06},
  pages={10044--10052},
  year={2020}
}

@inproceedings{daulton2022robust,
  title={Robust multi-objective {B}ayesian optimization under input noise},
  author={Daulton, Samuel and Cakmak, Sait and Balandat, Maximilian and Osborne, Michael A and Zhou, Enlu and Bakshy, Eytan},
  booktitle={International Conference on Machine Learning},
  pages={4831--4866},
  year={2022},
  organization={PMLR}
}

@inproceedings{daulton2022multi,
  title={Multi-objective {B}ayesian optimization over high-dimensional search spaces},
  author={Daulton, Samuel and Eriksson, David and Balandat, Maximilian and Bakshy, Eytan},
  booktitle={Uncertainty in Artificial Intelligence},
  pages={507--517},
  year={2022},
  organization={PMLR}
}

@inproceedings{abdolshah2019multi,
 author = {Abdolshah, Majid and Shilton, Alistair and Rana, Santu and Gupta, Sunil and Venkatesh, Svetha},
 booktitle = {Advances in Neural Information Processing Systems},
 pages = {},
 publisher = {Curran Associates, Inc.},
 title = {Multi-objective {B}ayesian optimisation with preferences over objectives},
 volume = {32},
 year = {2019}
}

@article{dunlap2023continuous,
  title={Continuous flow synthesis of pyridinium salts accelerated by multi-objective {B}ayesian optimization with active learning},
  author={Dunlap, John H and Ethier, Jeffrey G and Putnam-Neeb, Amelia A and Iyer, Sanjay and Luo, Shao-Xiong Lennon and Feng, Haosheng and Torres, Jose Antonio Garrido and Doyle, Abigail G and Swager, Timothy M and Vaia, Richard A and others},
  journal={Chemical Science},
  volume={14},
  number={30},
  pages={8061--8069},
  year={2023},
  publisher={Royal Society of Chemistry}
}

@article{park2018multi,
  title={Multi-objective {B}ayesian optimization of chemical reactor design using computational fluid dynamics},
  author={Park, Seongeon and Na, Jonggeol and Kim, Minjun and Lee, Jong Min},
  journal={Computers \& Chemical Engineering},
  volume={119},
  pages={25--37},
  year={2018},
  publisher={Elsevier}
}

@article{shields2021bayesian,
  title={{B}ayesian reaction optimization as a tool for chemical synthesis},
  author={Shields, Benjamin J and Stevens, Jason and Li, Jun and Parasram, Marvin and Damani, Farhan and Alvarado, Jesus I Martinez and Janey, Jacob M and Adams, Ryan P and Doyle, Abigail G},
  journal={Nature},
  volume={590},
  number={7844},
  pages={89--96},
  year={2021},
  publisher={Nature Publishing Group UK London}
}

@article{ishibuchi2018specify,
    author = {Ishibuchi, Hisao and Imada, Ryo and Setoguchi, Yu and Nojima, Yusuke},
    title = {How to Specify a Reference Point in Hypervolume Calculation for Fair Performance Comparison},
    journal = {Evolutionary Computation},
    volume = {26},
    number = {3},
    pages = {411-440},
    year = {2018},
}

@inproceedings{jain2013improved,
  title={An improved adaptive approach for elitist nondominated sorting genetic algorithm for many-objective optimization},
  author={Jain, Himanshu and Deb, Kalyanmoy},
  booktitle={International Conference on Evolutionary Multi-Criterion Optimization},
  pages={307--321},
  year={2013},
  organization={Springer}
}

@incollection{deb2005scalable,
  title={Scalable test problems for evolutionary multiobjective optimization},
  author={Deb, Kalyanmoy and Thiele, Lothar and Laumanns, Marco and Zitzler, Eckart},
  booktitle={Evolutionary multiobjective optimization: theoretical advances and applications},
  pages={105--145},
  year={2005},
  publisher={Springer}
}

@article{picheny2019bayesian,
  title={A {B}ayesian optimization approach to find Nash equilibria},
  author={Picheny, Victor and Binois, Mickael and Habbal, Abderrahmane},
  journal={Journal of Global Optimization},
  volume={73},
  pages={171--192},
  year={2019},
  publisher={Springer}
}

@inproceedings{zhang2020random,
  title = 	 {Random Hypervolume Scalarizations for Provable Multi-Objective Black Box Optimization},
  author =       {Zhang, Richard and Golovin, Daniel},
  booktitle = 	 {Proceedings of the 37th International Conference on Machine Learning},
  pages = 	 {11096--11105},
  year = 	 {2020},
  volume = 	 {119},
  series = 	 {Proceedings of Machine Learning Research},
  publisher =    {PMLR},
}

@inproceedings{konakovic2020diversity,
 title = {Diversity-Guided Multi-Objective {B}ayesian Optimization With Batch Evaluations},
 author = {Konakovic Lukovic, Mina and Tian, Yunsheng and Matusik, Wojciech},
 booktitle = {Advances in Neural Information Processing Systems},
 pages = {17708--17720},
 publisher = {Curran Associates, Inc.},
 volume = {33},
 year = {2020}
}

@article{binois2020kalai,
  title   = {The Kalai-Smorodinsky solution for many-objective {B}ayesian optimization},
  author  = {Mickael Binois and Victor Picheny and Patrick Taillandier and Abderrahmane Habbal},
  journal = {Journal of Machine Learning Research},
  year    = {2020},
  volume  = {21},
  number  = {150},
  pages   = {1--42},
}

@inproceedings{zhao2023exact,
  title={Exact Formulas for the Computation of Expected Tchebycheff Improvement},
  author={Zhao, Liang and Zhang, Qingfu},
  booktitle={2023 IEEE Congress on Evolutionary Computation (CEC)},
  pages={1--8},
  year={2023},
  organization={IEEE}
}

@article{couckuyt2014fast,
  title={Fast calculation of multiobjective probability of improvement and expected improvement criteria for {P}areto optimization},
  author={Couckuyt, Ivo and Deschrijver, Dirk and Dhaene, Tom},
  journal={Journal of Global Optimization},
  volume={60},
  number={3},
  pages={575--594},
  year={2014},
  publisher={Springer}
}

@inproceedings{suzuki2020multi,
  title = 	 {Multi-objective {B}ayesian Optimization using {P}areto-frontier Entropy},
  author =       {Suzuki, Shinya and Takeno, Shion and Tamura, Tomoyuki and Shitara, Kazuki and Karasuyama, Masayuki},
  booktitle = 	 {Proceedings of the 37th International Conference on Machine Learning},
  pages = 	 {9279--9288},
  year = 	 {2020},
  volume = 	 {119},
  series = 	 {Proceedings of Machine Learning Research},
  publisher =    {PMLR},
}

@article{belakaria2021output,
  title={Output space entropy search framework for multi-objective {B}ayesian optimization},
  author={Belakaria, Syrine and Deshwal, Aryan and Doppa, Janardhan Rao},
  journal={Journal of artificial intelligence research},
  volume={72},
  pages={667--715},
  year={2021}
}

@inproceedings{belakaria2019max,
 title = {Max-value Entropy Search for Multi-Objective {B}ayesian Optimization},
 author = {Belakaria, Syrine and Deshwal, Aryan and Doppa, Janardhan Rao},
 booktitle = {Advances in Neural Information Processing Systems},
 pages = {},
 publisher = {Curran Associates, Inc.},
 volume = {32},
 year = {2019}
}

@article{garrido2023parallel,
  title={Parallel predictive entropy search for multi-objective {B}ayesian optimization with constraints applied to the tuning of machine learning algorithms},
  author={Garrido-Merch{\'a}n, Eduardo C and Fern{\'a}ndez-S{\'a}nchez, Daniel and Hern{\'a}ndez-Lobato, Daniel},
  journal={Expert Systems with Applications},
  volume={215},
  pages={119328},
  year={2023},
  publisher={Elsevier}
}

@article{garrido2019predictive,
    title = {Predictive Entropy Search for Multi-objective {B}ayesian Optimization with Constraints},
    journal = {Neurocomputing},
    volume = {361},
    pages = {50-68},
    year = {2019},
    author = {Eduardo C. Garrido-Merchán and Daniel Hernández-Lobato},
}

@inproceedings{hernandez2016predictive,
  title = 	 {Predictive Entropy Search for Multi-objective {B}ayesian Optimization},
  author = 	 {Hernandez-Lobato, Daniel and Hernandez-Lobato, Jose and Shah, Amar and Adams, Ryan},
  booktitle = 	 {Proceedings of The 33rd International Conference on Machine Learning},
  pages = 	 {1492--1501},
  year = 	 {2016},
  volume = 	 {48},
  series = 	 {Proceedings of Machine Learning Research},
  address = 	 {New York, USA},
  publisher =    {PMLR},
}

@inproceedings{daulton2023hypervolume,
  title = 	 {Hypervolume Knowledge Gradient: A Lookahead Approach for Multi-Objective {B}ayesian Optimization with Partial Information},
  author =       {Daulton, Sam and Balandat, Maximilian and Bakshy, Eytan},
  booktitle = 	 {Proceedings of the 40th International Conference on Machine Learning},
  pages = 	 {7167--7204},
  year = 	 {2023},
  volume = 	 {202},
  series = 	 {Proceedings of Machine Learning Research},
  publisher =    {PMLR},
}

@inproceedings{paszke2019pytorch,
  title     = {PyTorch: An Imperative Style, High-Performance Deep Learning Library},
  author    = {Adam Paszke and Sam Gross and Francisco Massa and Adam Lerer and James Bradbury and Gregory Chanan and Trevor Killeen and Zeming Lin and Natalia Gimelshein and Luca Antiga and Alban Desmaison and Andreas Kopf and Edward Yang and Zachary DeVito and Martin Raison and Alykhan Tejani and Sasank Chilamkurthy and Benoit Steiner and Lu Fang and Junjie Bai and Soumith Chintala},
  booktitle = {Advances in Neural Information Processing Systems},
  year      = {2019},
  volume    = {32},
  publisher = {Curran Associates, Inc.},
}

@inproceedings{gardner2018gpytorch,
 author = {Gardner, Jacob and Pleiss, Geoff and Weinberger, Kilian Q and Bindel, David and Wilson, Andrew G},
 booktitle = {Advances in Neural Information Processing Systems},
 pages = {},
 publisher = {Curran Associates, Inc.},
 title = {GPyTorch: Blackbox Matrix-Matrix {G}aussian Process Inference with {GPU} Acceleration},
 volume = {31},
 year = {2018}
}

@article{li2014evolutionary,
  title={An evolutionary many-objective optimization algorithm based on dominance and decomposition},
  author={Li, Ke and Deb, Kalyanmoy and Zhang, Qingfu and Kwong, Sam},
  journal={IEEE transactions on Evolutionary Computation},
  volume={19},
  number={5},
  pages={694--716},
  year={2014},
  publisher={IEEE}
}

@article{jain2013evolutionary,
  title={An evolutionary many-objective optimization algorithm using reference-point based nondominated sorting approach, part {II}: Handling constraints and extending to an adaptive approach},
  author={Jain, Himanshu and Deb, Kalyanmoy},
  journal={IEEE Transactions on Evolutionary Computation},
  volume={18},
  number={4},
  pages={602--622},
  year={2013},
  publisher={IEEE}
}

@article{deb2013evolutionary,
  title={An evolutionary many-objective optimization algorithm using reference-point-based nondominated sorting approach, part {I}: solving problems with box constraints},
  author={Deb, Kalyanmoy and Jain, Himanshu},
  journal={IEEE transactions on Evolutionary Computation},
  volume={18},
  number={4},
  pages={577--601},
  year={2013},
  publisher={IEEE}
}

@article{sobol1967distribution,
  title={The distribution of points in a cube and the approximate evaluation of integrals},
  author={Sobol, Ilya M},
  journal={USSR Computational Mathematics and Mathematical Physics},
  volume={7},
  number={4},
  pages={86--112},
  year={1967}
}

@article{yang2019multi,
  title={Multi-objective {B}ayesian global optimization using expected hypervolume improvement gradient},
  author={Yang, Kaifeng and Emmerich, Michael and Deutz, Andr{\'e} and B{\"a}ck, Thomas},
  journal={Swarm and Evolutionary Computation},
  volume={44},
  pages={945--956},
  year={2019},
  publisher={Elsevier}
}

@article{deng2025expected,
  title={Expected Hypervolume Improvement Is a Particular Hypervolume Improvement},
  author={Deng, Jingda and Sun, Jianyong and Zhang, Qingfu and Li, Hui},
  journal={Proceedings of the AAAI Conference on Artificial Intelligence},
  volume={39},
  number={15},
  pages={16217--16225},
  year={2025}
}

@article{jiang2025trading,
  title={Trading off quality and uncertainty through multi-objective optimisation in batch {B}ayesian optimisation},
  author={Jiang, Chao and Li, Miqing},
  journal={Proceedings of the AAAI Conference on Artificial Intelligence},
  volume={39},
  number={25}, 
  year={2025},
  pages={27027-27035}
}

@article{jiang2025multi,
  title={Multi-objectivising acquisition functions in {B}ayesian optimisation},
  author={Jiang, Chao and Li, Miqing},
  journal={ACM Transactions on Evolutionary Learning and Optimization},
  volume = {5},
  number = {2},
  year={2025},
  publisher={ACM New York, NY},
}

@inproceedings{tu2022joint,
  title={Joint entropy search for multi-objective {B}ayesian optimization},
  author={Tu, Ben and Gandy, Axel and Kantas, Nikolas and Shafei, Behrang},
  booktitle={Advances in Neural Information Processing Systems},
  volume={35},
  pages={9922--9938},
  publisher = {Curran Associates, Inc.},
  year={2022}
}

@inproceedings{ament2023unexpected,
    author = {Ament, Sebastian and Daulton, Samuel and Eriksson, David and Balandat, Maximilian and Bakshy, Eytan},
    booktitle = {Advances in Neural Information Processing Systems},
    pages = {20577--20612},
    publisher = {Curran Associates, Inc.},
    title = {Unexpected Improvements to Expected Improvement for {B}ayesian Optimization},
    volume = {36},
    year = {2023}
}

@article{ahmadianshalchi2024pareto,
  title={Pareto Front-Diverse Batch Multi-Objective {B}ayesian Optimization},
  author={Ahmadianshalchi, Alaleh and Belakaria, Syrine and Doppa, Janardhan Rao},
  journal={Proceedings of the AAAI Conference on Artificial Intelligence},
  volume={38},
  number={10},
  pages={10784--10794},
  year={2024}
}

@inproceedings{chugh2020scalarizing,
  title={Scalarizing functions in {B}ayesian multiobjective optimization},
  author={Chugh, Tinkle},
  booktitle={2020 IEEE Congress on Evolutionary Computation (CEC)},
  pages={1--8},
  year={2020},
  organization={IEEE}
}

@inproceedings{ponweiser2008multiobjective,
  title={Multiobjective Optimization on a Limited Budget of Evaluations Using Model-Assisted $\mathcal{S}$-Metric Selection},
  author={Ponweiser, Wolfgang and Wagner, Tobias and Biermann, Dirk and Vincze, Markus},
  booktitle={Parallel Problem Solving from Nature -- PPSN X},
  pages={784--794},
  year={2008},
  organization={Springer}
}

@article{lai1985asymptotically,
	title={Asymptotically efficient adaptive allocation rules},
	author={Lai, Tze Leung and Robbins, Herbert},
	journal={Advances in Applied Mathematics},
	volume={6},
	number={1},
	pages={4--22},
	year={1985},
	publisher={Academic Press}
}

@article{jones1998efficient,
  title={Efficient global optimization of expensive black-box functions},
  author={Jones, Donald R and Schonlau, Matthias and Welch, William J},
  journal={Journal of Global Optimization},
  volume={13},
  number={4},
  pages={455--492},
  year={1998},
  publisher={Springer}
}

@inproceedings{balandat2020botorch,
 author = {Balandat, Maximilian and Karrer, Brian and Jiang, Daniel and Daulton, Samuel and Letham, Ben and Wilson, Andrew G and Bakshy, Eytan},
 booktitle = {Advances in Neural Information Processing Systems},
 pages = {21524--21538},
 publisher = {Curran Associates, Inc.},
 title = {BoTorch: A Framework for Efficient Monte-Carlo {B}ayesian Optimization},
 volume = {33},
 year = {2020}
}

@inproceedings{paria2020flexible,
  title = 	 {A Flexible Framework for Multi-Objective {B}ayesian Optimization using Random Scalarizations},
  author =       {Paria, Biswajit and Kandasamy, Kirthevasan and P{\'{o}}czos, Barnab{\'{a}}s},
  booktitle = 	 {Proceedings of The 35th Uncertainty in Artificial Intelligence Conference},
  pages = 	 {766--776},
  year = 	 {2020},
  editor = 	 {Adams, Ryan P. and Gogate, Vibhav},
  volume = 	 {115},
  series = 	 {Proceedings of Machine Learning Research},
  publisher =    {PMLR},
}

@article{zhang2009expensive,
  author={Zhang, Qingfu and Liu, Wudong and Tsang, Edward and Virginas, Botond},
  journal={IEEE Transactions on Evolutionary Computation}, 
  title={Expensive multiobjective optimization by {MOEA/D} with {G}aussian process model}, 
  year={2010},
  volume={14},
  number={3},
  pages={456-474},
}

@inproceedings{daulton2021parallel,
 author = {Daulton, Samuel and Balandat, Maximilian and Bakshy, Eytan},
 booktitle = {Advances in Neural Information Processing Systems},
 publisher = {Curran Associates, Inc.},
 title = {Parallel {B}ayesian Optimization of Multiple Noisy Objectives with Expected Hypervolume Improvement},
 volume = {34},
 year = {2021}
}

@inproceedings{daulton2020differentiable,
 author = {Daulton, Samuel and Balandat, Maximilian and Bakshy, Eytan},
 booktitle = {Advances in Neural Information Processing Systems},
 pages = {9851--9864},
 publisher = {Curran Associates, Inc.},
 title = {Differentiable Expected Hypervolume Improvement for Parallel Multi-Objective {B}ayesian Optimization},
 volume = {33},
 year = {2020}
}

@article{knowles2006parego,
  title={ParEGO: A hybrid algorithm with on-line landscape approximation for expensive multiobjective optimization problems},
  author={Knowles, Joshua},
  journal={IEEE transactions on Evolutionary Computation},
  volume={10},
  number={1},
  pages={50--66},
  year={2006},
  publisher={IEEE}
}

@article{holm1979simple,
  title={A simple sequentially rejective multiple test procedure},
  author={Holm, Sture},
  journal={Scandinavian Journal of Statistics},
  pages={65--70},
  year={1979},
  publisher={JSTOR}
}

@incollection{wilcoxon1992individual,
  title={Individual comparisons by ranking methods},
  author={Wilcoxon, Frank},
  booktitle={Breakthroughs in Statistics: Methodology and Distribution},
  pages={196--202},
  year={1992},
  publisher={Springer}
}

\clearpage

\appendix
\onecolumn

\begin{center}
\hrule height 4pt
\vskip 0.25in
\vskip -\parskip
    {\LARGE\bf  Appendix to:\\[2ex] \papertitle}
\vskip 0.29in
\vskip -\parskip
\hrule height 1pt
\vskip 0.2in%
\end{center}

\section{Multi-objective Bayesian Optimisation (MOBO)}\label{appdx:sec:BO}
MOBO consists of two main steps, i.e., training $m$ Gaussian process models based on the observed solutions and optimising an acquisition function $\alpha(\bm x): \mathcal{X} \to \mathbb{R}$ to select a solution for evaluation. 
In this work, we model each objective with an independent Gaussian process $f_i \sim \mathcal{GP}(m_i(\bm x),k_i(\bm x,\bm x'))$, where $m_i(\bm x): \mathcal{X} \to \mathbb{R}$ is the $i$th mean function, and $k_i(\cdot,\cdot): \mathcal{X} \times \mathcal{X} \to \mathbb{R}$ is the $i$th covariance function. 
We use the notation $K(\mathrm{A}, \mathrm{B})$ to represent the covariance matrix at all pairs of solutions in set $\mathrm{A}$ and in set $\mathrm{B}$. 
Given $n$ observed solutions $\mathcal{D}^n= \{(\bm x^t,\bm y^t)\}_{t=1}^n$ where $\bm y^t = \bm{f}(\bm x^t) + \bm\zeta^t$ and the noise $\bm \zeta^t \sim \mathcal{N}(0, \text{diag} (\bm \sigma_\zeta^2))$, the posterior distribution of $i$th objective at a new location $\bm x$ is a Gaussian distribution: 
\begin{equation}
	p(f_i(\bm x)|\mathcal{D}^n) \sim \mathcal{N}(\mu_i(\bm x),\sigma_i^2(\bm x))   
\end{equation}
\begin{equation}
	\mu_i(\bm x)=K(\bm x,X^n)(K(X^n,X^n)+\sigma_{\zeta_i}^2\mathbf{I})^{-1}Y_i
\end{equation}
\begin{equation}
	\sigma_i^2(\bm x)=K(\bm x,\bm x)-K(\bm x,X^n)((K(X^n,X^n)+\sigma_{\zeta_i}^2\mathbf{I})^{-1}K(X^n,\bm x) 
\end{equation}
where $\mu_i(\bm x)$ and $\sigma_i^2(\bm x)$ are the mean and variance at $\bm x$, respectively; 
$X^n=(\bm x^1,\dots,\bm x^n)\in \mathbb{R}^{n\times d}$ and $Y_i^n=(y_i^1,\dots,y_i^n)\in \mathbb{R}^{n}$ are the matrix of evaluated solutions and the corresponding vector of $y$ values, respectively; 
$\sigma_{\zeta_i}^2$ is the variance of the observation noise $\zeta_i \sim \mathcal{N}(0, \sigma_{\zeta_i}^2)$, and corresponds to the $i$-th diagonal entry of the noise covariance matrix $\text{diag}(\bm{\sigma}_\zeta^2)$; 


\section{Illustrative Example of ESPI}\label{appdx:sec:ESPI}

To help understand the proposed ESPI, an illustration of ESPI in a bi-objective case is shown in Figure~\ref{fig:espi}. 
In this example, the utopian point is denoted by the black star, while the red dots represent the nondominated solutions from the current dataset. 
A new candidate point, whose objective values are yet to be observed, is shown as the blue dot. 
The red dotted line indicates the shortest Euclidean distance $g^*$ from the utopian point to the existing nondominated set. 
In contrast, the blue dotted line represents the distance from the new point to the utopian point. 
When a new point lies within the shaded blue region and is closer to the utopian point, the improvement $I_{SP}(\cdot)$ is higher.

\begin{figure}[!ht]
    \centering
    \begin{minipage}{0.35\linewidth}{
    \includegraphics[width=\linewidth]{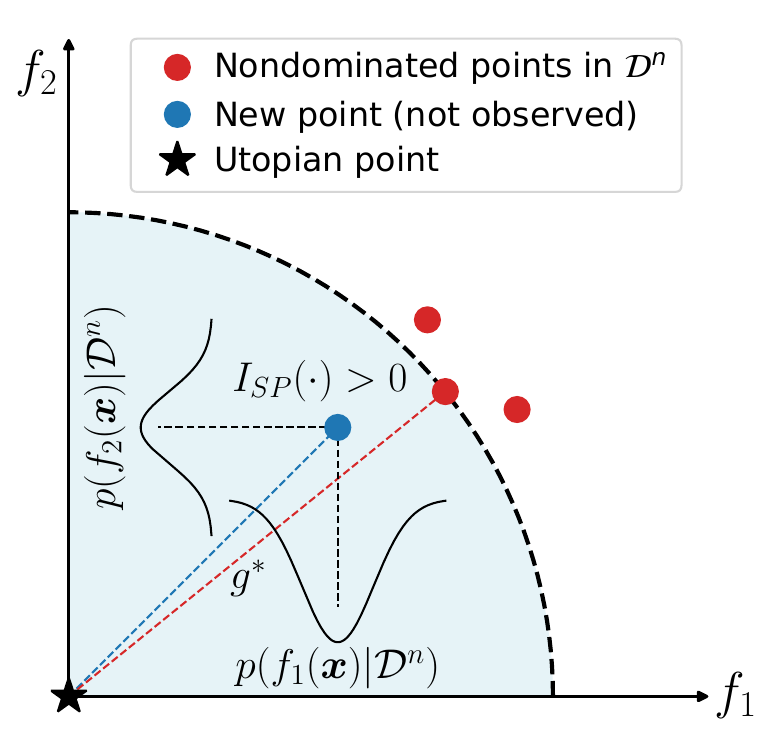}	
    }\end{minipage}
    \begin{minipage}{0.58\linewidth}{
    \caption{Illustration of the proposed expected single-point improvement (ESPI) in a bi-objective space. 
    The figure shows the utopian point (black star), nondominated points in the current dataset $\mathcal{D}^n$ (red dots), and a new candidate point whose true objective values are not yet observed (blue dot). 
    The dashed arc represents the current best Euclidean distance from the utopian point to the nondominated set, denoted as $g^*$ (red dotted line). 
    The blue dotted line represents the distance from the new candidate point to the utopian point $\bm{z}^{*}$, 
    which may improve upon the current best distance $g^*$. 
    When a new point lies within the shaded blue region and is closer to the utopian point, the improvement $I_{SP}(\cdot)$ is higher.  
    }
    \label{fig:espi}
    }\end{minipage}
\end{figure}

\newpage
\section{Extended Related Work}~\label{Appendix:related_work}


Over the last two decades, a variety of BO methods have been proposed to tackle expensive multi-objective optimisation 
problems~\citep{jeong2005efficient,bautista2009sequential,svenson2011computer,svenson2016multiobjective,zuluaga2016pal,zhan2017expected,picheny2019bayesian,belakaria2020uncertainty,malkomes2021beyond,park2023botied,ngo2025mobo}. Most of them aim to identify a good approximation of the entire Pareto front~\citep{keane2006statistical,svenson2011computer,parr2013improvement,zuluaga2013active,ahmadianshalchi2024pareto}. To do so, some studies convert a multi-objective problem into multiple single-objective problems by a scalarisation function (e.g., random augmented Tchebycheff scalarisation). They then optimise acquisition functions from single-objective BO to determine the next evaluation point(s)~\citep{knowles2006parego,zhao2023exact}. In these studies, different acquisition functions are employed, such as expected improvement (EI)~\citep{jones1998efficient} in~\cite{knowles2006parego,zhang2009expensive,namura2017expected,chugh2020scalarizing}, Thompson sampling (TS)~\citep{thompson1933likelihood} in~\cite{paria2020flexible,zhang2020random}, and upper confidence bound (UCB)~\citep{lai1985asymptotically} in~\cite{paria2020flexible,zhang2020random,li2024constrained}.

The remaining studies directly optimise multi-objective problems by considering the definition of optimality in multi-objective optimisation, i.e., the Pareto dominance relation.
A representative approach is to use HV since maximising the HV value is equivalent to finding the entire Pareto front~\citep{ponweiser2008multiobjective,couckuyt2014fast,daulton2020differentiable,daulton2021parallel,renganathan2025qpots,bhatija2025multi}.
Along this line, expected hypervolume improvement (EHVI) is widely considered~\citep{emmerich2006single,daulton2023hypervolume,qing2023robust,yang2019multi,yang2019efficient,deng2025expected} as it is a natural
extension of the EI for multi-objective optimisation. 
Another idea is to leverage information theory to guide exploration toward regions likely contributing to the Pareto front. 
Such methods focus on improving the posterior of optimal inputs (i.e., the approximated Pareto set)~\citep{garrido2023parallel,garrido2019predictive,hernandez2016predictive}, optimal outputs (i.e., the approximated Pareto front)~\citep{belakaria2019max,belakaria2021output,suzuki2020multi}, or both of them~\citep{tu2022joint}. 

That said, there do exist a few studies that do not aim to identify the entire Pareto front. Among them, some attempt to use decision-maker preferences to guide the search towards specific region(s)~\citep{abdolshah2019multi,astudillo2020multi,ozaki2024multi,ip2025user}. Such methods typically adjust the target region by eliciting or updating decision-maker preferences during 
optimisation. Another attempt is to directly target certain region(s) of the Pareto front, without an assumption that decision-maker preferences can be available or elicited~\citep{gaudrie2018budgeted,gaudrie2020targeting,binois2020kalai}. For example, \cite{gaudrie2018budgeted,gaudrie2020targeting} propose the Centred Expected Hypervolume Improvement (C-EHVI) by dynamically adjusting the reference point using the Kalai-Smorodinsky equilibrium (also used in \cite{binois2020kalai}) to approach the central part of the Pareto front. 
Such work is more relevant to our study, and we have thus included C-EHVI in our experimental comparison.

\section{Theoretical Results}\label{appdx:sec:convergence}


\subsection{Proof of Theorem~\ref{thm:SAA_ESPI}.}\label{appdx:sec:proof1}

We consider the setting from~\citet[Section D.5]{balandat2020botorch}. 
Let $\epsilon^t ~\sim \mathcal{N}(0, I_{m})$.\footnote{Theorem~\ref{thm:SAA_ESPI} can be extended to handle non-iid base samples from a family of quasi-Monte Carlo methods
as in~\citet{balandat2020botorch}.} 
Using the reparameterisation trick, we can write the posterior at $\bm x$ as
\begin{equation*}
    \bm{\bar{f}_t}(\bm x, \epsilon^t) = \bar{\mu}(\bm x) + \bar{L}(\bm x)\epsilon^t
\end{equation*}
where $\bar{\mu}(\bm x):\mathbb{R}^d \rightarrow \mathbb{R}^m$ is the multi-output GP's posterior mean; $\bar{L}(\bm x) \in \mathbb{R}^{m \times m}$ is a root decomposition (often a Cholesky decomposition) of the multi-output GP’s posterior covariance $\bar{K}\in \mathbb{R}^{m \times m}$; and $\epsilon^t\in \mathbb{R}^m$. 
Let
\begin{align*}
    \bar{A}(\bm x, \epsilon^t) &= \max\Big( 0,\, g^* - \| \bm{\bar{f}_t}(\bm x, \epsilon^t)-\bm z^*\|\Big) 
\end{align*}
where 
$g^*= \min_{\bm x\in \bar{X}^n} g(\bm f(\bm x),\bm z^{*})$ and $\bm z^{*}=(z_1^*, z_2^*, \dots, z_m^*)$ is the utopian point. Following \citet[Theorem 3]{balandat2020botorch}, we need to show that there exists an integrable function $\ell:\mathbb{R}^{m}\mapsto \mathbb{R}$ such that for almost every $\epsilon^t$ and all $\bm x, \bm y \in \mathcal X \subset \mathbb{R}^d$, 
\begin{align}
    |\bar{A}(\bm x, \epsilon^t) - \bar{A}(\bm y, \epsilon^t)| \leq \ell(\epsilon^t) \|\bm x - \bm y\|.
\end{align}

Let 
\begin{align*}
    \bar{A}(\bm x, \epsilon^t) &= \max\Big( 0,\, g^* - \| \bm{\bar{f}_t}(\bm x, \epsilon^t)-\bm z^*\|\Big) \\
    &= \frac{1}{2} \Big(g^*-\| \bm{\bar{f}_t}(\bm x, \epsilon^t)-\bm z^*\| + \big|g^* - \| \bm{\bar{f}_t}(\bm x, \epsilon^t)-\bm z^*\|\big|\Big).
\end{align*}

Hence we obtain, 
\label{appdx:eq:Convergence:SplitProduct}
\begin{align*}
    &\quad \ \bigl|\bar{A}(\bm x, \epsilon^t) - \bar{A}(\bm y, \epsilon^t)\bigr| \\
    &= \Big|\frac{1}{2}\Big(\| \bm{\bar{f}_t}(\bm y, \epsilon^t)-\bm z^*\|-\| \bm{\bar{f}_t}(\bm x, \epsilon^t)-\bm z^*\|\Big) + 
       \frac{1}{2}\Big(\big|g^* - \| \bm{\bar{f}_t}(\bm x, \epsilon^t)-\bm z^*\|\big|-\big|g^* - \| \bm{\bar{f}_t}(\bm y, \epsilon^t)-\bm z^*\|\big|\Big)\Big|.
\end{align*}

Let $I_1:=\| \bm{\bar{f}_t}(\bm y, \epsilon^t)-\bm z^*\|-\| \bm{\bar{f}_t}(\bm x, \epsilon^t)-\bm z^*\|$ and $I_2:=\big|g^* - \| \bm{\bar{f}_t}(\bm x, \epsilon^t)-\bm z^*\|\big|-\big|g^* - \| \bm{\bar{f}_t}(\bm y, \epsilon^t)-\bm z^*\|\big|$. Hence
\begin{align*}
    \quad \ \bigl|\bar{A}(\bm x, \epsilon^t) - \bar{A}(\bm y, \epsilon^t)\bigr| \leq \frac{1}{2}|I_1| + \frac{1}{2}|I_2|.
\end{align*}
    
We observe that 
\begin{align*}
    |I_1| 
    &= \big| \| \bm{\bar{f}_t}(\bm y, \epsilon^t)-\bm z^*\|-\| \bm{\bar{f}_t}(\bm x, \epsilon^t)-\bm z^*\| \big| \\
    & \leq \|\bm{\bar{f}_t}(\bm y, \epsilon^t)-\bm{\bar{f}_t}(\bm x, \epsilon^t)\| \\
    &= \|\bar{\mu}(\bm y) + \bar{L}(\bm y)\epsilon^t - (\bar{\mu}(\bm x) + \bar{L}(\bm x)\epsilon^t) \| \\
    & \leq \|\bar{\mu}(\bm y) -\bar{\mu}(\bm x) \| + \|(\bar{L}(\bm y) - \bar{L}(\bm x))\epsilon^t\|.
\end{align*}
Since $\mathcal{X}$ is compact, and $\bar{\mu}$ and $\bar{L}$ have uniformly bounded gradients, they are Lipschitz.  There exist $C_{\mu_1}, C_{L_1} < \infty$ such that 
\begin{align*}
    |I_1| 
    & \leq \|\bar{\mu}(\bm y) -\bar{\mu}(\bm x) \| + \|(\bar{L}(\bm y) - \bar{L}(\bm x))\epsilon^t\|  \\
    & \leq  \ell_{I_1}(\epsilon^t)\|\bm x- \bm y\|
\end{align*}
where $\ell_{I_1}(\epsilon^t):=C_{\mu_1}+C_{L_1}\|\epsilon^t\|$. 
Furthermore, 
\begin{align*}
    |I_2| 
    &=\Big|\big|g^* - \| \bm{\bar{f}_t}(\bm x, \epsilon^t)-\bm z^*\|\big|-\big|g^* - \| \bm{\bar{f}_t}(\bm y, \epsilon^t)-\bm z^*\|\big|\Big| \\
    &\leq \big|\| \bm{\bar{f}_t}(\bm x, \epsilon^t)-\bm z^*\| - \| \bm{\bar{f}_t}(\bm y, \epsilon^t)-\bm z^*\|\big| \\
    &\leq \|\bm{\bar{f}_t}(\bm x, \epsilon^t)-\bm{\bar{f}_t}(\bm y, \epsilon^t)\| \\
    &= \|\bar{\mu}(\bm x) + \bar{L}(\bm x)\epsilon^t - (\bar{\mu}(\bm y) + \bar{L}(\bm y)\epsilon^t) \| \\
    & \leq \|\bar{\mu}(\bm x) -\bar{\mu}(\bm y) \| + \|(\bar{L}(\bm x) - \bar{L}(\bm y))\epsilon^t\|.
\end{align*}
Since $\mathcal{X}$ is compact, and $\bar{\mu}$ and $\bar{L}$ have uniformly bounded gradients, they are Lipschitz.  There exist $C_{\mu_2}, C_{L_2} < \infty$ such that 
\begin{align*}
    |I_2| 
    & \leq  \ell_{I_2}(\epsilon^t)\|\bm x- \bm y\|
\end{align*}
where $\ell_{I_2}(\epsilon^t):=C_{\mu_2}+C_{L_2}\|\epsilon^t\|$. Hence
\begin{align*}
    \quad \ \bigl|\bar{A}(\bm x, \epsilon^t) - \bar{A}(\bm y, \epsilon^t)\bigr| 
    & \leq \frac{1}{2}|I_1| + \frac{1}{2}|I_2| \\
    & \leq \frac{1}{2} \ell_{I_1}(\epsilon^t)\|\bm x- \bm y\| + \frac{1}{2} \ell_{I_2}(\epsilon^t)\|\bm x- \bm y\| \\
    & = \frac{1}{2} (\ell_{I_1}(\epsilon^t)+\ell_{I_2}(\epsilon^t)) \|\bm x- \bm y\|.
\end{align*}
Hence
\begin{align*}
    \quad \ \bigl|\bar{A}(\bm x, \epsilon^t) - \bar{A}(\bm y, \epsilon^t)\bigr| 
    & \leq \ell(\epsilon^t)\|\bm x- \bm y\|
\end{align*}
where $\ell(\epsilon^t) := (C_{\mu_1}+C_{\mu_2})+(C_{L_1}+C_{L_2})\|\epsilon^t\|$. Note that $\ell(\epsilon^t)$ is integrable because all absolute moments exist for the Gaussian distribution. Since this satisfies the criteria for Theorem 3 in~\cite{balandat2020botorch}, the theorem holds for ESPI.

\subsection{Theorem~\ref{thm:SAA_NESPI} and Its Proof}\label{appdx:sec:proof2}

\begin{theorem}
\label{thm:SAA_NESPI}
    Suppose that $\mathcal X$ is compact and that $\bm f$ has a multi-output GP prior with continuously differentiable mean and covariance functions. 
    Let $X^n=\{\bm x^t\}_{t=1}^n$ be the set of observed decision vectors in $\mathcal{D}^n$, 
    $\aNESPI^* := \max_{\bm x \in \mathcal X} \aNESPI(\bm x)$ denote the maximum of NESPI, $S^* := \argmax_{\bm x \in \mathcal X} \aNESPI(\bm x)$ denote the set of maximisers of $\aNESPI$, 
    $\hataNESPI^N(\bm x)$ denote the deterministic acquisition function via the base samples $\{\epsilon^t\}_{t=1}^N \sim \mathcal N(0,I_{(n+1)m})$. 
    Suppose that $\hat{\bm x}^*_N \in \argmax_{\bm x \in \mathcal X} \hataNESPI^N(\bm x)$, then
    
    \begin{enumerate}[label={(\arabic*)}]
        \item $\hataNESPI^N(\hat{\bm x}^*_N) \rightarrow \aNESPI^*$ a.s.,
        \item $d(\hat{\bm x}_{\!N}^*, S^*) \rightarrow 0$ a.s., where $d(\hat{\bm x}_{\!N}^*,S^*):=\inf_{\bm x\in S^*}\lVert\hat{\bm x}_{\!N}^*-\bm x\rVert$. 
    \end{enumerate}
\end{theorem}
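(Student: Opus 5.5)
We follow the same route as the proof of Theorem~\ref{thm:SAA_ESPI}: reduce the statement to \citet[Theorem 3]{balandat2020botorch} by exhibiting an integrable Lipschitz modulus $\ell(\epsilon^t)$ for the integrand, uniformly in $\bm x \in \mathcal X$. The difference from the noiseless case is that the incumbent $\hat{g^*}$ is now random. We therefore work with the joint posterior over the augmented set $\{\bm x\}\cup X^n$, which has dimension $(n+1)m$.

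By reparameterisation we write
\begin{equation*}
\tilde{\bm f}_t(\bm x, X^n, \epsilon^t) = \tilde\mu(\bm x, X^n) + \tilde L(\bm x, X^n)\epsilon^t, \qquad \epsilon^t\sim\mathcal N(0,I_{(n+1)m}),
\end{equation*}
where $\tilde\mu$ and $\tilde L$ are the joint posterior mean and a root decomposition of the joint posterior covariance. The first $m$ coordinates give $\tilde{\bm f}_t(\bm x)$, and the remaining blocks give $\tilde{\bm f}_t(\bm x^j)$ for $j=1,\dots,n$. The integrand is
\begin{equation*}
\tilde A(\bm x,\epsilon^t)=\max\bigl(0,\hat g^*_t(\bm x,\epsilon^t)-\|\tilde{\bm f}_t(\bm x,\epsilon^t)-\bm z^*\|\bigr), \qquad \hat g^*_t=\min_j\|\tilde{\bm f}_t(\bm x^j,\epsilon^t)-\bm z^*\|.
\end{equation*}
The incumbent $\hat g^*_t$ depends on $\bm x$, because the joint root decomposition couples the blocks, so it cannot be treated as a constant as $g^*$ was before.

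Next we bound the increment. Since $u\mapsto\max(0,u)$ is $1$-Lipschitz,
\begin{equation*}
|\tilde A(\bm x,\epsilon)-\tilde A(\bm y,\epsilon)|\le |\hat g^*_t(\bm x,\epsilon)-\hat g^*_t(\bm y,\epsilon)| + \bigl|\|\tilde{\bm f}_t(\bm x,\epsilon)-\bm z^*\|-\|\tilde{\bm f}_t(\bm y,\epsilon)-\bm z^*\|\bigr|.
\end{equation*}
The second term is handled exactly as $I_1$ in the proof of Theorem~\ref{thm:SAA_ESPI}, using the reverse triangle inequality. For the first term, a minimum of finitely many functions is Lipschitz with constant at most the largest individual constant, since $|\min_j a_j-\min_j b_j|\le\max_j|a_j-b_j|$. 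Applying the reverse triangle inequality to each $j$ then bounds it by $\max_j\|\tilde{\bm f}_t(\bm x^j;\bm x)-\tilde{\bm f}_t(\bm x^j;\bm y)\|$. Both terms are therefore controlled by the full-vector difference
\begin{equation*}
\|\tilde\mu(\bm x,X^n)-\tilde\mu(\bm y,X^n)\|+\|(\tilde L(\bm x,X^n)-\tilde L(\bm y,X^n))\epsilon\|.
\end{equation*}

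The main obstacle is showing that $\tilde\mu(\cdot,X^n)$ and $\tilde L(\cdot,X^n)$ are Lipschitz on the compact set $\mathcal X$. The mean is continuously differentiable by the assumptions on the prior, so it is Lipschitz with some constant $C_\mu$. For $\tilde L$, the first plan is to argue as in \citet[Section D.5]{balandat2020botorch}: the joint posterior covariance is continuously differentiable in $\bm x$, and the Cholesky factor is a smooth function of a positive definite matrix. A subtlety arises when $\bm x$ coincides with, or approaches, a point of $X^n$, because the joint covariance may then be singular. If that causes trouble, the fallback is to follow BoTorch and add a small diagonal jitter, which keeps the matrix positive definite so that the factor has uniformly bounded derivatives on $\mathcal X$. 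This yields a constant $C_L$ with $\|\tilde L(\bm x)-\tilde L(\bm y)\|\le C_L\|\bm x-\bm y\|$.

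Combining the pieces gives $\ell(\epsilon)=2(C_\mu+C_L\|\epsilon\|)$. This is integrable because Gaussian vectors have finite first moments. Theorem 3 of \citet{balandat2020botorch} then gives both conclusions (1) and (2).
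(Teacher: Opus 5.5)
Your proof is correct, and it uses the same reduction as the paper. You reparameterise the joint posterior over $X^n$ and the candidate with $\epsilon^t\in\mathbb{R}^{(n+1)m}$, bound the integrand's increment by a constant times $\bigl(C_\mu+C_L\|\epsilon^t\|\bigr)\|\bm x-\bm y\|$, and then invoke Theorem 3 of the BoTorch paper.

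Where you differ is in how the sampled incumbent is treated.
\begin{itemize}
\item The paper writes $\max(0,a)=\tfrac12(a+|a|)$ and, in bounding $I_2'$, treats $\hat{g^*_t}(X^n)$ as constant in $\bm x^{n+1}$. This is only implicitly justified. The paper orders $X^n$ before the candidate, and for a lower-triangular (Cholesky) root the first $nm$ rows of $\mu$ and $L$ do not involve $\bm x^{n+1}$, so the samples at $X^n$ are fixed. For a general root decomposition, or for your candidate-first ordering, they are not fixed.
\item You use the $1$-Lipschitz positive part together with $|\min_j a_j-\min_j b_j|\le\max_j|a_j-b_j|$. This also covers the case where the incumbent moves with $\bm x$, so your argument does not depend on the ordering or on the choice of root.
\item The cost of your route is a harmless factor $2$ in $\ell$. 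It also requires Lipschitz control of the whole root $\tilde L$, rather than only of the candidate's rows $L^{(m)}$ as in the paper.
\end{itemize}

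You also correctly flag something the paper passes over. The joint covariance degenerates as the candidate approaches a point of $X^n\subset\mathcal X$. So uniformly bounded derivatives of the root are not automatic when the mean and covariance functions are merely $C^1$; the paper simply asserts this property, as it also does for Theorem~\ref{thm:SAA_ESPI}.

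Be aware of what your jitter fallback actually proves. With jitter the reparameterised samples have covariance $\Sigma+\delta I$, so the result holds for a slightly perturbed acquisition function, not literally for $\aNESPI$. This matches BoTorch's practice and is no weaker than the paper's bare assertion. A statement about the exact $\aNESPI$ would need an extra argument, for example:
\begin{itemize}
\item a baseline-first Cholesky ordering, so that only the candidate's conditional block degenerates; and
\item enough kernel smoothness that the square root of the conditional variance is Lipschitz.
\end{itemize}
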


\textit{Proof of Theorem~\ref{thm:SAA_NESPI}.}
Let $X^n:=[(\bm{x}^1)^T,\dots,(\bm x^n)^T]^T \in \mathbb{R}^{nd}$ be the $n$ observed points; 
$\bm x^{n+1} \in \mathcal{X}\subset \mathbb{R}^{d}$ be a candidate point; 
and $\epsilon^t \in \mathbb{R}^{(n+1)m}$ with $\epsilon^t \in \mathcal{N}(0,I_{(n+1)m})$. 
Let $\bm{\tilde{f}}_t(X^n,\bm x^{n+1}):=[\bm{\tilde{f}}_t(\bm x^1),\dots,\bm{\tilde{f}}_t(\bm x^{n}),\bm{\tilde{f}}_t(\bm x^{n+1})]$ denote the $t^{\text{th}}$ sample of the corresponding objectives, so that we can write the posterior via the reparameterisation trick as:
\begin{equation*}
    \bm f_t(X^n, \bm x^{n+1}, \epsilon^t) = \mu(X^n,\bm x^{n+1}) + L(X^n,\bm x^{n+1})\epsilon^t
\end{equation*}
where $\mu(X^n,\bm x^{n+1}):\mathbb{R}^{(n+1)d} \rightarrow \mathbb{R}^{(n+1)m}$ is the multi-output GP's posterior mean; $L(X^n,\bm x^{n+1}) \in \mathbb{R}^{(n+1)m \times (n+1)m}$ is a root decomposition of the multi-output GP’s posterior covariance $K\in \mathbb{R}^{(n+1)m \times (n+1)m}$. 
Let $\bm{f}^{(m)}(\bm x_i,\epsilon^t)
:= S_i
\Bigl(\mu(X^n,\bm x^{n+1}) + L(X^n,\bm x^{n+1})\epsilon^t\Bigr)$,
where $\bm{f}^{(m)}(\bm x_i,\epsilon^t) 
\in \mathbb{R}^{m}$ represents the posterior at point $\bm x_i$ and 
$S_i\in \mathbb{R}^{m \times (n+1)m}, i=1,\dots,n+1$ is the selector matrix used to extract the corresponding element for the $\bm x_i$. 
Let
\begin{align*}
    A(\bm x^{n+1}, \epsilon^t;X^n) &= \max\Big( 0,\, \hat{g^*_t}(X^n) - \| \bm{f}^{(m)}(\bm x^{n+1}, \epsilon^t)-\bm z^*\|\Big).
\end{align*}

Let $\hat{g^*_t}(X^n):= \min_{\bm x_{obs} \in \{\bm x_i\}_{i=1}^n}\|{\bm f_t^{(m)}}(\bm x_{obs}, \epsilon^t) - \bm z^*\|$, hence we obtain:
\begin{align*}
    A(\bm x^{n+1}, \epsilon^t;X^n) &= \max\Big( 0,\, \min_{\bm x_{obs} \in \{\bm x_i\}_{i=1}^n}\|\bm{f}^{(m)}(\bm x_{obs}, \epsilon^t) - \bm z^*\| - \| \bm{f}^{(m)}(\bm x^{n+1}, \epsilon^t)-\bm z^*\|\Big).
\end{align*}

Following~\citet[Theorem 3]{balandat2020botorch}, we need to show that there exists an integrable function $\ell:\mathbb{R}^{m}\mapsto \mathbb{R}$ such that for almost every $\epsilon^t$ and all $\bm x^{n+1}, \bm y^{n+1} \in \mathcal X \subset \mathbb{R}^d$, 
\begin{align}
    |A(\bm x^{n+1}, \epsilon^t;X^n) - A(\bm y^{n+1}, \epsilon^t;X^n)| \leq \ell(\epsilon^t) \|\bm x^{n+1} - \bm y^{n+1}\|.
\end{align}

Let
\begin{align*}
    \quad \  A(\bm x^{n+1}, \epsilon^t; X^n) 
    &= \max \left(0,\hat{g_t^*}(X^n)  - \| \bm{f}^{(m)}(\bm x^{n+1}, \epsilon^t)-\bm z^*\|\right) \\
    &= \frac{1}{2} \Big(\hat{g_t^*}(X^n) -\| \bm{f}^{(m)}(\bm x^{n+1}, \epsilon^t)-\bm z^*\| + \big|\hat{g_t^*}(X^n) - \| \bm{f}^{(m)}(\bm x^{n+1}, \epsilon^t)-\bm z^*\|\big|\Big).
\end{align*}

Hence we obtain
\begin{align*}
    &\quad \ \bigl|A(\bm x^{n+1}, \epsilon^t; X^n) - A(\bm y^{n+1}, \epsilon^t; X^n)\bigr| \\
    &= \Big|\frac{1}{2}\Big(\| \bm{f}^{(m)}(\bm y^{n+1}, \epsilon^t)-\bm z^*\|-\| \bm{f}^{(m)}(\bm x^{n+1}, \epsilon^t)-\bm z^*\|\Big) \\
    &+ 
       \frac{1}{2}\Big(\big|\hat{g_t^*}(X^n) - \| \bm{f}^{(m)}(\bm x^{n+1}, \epsilon^t)-\bm z^*\|\big|-\big|\hat{g_t^*}(X^n) - \| \bm{f}^{(m)}(\bm y^{n+1}, \epsilon^t)-\bm z^*\|\big|\Big)\Big|.
\end{align*}

Let $I_1':=\| \bm{f}^{(m)}(\bm y^{n+1}, \epsilon^t)-\bm z^*\|-\| \bm{f}^{(m)}(\bm x^{n+1}, \epsilon^t)-\bm z^*\|$ and $I_2':=\big|\hat{g_t^*}(X^n) - \| \bm{f}^{(m)}(\bm x^{n+1}, \epsilon^t)-\bm z^*\|\big|-\big|\hat{g_t^*}(X^n) - \| \bm{f}^{(m)}(\bm y^{n+1}, \epsilon^t)-\bm z^*\|\big|$. Hence
\begin{align*}
    \quad \ \bigl|A(\bm x^{n+1}, \epsilon^t;X^n) - A(\bm y^{n+1}, \epsilon^t;X^n)\bigr| \leq \frac{1}{2}|I_1'| + \frac{1}{2}|I_2'|.
\end{align*}
    
We observe that 
\begin{align*}
    |I_1'| 
    &= \big| \| \bm{f}^{(m)}(\bm y^{n+1}, \epsilon^t)-\bm z^*\|-\| \bm{f}^{(m)}(\bm x^{n+1}, \epsilon^t)-\bm z^*\| \big| \\
    & \leq \|\bm{f}^{(m)}(\bm y^{n+1}, \epsilon^t)-\bm{f}^{(m)}(\bm x^{n+1}, \epsilon^t)\| \\
    &= \|\mu^{(m)}(\bm y^{n+1}) + L^{(m)}(\bm y^{n+1})\epsilon^t - (\mu^{(m)}(\bm x^{n+1}) + L^{(m)}(\bm x^{n+1})\epsilon^t) \| \\
    & \leq \|\mu^{(m)}(\bm y^{n+1}) -\mu^{(m)}(\bm x^{n+1}) \| + \|(L^{(m)}(\bm y^{n+1}) - L^{(m)}(\bm x^{n+1}))\epsilon^t\| 
\end{align*}
Since $\mathcal{X}$ is compact, and $\mu^{(m)}$ and $L^{(m)}$ have uniformly bounded gradients, they are Lipschitz.  There exist $C_{\mu_1}', C_{L_1}' < \infty$ such that 
\begin{align*}
    |I_1'| 
    & \leq \|\mu^{(m)}(\bm y^{n+1}) -\mu^{(m)}(\bm x^{n+1}) \| + \|(L^{(m)}(\bm y^{n+1}) - L^{(m)}(\bm x^{n+1}))\epsilon^t\|  \\
    & \leq  \ell_{I_1'}(\epsilon^t)\|\bm x^{n+1}- \bm y^{n+1}\|.
\end{align*}
where $\ell_{I_1'}(\epsilon^t):=C_{\mu_1}'+C_{L_1}'\|\epsilon^t\|$. 
Furthermore, 
\begin{align*}
    |I_2'| 
    &=\Big|\big|\hat{g_t^*}(X^n) - \| \bm f^{(m)}(\bm x^{n+1}, \epsilon^t)-\bm z^*\|\big|-\big|\hat{g_t^*}(X^n) - \| \bm f^{(m)}(\bm y^{n+1}, \epsilon^t)-\bm z^*\|\big|\Big| \\
    &\leq \big|\| \bm{f}^{(m)}(\bm x^{n+1}, \epsilon^t)-\bm z^*\| - \| \bm{f}^{(m)}(\bm y^{n+1}, \epsilon^t)-\bm z^*\|\big| \\
    &\leq \|\bm{f}^{(m)}(\bm x^{n+1}, \epsilon^t)-\bm{f}^{(m)}(\bm y^{n+1}, \epsilon^t)\| \\
    &= \|\mu^{(m)}(\bm x^{n+1}) + L^{(m)}(\bm x^{n+1})\epsilon^t - (\mu^{(m)}(\bm y^{n+1}) + L^{(m)}(\bm y^{n+1})\epsilon^t) \| \\
    & \leq \|\mu^{(m)}(\bm x^{n+1}) -\mu^{(m)}(\bm y^{n+1}) \| + \|(L^{(m)}(\bm x^{n+1}) - L^{(m)}(\bm y^{n+1}))\epsilon^t\| 
\end{align*}
Since $\mathcal{X}$ is compact, and $\mu^{(m)}$ and $L^{(m)}$ have uniformly bounded gradients, they are Lipschitz.  There exist $C_{\mu_2}', C_{L_2}' < \infty$ such that 
\begin{align*}
    |I_2'| 
    & \leq  \ell_{I_2'}(\epsilon^t)\|\bm x^{n+1}- \bm y^{n+1}\|
\end{align*}
where $\ell_{I_2'}(\epsilon^t):=C_{\mu_2}'+C_{L_2}'\|\epsilon^t\|$. Hence
\begin{align*}
    \quad \ \bigl|A(\bm x^{n+1}, \epsilon^t) - A(\bm y^{n+1}, \epsilon^t)\bigr| 
    & \leq \frac{1}{2}|I_1'| + \frac{1}{2}|I_2'| \\
    & \leq \frac{1}{2} \ell_{I_1'}(\epsilon^t)\|\bm x^{n+1}- \bm y^{n+1}\| + \frac{1}{2} \ell_{I_2'}(\epsilon^t)\|\bm x^{n+1}- \bm y^{n+1}\| \\
    & = \frac{1}{2} (\ell_{I_1'}(\epsilon^t)+\ell_{I_2'}(\epsilon^t)) \|\bm x^{n+1}- \bm y^{n+1}\|
\end{align*}
Hence
\begin{align*}
    \quad \ \bigl|A(\bm x^{n+1}, \epsilon^t) - A(\bm y^{n+1}, \epsilon^t)\bigr| 
    & \leq \ell(\epsilon^t)\|\bm x^{n+1}- \bm y^{n+1}\|
\end{align*}
where $\ell(\epsilon^t) := (C_{\mu_1}'+C_{\mu_2}')+(C_{L_1}'+C_{L_2}')\|\epsilon^t\|$. Note that $\ell(\epsilon^t)$ is integrable because all absolute moments exist for the Gaussian distribution. Since this satisfies the criteria for Theorem 3 in~\cite{balandat2020botorch}, the theorem holds for NESPI. 
It is worth noting that Theorems~\ref{thm:SAA_ESPI} and~\ref{thm:SAA_NESPI} readily extend to the batch setting. 

It can be shown that the gradient of $\hataESPI(\bm x)$ or $\hataNESPI(\bm x)$ is an unbiased estimator of the true gradient of $\aESPI$ or $\aNESPI$, though it is not necessary for the SAA approach~\citep{daulton2021parallel}.

\section{Experiment Settings}\label{appdx:sec:setting}

\subsection{Implementation details}\label{appdx:sec:implementation}

All experiments were conducted using Python 3.12, with all methods developed on the open-source Python framework BoTorch~\citep{balandat2020botorch}, which builds on GPyTorch~\citep{gardner2018gpytorch} for Gaussian process modelling and PyTorch~\citep{paszke2019pytorch} for automatic differentiation. The computational studies were performed on a Red Hat Enterprise Linux 8.8 system, operating on a 64-bit x86 CPU architecture. The computing cluster utilised Intel Xeon Platinum 8360Y processors running at 2.40 GHz. 

The code for ParEGO, NParEGO, TS-TCH, EHVI, NEHVI, and JES is available at~\url{https://github.com/pytorch/botorch}. 
Specifically, we implement ParEGO and NParEGO based on the implementation provided by BoTorch.\footnote{\href{https://botorch.readthedocs.io/en/latest/_modules/botorch/acquisition/monte_carlo.html\#qExpectedImprovement}{EI}, 
\href{https://botorch.readthedocs.io/en/latest/_modules/botorch/acquisition/monte_carlo.html\#qNoisyExpectedImprovement}{NEI}, 
and the \href{https://botorch.org/docs/tutorials/multi_objective_bo/}{BoTorch multi-objective tutorial}.} 
We implement TS-TCH based on the implementation provided by BoTorch.\footnote{\href{https://botorch.readthedocs.io/en/latest/_modules/botorch/generation/sampling.html\#MaxPosteriorSampling}{TS} and 
\href{https://botorch.org/docs/tutorials/multi_objective_bo/}{BoTorch multi-objective tutorial}}
We implement EHVI and NEHVI based on the implementation provided by BoTorch.\footnote{\href{https://botorch.readthedocs.io/en/latest/acquisition.html\#botorch.acquisition.multi_objective.logei.qLogExpectedHypervolumeImprovement}{EHVI},
\href{https://botorch.readthedocs.io/en/latest/acquisition.html\#botorch.acquisition.multi_objective.logei.qLogNoisyExpectedHypervolumeImprovement}{NEHVI}, and
\href{https://botorch.org/docs/tutorials/multi_objective_bo/}{BoTorch multi-objective tutorial}}
We implement JES based on the implementation provided by BoTorch.\footnote{\href{https://botorch.org/docs/tutorials/information_theoretic_acquisition_functions/}{Botorch JES implementation}} 

The data and code are available at an anonymised repository for reproducibility: \url{https://anonymous.4open.science/r/SPMO-B9DE}.



\subsection{Method Details}\label{appdx:sec:method}

For all the methods, we set the same $2(d + 1)$ points from a scrambled Sobol sequence and allow a maximum of 200 evaluations, following the practice in~\cite{daulton2020differentiable,daulton2021parallel,konakovic2020diversity}. 
All the methods use $N = 128$ Monte Carlo samples. 

For ParEGO~\citep{knowles2006parego} and its noisy variant NParEGO~\citep{daulton2021parallel}, we employ random scalarisations, whereby a weight vector $\bm{w} \in \mathbb{R}^{m}$ is generated from the unit simplex. 
The augmented Tchebycheff scalarisation function is applied, defined as $g(\bm{y}) = \max_{i} (w_i y_i) + \alpha \sum_{i} (w_i y_i)$.
Log expected improvement and noisy log expected improvement are used as the acquisition functions in ParEGO and NParEGO, respectively, as recommended in~\cite{ament2023unexpected}. 
In the batch setting, $q$ distinct weight vectors are sampled, and the acquisition function is optimised sequentially for each.

\begin{table*}[!ht]\scriptsize
\centering
    \caption{Reference points of the six benchmark problems used for hypervolume computation in EHVI and NEHVI, as well as for performance evaluation. 
    Note that the referents points of the two real-world problems, i.e., car side impact design and car cab design, are set to $(1.1, ..., 1.1) \in \mathbb R^m$ in the normalised objective space following the practice in~\cite{tanabe2020easy} the utopian and nadir points are available at~\url{https://github.com/ryojitanabe/reproblems/tree/master/ideal_nadir_points}). }
    \begin{tabular}{ccc}
    \hline
    \toprule
    \textbf{Problem} & \textbf{Reference point} & \textbf{Suggested by} \\ 
    \midrule

    DTLZ1  & $(400.0, ..., 400.0) \in \mathbb R^m$ & ~\cite{balandat2020botorch,chugh2020scalarizing}\\
    DTLZ2 & $(1.1, \dots, 1.1) \in \mathbb R^m$ & ~\cite{balandat2020botorch,daulton2020differentiable,ishibuchi2018specify}\\
    Inverted DTLZ1 & $(400.0,\dots, 400.0) \in \mathbb R^m$ &~\cite{chugh2020scalarizing,ishibuchi2018specify} \\
    Inverted DTLZ2 & $(1.1, \dots, 1.1) \in \mathbb R^m$ &~\cite{ishibuchi2018specify}  \\
    Convex DTLZ2 & $(1.1,\dots, 1.1) \in \mathbb R^m$  &~\cite{ishibuchi2018specify} \\
    Scaled DTLZ2 & $(1.1*2^0,\dots, 1.1*2^{m}) \in \mathbb R^m$  &~\cite{ishibuchi2018specify}  \\
    DTLZ3 & $(10000.0,\dots, 10000.0) \in \mathbb R^m$  &~\cite{balandat2020botorch}  \\
    DTLZ4 & $(1.1,\dots, 1.1) \in \mathbb R^m$  &~\cite{balandat2020botorch}  \\

    DTLZ5 & $(10.0,\dots, 10.0) \in \mathbb R^m$  &~\cite{balandat2020botorch}  \\

    DTLZ6 & $(10.0,\dots, 10.0) \in \mathbb R^m$  &~\cite{balandat2020botorch}  \\

    DTLZ7 & $(15.0,\dots, 15.0) \in \mathbb R^m$  &~\cite{balandat2020botorch}  \\

    
    \bottomrule
    \end{tabular}
    \label{tbl:HV}
\end{table*}

For TS-TCH~\citep{paria2020flexible}, similarly to ParEGO, we employ random scalarisations by using the augmented Tchebycheff scalarisation function. 
After converting to single-objective optimisation problem, Thompson sampling is used as the acquisition function. 
We draw a sample from the joint posterior over a discrete set of 1000$d$ points sampled from a scrambled Sobol sequence, suggested by~\cite{daulton2020differentiable}. 
In the batch setting, $q$ distinct weight vectors are sampled, and the acquisition function is optimised sequentially for each.

For EHVI~\citep{daulton2020differentiable} and its noisy variant NEHVI~\citep{daulton2021parallel}, the reference point is predefined, following the practice in~\cite{daulton2020differentiable,yang2019multi}. 
Table~\ref{tbl:HV} lists the reference points used for each problem in the experimental evaluation. 
The logarithmic variants of both acquisition functions are employed, as recommended in~\cite{ament2023unexpected}. 
In the batch setting, the sequential greedy optimisation strategy is adopted. 
It is worth noting that in this study, EHVI and NEHVI are evaluated only on problems with 3 and 5 objectives, as the acquisition optimisation wall time becomes prohibitively high when the number of objectives increases to 10 (see Table~\ref{tbl:wall_time}). 

For C-EHVI~\citep{gaudrie2018budgeted,gaudrie2020targeting}, the Kalai-Smorodinsky equilibrium is used to determine the reference point of HV. 
The logarithmic variant of the acquisition functions is employed, as recommended in~\cite{ament2023unexpected}.

For joint entropy search (JES), we use $S = 10$ Monte Carlo samples and $p = 10$ number of Pareto optimal points, according to~\cite{tu2022joint}. 
In the batch setting, the sequential greedy optimisation strategy is adopted.

For all the problems, we normalise the input variables and standardise the objective values before training Gaussian processes. 
We assume an independent surrogate model for each objective, using a constant mean function and a Matérn 5/2 ARD kernel. 
We optimise all acquisition functions by using the L-BFGS-B, with up to 200 iterations. 


\subsection{Problem Details}\label{appendix:subsec:problems}

The details of the benchmark problems and real-world problems are given in the following. 


\paragraph{DTLZ1.} DTLZ1 is a scalable benchmark problem from~\cite{deb2005scalable}, which is defined as:

\begin{align*}
\quad f_1(\bm{x}) &= \frac{1}{2} x_1 x_2 \cdots x_{m-1} (1 + g(\bm{x}_m)), \\
\quad f_2(\bm{x}) &= \frac{1}{2} x_1 x_2 \cdots (1 - x_{m-1}) (1 + g(\bm{x}_m)), \\
&\vdots \\
\quad f_{m-1}(\bm{x}) &= \frac{1}{2} x_1 (1 - x_2)(1 + g(\bm{x}_m)), \\
\quad f_m(\bm{x}) &= \frac{1}{2} (1 - x_1)(1 + g(\bm{x}_m)), \\
\text{s.t.} \quad &0 \leq x_i \leq 1, \quad \text{for } i = 1, 2, \dots, d.
\end{align*}

where $g(\bm{x}_m) = 100 \left( |\bm{x}_m| + \sum_{x_i \in \bm{x}_m} (x_i - 0.5)^2 - \cos(20\pi(x_i - 0.5)) \right)$ and $\bm{x}_m$ is the last $d - m + 1$ variables.

\paragraph{DTLZ2.} DTLZ2 is a scalable benchmark problem from~\cite{deb2005scalable}, which is defined as:

\begin{align*}
 \quad f_1(\bm{x}) &= (1 + g(\bm{x}_m)) \cos(x_1 \pi/2) \cdots \cos(x_{m-2} \pi/2) \cos(x_{m-1} \pi/2), \\
 \quad f_2(\bm{x}) &= (1 + g(\bm{x}_m)) \cos(x_1 \pi/2) \cdots \cos(x_{m-2} \pi/2) \sin(x_{m-1} \pi/2), \\
 \quad f_3(\bm{x}) &= (1 + g(\bm{x}_m)) \cos(x_1 \pi/2) \cdots \sin(x_{m-2} \pi/2), \\
&\vdots \\
 \quad f_m(\bm{x}) &= (1 + g(\bm{x}_m)) \sin(x_1 \pi/2), \\
 s.t. \quad & 0 \leq x_i \leq 1, \quad \text{for } i = 1, 2, \dots, d.
\end{align*}

where $g(\bm{x}_m) = \sum_{x_i \in \bm{x}_m} (x_i - 0.5)^2$ and $\bm{x}_m$ is the last $d - m + 1$ variables. 

\paragraph{Inverted DTLZ1.} Inverted DTLZ1 is a variant of DTLZ1~\citep{jain2013evolutionary}, which is defined as:
\begin{align*}
    f_i(\bm x) = 0.5 \cdot (1 + g(\bm x_m)) - f_i^{\text{DTLZ1}}(\bm x), \quad  i=1,\dots,m
\end{align*}
where $g(\bm x_m)$ is the same function as used in DTLZ1, $f_i^{\text{DTLZ1}}(\bm x)$ denotes the $i$th objective of the original DTLZ1 formulation. 

\paragraph{Inverted DTLZ2.} 
Inverted DTLZ2 is a variant of DTLZ2~\citep{jain2013improved}, which is defined as: 
\begin{align*}
    f_i(\bm x) = 1 + g(\bm x_m) - f_i^{\text{DTLZ2}}(\bm x), \quad i=1,\dots,m
\end{align*}
where $g(\bm x_m)$ is the same function as used in DTLZ2, $f_i^{\text{DTLZ2}}(\bm x)$ denotes the $i$th objective of the original DTLZ2 formulation. 

\paragraph{Convex DTLZ2.} Convex DTLZ2 is a variant of DTLZ2~\citep{deb2013evolutionary}, which is defined as: 
\begin{align*}
    f_i(\bm x) &= (f_i^{\text{DTLZ2}}(\bm x))^4, \quad i=1,\dots,m-1 \\
    f_m(\bm x) &= (f_m^{\text{DTLZ2}}(\bm x))^2
\end{align*}
where $f_i^{\text{DTLZ2}}(\bm x)$ denotes the $i$th objective of the original DTLZ2 formulation. 
This problem convert the original concave problem to convex problem.

\paragraph{Scaled DTLZ2.} Scaled DTLZ2 is a variant of DTLZ2, which is defined as: 
\begin{align*}
    f_i(\bm x) = 2^{i-1}\cdot f_i^{\text{DTLZ2}}(\bm x) , \quad i=1,\dots,m
\end{align*}
where $f_i^{\text{DTLZ2}}(\bm x)$ denotes the $i$th objective of the original DTLZ2 formulation. This benchmark problem is used to see whether an algorithm can deal with problems with different scales of different objectives. 

\paragraph{DTLZ3--DTLZ7.} DTLZ3--DTLZ7 are scalable multi-objective benchmark problems. Their mathematical formulations are provided in~\cite{deb2005scalable}.

According to the original paper~\citep{deb2005scalable}, the dimensionality $d$ of DTLZ1 and its variant (i.e., inverted DTLZ1) is $m+4$, the dimensionality $d$ of DTLZ2-6 and their variants (i.e., inverted DTLZ2, convex DTLZ2, and scaled DTLZ2) is $m+9$, and the dimensionality $d$ of DTLZ7 is $m+19$.

\paragraph{Car Side Impact Design.} 

The car side-impact problem aims to minimise vehicle weight while satisfying safety constraints related to occupant injury and structural response~\citep{jain2013evolutionary}. 
It involves $m=4$ objectives with $d=7$ variables, which are based on a surrogate model that is fit to data collected from a simulator. 
The mathematical formulations are given as follows:

\begin{align*}
f_1(\bm{x}) &= 1.98 + 4.9x_1 + 6.67x_2 + 6.98x_3 + 4.01x_4 + 1.78x_5 + 10^{-5}x_6 + 2.73x_7 \\
f_2(\bm{x}) &= 4.72 - 0.5x_4 - 0.19x_2x_3 \\
f_3(\bm{x}) &= 0.5 \left( V_{\text{MBP}}(\bm{x}) + V_{\text{FD}}(\bm{x}) \right) \\
f_4(\bm{x}) &= -\sum_{i=1}^{10} \max\left(g_i(\bm{x}), 0\right)
\end{align*}

\noindent
where the constraint functions \( g_i(\bm{x}) \) are defined as:
\begin{align*}
g_1(\bm{x}) &= 1 - 1.16 + 0.3717x_2x_4 + 0.0092928x_3 \\
g_2(\bm{x}) &= 0.32 - 0.261 + 0.0159x_1x_2 + 0.06486x_1 + 0.019x_2x_7 - 0.0144x_3x_5 - 0.0154464x_6 \\
g_3(\bm{x}) &= 0.32 - 0.214 - 0.00817x_5 + 0.045195x_1 + 0.0135168x_1 - 0.03099x_2x_6 \\
&\quad + 0.018x_2x_7 - 0.007176x_3 - 0.023223x_3 + 0.00364x_5x_6 + 0.018x_2^2 \\
g_4(\bm{x}) &= 0.32 - 0.74 + 0.61x_2 + 0.031296x_3 + 0.031872x_7 - 0.227x_2^2 \\
g_5(\bm{x}) &= 32 - 28.98 - 3.818x_3 + 4.2x_1x_2 - 1.27296x_6 + 2.68065x_7 \\
g_6(\bm{x}) &= 32 - 33.86 - 2.95x_3 + 5.057x_1x_2 + 3.795x_2 + 3.4431x_7 - 1.45728 \\
g_7(\bm{x}) &= 32 - 46.36 + 9.9x_2 + 4.4505x_1 \\
g_8(\bm{x}) &= 4 - f_2(\bm{x}) \\
g_9(\bm{x}) &= 9.9 - V_{\text{MBP}}(\bm{x}) \\
g_{10}(\bm{x}) &= 15.7 - V_{\text{FD}}(\bm{x})
\end{align*}

\noindent
with volume terms defined as:
\begin{align*}
V_{\text{MBP}}(\bm{x}) &= 10.58 - 0.674x_1x_2 - 0.67275x_2 \\
V_{\text{FD}}(\bm{x}) &= 16.45 - 0.489x_3x_7 - 0.8435x_6x_7.
\end{align*}

\noindent
The search space is defined as:
\[
\begin{aligned}
x_1 &\in [0.5, 1.5], \\
x_2 &\in [0.45, 1.35], \\
x_3, x_4 &\in [0.5, 1.5], \\
x_5 &\in [0.875, 2.625], \\
x_6, x_7 &\in [0.4, 1.2].
\end{aligned}
\]


\paragraph{Car Cab Design.} This vehicle performance optimisation problem involves $m=9$ objectives with $d=7$ variables, relating to aspects such as car roominess, fuel economy, acceleration time, and road noise at various speeds~\citep{deb2013evolutionary}. 
\noindent
The problem includes 7 decision variables and 4 stochastic variables, which are based on a surrogate model that is fit to data collected from a simulator, defined as:

\begin{align*}
f_1(\bm{x}) &= 1.98 + 4.9 x_1 + 6.67 x_2 + 6.98 x_3 + 4.01 x_4 + 1.75 x_5 + 10^{-5} x_6 + 2.73 x_7 \\[1ex]
f_2(\bm{x}) &= \left[ {1.16 - 0.3717 x_2 x_4 - 0.00931 x_2 x_{10} - 0.484 x_3 x_9 + 0.01343 x_6 x_{10}} \right]_+ \\[1ex]
f_3(\bm{x}) &= 
\left[
\frac{1}{0.32} \left(
\begin{aligned}
&0.261 - 0.0159 x_1 x_2 - 0.188 x_1 x_8 - 0.019 x_2 x_7 + 0.0144 x_3 x_5 + 0.8757 x_5 x_{10} \\
&+ 0.08045 x_6 x_9 + 0.00139 x_8 x_{11} + 0.00001575 x_{10} x_{11}
\end{aligned}
 \right) \right]_+ \\[1ex]
f_4(\bm{x}) &= 
\left[ \frac{1}{0.32} \left(
\begin{aligned}
&0.214 + 0.00817 x_5 - 0.131 x_1 x_8 - 0.0704 x_1 x_9 + 0.03099 x_2 x_6 - 0.018 x_2 x_7 \\
&+ 0.0208 x_3 x_8 + 0.121 x_3 x_9 - 0.00364 x_5 x_6 + 0.0007715 x_5 x_{10} \\
&- 0.0005354 x_6 x_{10} + 0.00121 x_8 x_{11} + 0.00184 x_9 x_{10} - 0.018 x_2^2
\end{aligned}
 \right) \right]_+ \\[1ex]
f_5(\bm{x}) &= \left[ \frac{0.74 - 0.61 x_2 - 0.163 x_3 x_8 + 0.001232 x_3 x_{10} - 0.166 x_7 x_9 + 0.227 x_2^2}{0.32} \right]_+ \\[1ex]
f_6(\bm{x}) &= \left[ \frac{1}{32} \cdot \frac{1}{3} \left(
\begin{aligned}
&28.98 + 3.818 x_3 - 4.2 x_1 x_2 + 0.0207 x_5 x_{10} + 6.63 x_6 x_9 - 7.77 x_7 x_8 + 0.32 x_9 x_{10} \\
&+ 33.86 + 2.95 x_3 + 0.1792 x_{10} - 5.057 x_1 x_2 - 11 x_2 x_8 - 0.0215 x_5 x_{10} - 9.98 x_7 x_8 \\
& + 22 x_8 x_9 + 46.36 - 9.9 x_2 - 12.9 x_1 x_8 + 0.1107 x_3 x_{10}
\end{aligned}
\right) \right]_+ \\[1ex]
f_7(\bm{x}) &= \left[ \frac{4.72 - 0.5 x_4 - 0.19 x_2 x_3 - 0.0122 x_4 x_{10} + 0.009325 x_6 x_{10} + 0.000191 x_{11}^2}{4.0} \right]_+ \\[1ex]
f_8(\bm{x}) &= \left[ \frac{10.58 - 0.674 x_1 x_2 - 1.95 x_2 x_8 + 0.02054 x_3 x_{10} - 0.0198 x_4 x_{10} + 0.028 x_6 x_{10}}{9.9} \right]_+ \\[1ex]
f_9(\bm{x}) &= \left[ \frac{16.45 - 0.489 x_3 x_7 - 0.843 x_5 x_6 + 0.0432 x_9 x_{10} - 0.0556 x_9 x_{11} - 0.000786 x_{11}^2}{15.7} \right]_+
\end{align*}

where $[\cdot]_+$ denotes $\max (0, \cdot)$ and the search space is defined as: 
\[
\begin{aligned}
x_1 &\in [0.5,\ 1.5], \\
x_2 &\in [0.45,\ 1.35], \\
x_3,\ x_4 &\in [0.5,\ 1.5], \\
x_5 &\in [0.875,\ 2.625], \\
x_6,\ x_7 &\in [0.4,\ 1.2].
\end{aligned}
\]

The four stochastic variables are defined as:
\[
\begin{aligned}
x_8 &\sim \mathcal{N}(0.345,\ 0.006^2), \\
x_9 &\sim \mathcal{N}(0.192,\ 0.006^2), \\
x_{10},\ x_{11} &\sim \mathcal{N}(0,\ 10^2).
\end{aligned}
\]


\clearpage
\newpage
\section{Additional Experimental Results}

\subsection{Noiseless Cases}\label{appendix:sec:noiseless}

In this section, we present the results on the six noiseless problems (i.e., DTLZ1 and DTLZ2 along with their four variants) with 3 and 10 objectives. 
Tables~\ref{tbl:Dist_M3},~\ref{tbl:HV_SP_M3} and~\ref{tbl:HV_M3} show the distance-based metric (log distance), the HV of the best solution (in terms of its HV value) and the HV of all evaluated solutions obtained by the SPMO and the peer methods on the six noiseless problems, respectively. 
Figures~\ref{fig:violin_dist_M3},~\ref{fig:violin_single_HV_M3} and~\ref{fig:violin_all_HV_M3}  present the violin plots, illustrating the distributions of the corresponding results reported in Tables~\ref{tbl:Dist_M3},~\ref{tbl:HV_SP_M3}, and~\ref{tbl:HV_M3}, respectively. 
In addition, Figure~\ref{fig:convergence_distance_M3} presents the trajectories of the distance metric obtained by each method on the noiseless problems with 3 and 10 objectives. 
We also give the results of the problems DTLZ3--DLTZ7 with 5 objectives, shown in Tables~\ref{tbl:Dist_M5_dtlz3_7},~\ref{tbl:HV_SP_M5_dtlz3_7} and~\ref{tbl:HV_M5_dtlz3_7}.

\begin{table*}[!ht]
    \centering
    \caption{Results of the distance-based metric (log distance) obtained by the SPMO and the peer methods on the noiseless problems with 3 objectives (\textbf{top}) and 10 objectives (\textbf{bottom}) on 30 independent runs. The method with the best mean is highlighted in bold. The symbols ``$+$'', ``$\sim$'' and ``$-$'' indicate that the method is statistically worse than, equivalent to and better than our SPMO, respectively.}
    \resizebox{\textwidth}{!}{%
    \begin{tabular}{lllllllc}
    \toprule
    \bfseries Method
    & \multicolumn{1}{c}{\bfseries DTLZ1 (3)} 
    & \multicolumn{1}{c}{\bfseries DTLZ2 (3)} 
    & \multicolumn{1}{c}{\bfseries Inverted DTLZ1 (3)} 
    & \multicolumn{1}{c}{\bfseries Inverted DTLZ2 (3)} 
    & \multicolumn{1}{c}{\bfseries Convex DTLZ2 (3)} 
    & \multicolumn{1}{c}{\bfseries Scaled DTLZ2 (3)} 
    & {\bfseries Sum up} \\ 
    \bfseries 
    & \multicolumn{1}{c}{Mean (Std)}
    & \multicolumn{1}{c}{Mean (Std)}
    & \multicolumn{1}{c}{Mean (Std)}
    & \multicolumn{1}{c}{Mean (Std)}
    & \multicolumn{1}{c}{Mean (Std)}
    & \multicolumn{1}{c}{Mean (Std)}
    & \multicolumn{1}{c}{$+$/$\sim$/$-$}  \\ \midrule
    \bfseries Sobol & 3.8e+0 (3.3e--1)$^+$ & 2.3e--1 (5.1e--2)$^+$ & 4.4e+0 (4.0e--1)$^+$ & 5.7e--2 (5.3e--2)$^+$ & -7.7e--2 (1.7e--1)$^+$ & 2.2e--1 (4.8e--2)$^+$ & \bfseries 6/ 0/ 0\\
    \bfseries ParEGO & 3.7e+0 (2.0e--1)$^+$ & 4.7e--3 (3.1e--3)$^+$ & 3.5e+0 (5.7e--1)$^+$ & -3.0e--1 (7.7e--3)$^+$ & -9.3e--1 (1.2e--1)$^+$ & 5.2e--3 (4.7e--3)$^+$ & \bfseries 6/ 0/ 0\\
    \bfseries TS-TCH & 3.9e+0 (2.6e--1)$^+$ & 1.4e--1 (7.1e--2)$^+$ & 4.3e+0 (3.6e--1)$^+$ & -1.1e--1 (2.4e--2)$^+$ & -3.4e--1 (1.3e--1)$^+$ & 1.5e--1 (3.6e--2)$^+$ & \bfseries 6/ 0/ 0\\
    \bfseries EHVI & 3.6e+0 (1.4e--1)$^+$ & 4.6e--3 (2.5e--3)$^+$ & 4.0e+0 (2.1e--1)$^+$ & -3.0e--1 (4.3e--3)$^+$ & -9.1e--1 (1.1e--1)$^+$ & 1.8e--2 (6.0e--2)$^+$ & \bfseries 6/ 0/ 0\\
    \bfseries C-EHVI & 3.6e+0 (1.2e--1)$^+$ & 1.6e--3 (1.7e--3)$^+$ & 4.0e+0 (2.6e--1)$^+$ & -2.8e--1 (2.9e--2)$^+$ & -4.5e--1 (2.4e--1)$^+$ & 3.2e--3 (5.5e--3)$^+$ & \bfseries 6/ 0/ 0\\
    \bfseries JES & 3.5e+0 (1.1e--1)$^+$ & 5.5e--3 (4.2e--3)$^+$ & 4.1e+0 (1.7e--1)$^+$ & -3.0e--1 (4.8e--3)$^+$ & -9.1e--1 (9.0e--2)$^+$ & 9.6e--3 (8.7e--3)$^+$ & \bfseries 6/ 0/ 0\\
    \bfseries SPMO & \best {3.3e+0} (\best {2.9e--1})  & \best {2.2e--4} (\best {1.2e--4})  & \best {2.8e+0} (\best {6.6e--1})  & \best {-3.1e--1} (\best {1.3e--5})  & \best {-1.2e+0} (\best {1.4e--2})  & \best {3.8e--5} (\best {1.6e--5})  & \\
    
    \toprule
    \bfseries Method
    & \multicolumn{1}{c}{\bfseries DTLZ1 (10)} 
    & \multicolumn{1}{c}{\bfseries DTLZ2 (10)} 
    & \multicolumn{1}{c}{\bfseries Inverted DTLZ1 (10)} 
    & \multicolumn{1}{c}{\bfseries Inverted DTLZ2 (10)} 
    & \multicolumn{1}{c}{\bfseries Convex DTLZ2 (10)} 
    & \multicolumn{1}{c}{\bfseries Scaled DTLZ2 (10)} 
    & {\bfseries Sum up} \\ 
    & \multicolumn{1}{c}{Mean (Std)}
    & \multicolumn{1}{c}{Mean (Std)}
    & \multicolumn{1}{c}{Mean (Std)}
    & \multicolumn{1}{c}{Mean (Std)}
    & \multicolumn{1}{c}{Mean (Std)}
    & \multicolumn{1}{c}{Mean (Std)}
    & \multicolumn{1}{c}{$+$/$\sim$/$-$}  \\ \midrule
    \bfseries Sobol & 3.8e+0 (2.4e--1)$^+$ & 2.4e--1 (4.1e--2)$^+$ & 5.2e+0 (2.5e--1)$^+$ & 1.2e+0 (5.4e--2)$^+$ & -4.5e--1 (2.2e--1)$^+$ & 2.4e--1 (5.1e--2)$^+$ & \bfseries 6/ 0/ 0\\
    \bfseries ParEGO & 3.7e+0 (2.6e--1)$^+$ & 1.2e--1 (8.3e--2)$^+$ & 3.5e+0 (4.7e--1)$^\sim$ & 8.6e--1 (1.9e--2)$^+$ & -1.8e+0 (2.3e--1)$^+$ & 1.1e--1 (6.3e--2)$^+$ & \bfseries 5/ 1/ 0\\
    \bfseries TS-TCH & 3.8e+0 (3.9e--1)$^+$ & 2.1e--1 (2.3e--2)$^+$ & 5.3e+0 (3.7e--1)$^+$ & 1.0e+0 (1.4e--2)$^+$ & -6.7e--1 (2.5e--1)$^+$ & 2.1e--1 (3.6e--2)$^+$ & \bfseries 6/ 0/ 0\\
    \bfseries C-EHVI & 3.7e+0 (1.5e--1)$^+$ & 6.4e--3 (5.1e--3)$^+$ & 3.9e+0 (5.1e--1)$^+$ & 8.4e--1 (1.6e--2)$^+$ & -5.5e--1 (3.2e--1)$^+$ & 5.5e--2 (7.5e--2)$^+$ & \bfseries 6/ 0/ 0\\

    \bfseries JES & 3.4e+0 (2.1e--1)$^+$ & 1.5e--1 (6.6e--2)$^+$ & 4.5e+0 (5.3e--1)$^+$ & 8.6e--1 (2.0e--2)$^+$ & -1.7e+0 (3.0e--1)$^+$ & 1.2e--1 (8.5e--2)$^+$ & \bfseries 6/ 0/ 0\\
    \bfseries SPMO & \best {2.8e+0} (\best {5.5e--1})  & \best {1.3e--3} (\best {2.6e--3})  & \best {3.4e+0} (\best {5.1e--1})  & \best {7.8e--1} (\best {2.4e--2})  & \best {-3.2e+0} (\best {1.5e--1})  & \best {1.7e--2} (\best {3.7e--2})  & \\
    \bottomrule
    
    \end{tabular}
    }
    \label{tbl:Dist_M3}
\end{table*}
\FloatBarrier 


\begin{table*}[!ht]
    \centering
    \caption{The HV of the best solution (in terms of its HV value) obtained by the proposed SPMO and the peer methods on the noiseless problems with 3 objectives (\textbf{top}) and 10 objectives  (\textbf{bottom}) on 30 independent runs. 
    The method with the best mean is highlighted in bold. The symbols ``$+$'', ``$\sim$'' and ``$-$'' indicate that the method is statistically worse than, equivalent to and better than our SPMO, respectively.
    } 
    \resizebox{\textwidth}{!}{%
    \begin{tabular}{lllllllc}
    \toprule
    \bfseries Method
    & \multicolumn{1}{c}{\bfseries DTLZ1 (3)} 
    & \multicolumn{1}{c}{\bfseries DTLZ2 (3)} 
    & \multicolumn{1}{c}{\bfseries Inverted DTLZ1 (3)} 
    & \multicolumn{1}{c}{\bfseries Inverted DTLZ2 (3)} 
    & \multicolumn{1}{c}{\bfseries Convex DTLZ2 (3)} 
    & \multicolumn{1}{c}{\bfseries Scaled DTLZ2 (3)} 
    & {\bfseries Sum up} \\ 
    & \multicolumn{1}{c}{Mean (Std)}
    & \multicolumn{1}{c}{Mean (Std)}
    & \multicolumn{1}{c}{Mean (Std)}
    & \multicolumn{1}{c}{Mean (Std)}
    & \multicolumn{1}{c}{Mean (Std)}
    & \multicolumn{1}{c}{Mean (Std)}
    & \multicolumn{1}{c}{$+$/$\sim$/$-$}  \\ \midrule
    \bfseries Sobol & 5.3e+7 (3.3e+6)$^+$ & 3.3e--2 (2.1e--2)$^+$ & 4.4e+7 (5.7e+6)$^+$ & 1.1e--1 (2.5e--2)$^+$ & 1.9e--1 (1.1e--1)$^+$ & 3.7e--2 (1.8e--2)$^+$ & \bfseries 6/ 0/ 0\\
    \bfseries ParEGO & 5.6e+7 (1.5e+6)$^+$ & 1.6e--1 (7.0e--3)$^+$ & 5.5e+7 (4.2e+6)$^+$ & 3.1e--1 (3.7e--3)$^+$ & 7.2e--1 (5.1e--2)$^+$ & \best {1.6e--1} (\best {9.5e--3})$^-$ & \bfseries 5/ 0/ 1\\
    \bfseries TS-TCH & 5.3e+7 (3.1e+6)$^+$ & 5.0e--2 (3.1e--2)$^+$ & 4.6e+7 (5.2e+6)$^+$ & 2.0e--1 (1.2e--2)$^+$ & 3.7e--1 (8.4e--2)$^+$ & 4.8e--2 (2.5e--2)$^+$ & \bfseries 6/ 0/ 0\\
    \bfseries EHVI & 5.7e+7 (7.7e+5)$^+$ & 1.6e--1 (4.6e--3)$^+$ & 5.1e+7 (2.1e+6)$^+$ & 3.1e--1 (2.2e--3)$^+$ & 6.8e--1 (4.6e--2)$^+$ & 1.3e--1 (3.4e--2)$^-$ & \bfseries 5/ 0/ 1\\
    \bfseries C-EHVI & 5.7e+7 (8.3e+5)$^+$ & 1.3e--1 (1.3e--2)$^+$ & 5.1e+7 (2.4e+6)$^+$ & 3.0e--1 (1.4e--2)$^+$ & 4.5e--1 (1.4e--1)$^+$ & 1.4e--1 (1.7e--2)$^-$ & \bfseries 5/ 0/ 1\\
    \bfseries JES & 5.6e+7 (8.2e+5)$^+$ & 1.5e--1 (8.3e--3)$^+$ & 5.0e+7 (2.0e+6)$^+$ & 3.0e--1 (2.4e--3)$^+$ & 7.1e--1 (3.6e--2)$^+$ & 1.5e--1 (8.1e--3)$^-$ & \bfseries 5/ 0/ 1\\
    \bfseries SPMO & \best {5.8e+7} (\best {1.7e+6})  & \best {1.7e--1} (\best {5.3e--3})  & \best {5.9e+7} (\best {3.5e+6})  & \best {3.1e--1} (\best {6.4e--6})  & \best {7.9e--1} (\best {5.1e--3})  & 1.2e--1 (7.5e--3)  & \\

    \toprule
    \bfseries Method
    & \multicolumn{1}{c}{\bfseries DTLZ1 (10)} 
    & \multicolumn{1}{c}{\bfseries DTLZ2 (10)} 
    & \multicolumn{1}{c}{\bfseries Inverted DTLZ1 (10)} 
    & \multicolumn{1}{c}{\bfseries Inverted DTLZ2 (10)} 
    & \multicolumn{1}{c}{\bfseries Convex DTLZ2 (10)} 
    & \multicolumn{1}{c}{\bfseries Scaled DTLZ2 (10)} 
    & {\bfseries Sum up} \\ 
    & \multicolumn{1}{c }{Mean (Std)}
    & \multicolumn{1}{c }{Mean (Std)}
    & \multicolumn{1}{c }{Mean (Std)}
    & \multicolumn{1}{c }{Mean (Std)}
    & \multicolumn{1}{c }{Mean (Std)}
    & \multicolumn{1}{c }{Mean (Std)}
    & \multicolumn{1}{c}{$+$/$\sim$/$-$}  \\ \midrule

    \bfseries Sobol & 8.7e+25 (3.8e+24)$^+$ & 4.7e--2 (2.5e--2)$^+$ & 2.4e+25 (9.4e+24)$^+$ & 5.9e-11 (3.2e-10)$^+$ & 7.7e--1 (2.4e--1)$^+$ & 5.0e--2 (2.1e--2)$^+$ & \bfseries 6/ 0/ 0\\
    \bfseries ParEGO & 9.2e+25 (3.3e+24)$^+$ & 1.2e--1 (9.6e--2)$^+$ & 7.8e+25 (1.1e+25)$^\sim$ & 2.2e--5 (1.4e--5)$^+$ & 1.9e+0 (1.0e--1)$^+$ & 1.3e--1 (8.3e--2)$^+$ & \bfseries 5/ 1/ 0\\
    \bfseries TS-TCH & 8.8e+25 (5.1e+24)$^+$ & 4.3e--2 (4.0e--2)$^+$ & 2.2e+25 (1.3e+25)$^+$ & 1.6e--8 (2.6e--8)$^+$ & 1.0e+0 (2.5e--1)$^+$ & 4.1e--2 (2.5e--2)$^+$ & \bfseries 6/ 0/ 0\\
    \bfseries C-EHVI & 9.2e+25 (1.0e+24)$^+$ & 2.6e--1 (3.2e--2)$^+$ & 7.0e+25 (1.3e+25)$^+$ & 3.4e--5 (1.4e--5)$^+$ & 9.1e--1 (3.3e--1)$^+$ & 2.0e--1 (9.0e--2)$^+$ & \bfseries 6/ 0/ 0\\
    \bfseries JES & 9.3e+25 (1.9e+24)$^+$ & 9.9e--2 (7.2e--2)$^+$ & 4.9e+25 (1.8e+25)$^+$ & 2.1e--5 (1.1e--5)$^+$ & 1.8e+0 (1.3e--1)$^+$ & 1.3e--1 (9.7e--2)$^+$ & \bfseries 6/ 0/ 0\\
    \bfseries SPMO & \best {9.7e+25} (\best {3.6e+24})  & \best {2.9e--1} (\best {3.4e--2})  & \best {8.2e+25} (\best {9.3e+24})  & \best {1.3e--4} (\best {5.1e--5})  & \best {2.3e+0} (\best {3.5e--2})  & \best {2.6e--1} (\best {4.6e--2})  & \\
    \bottomrule

    \end{tabular}
    }
    \label{tbl:HV_SP_M3}
\end{table*}
\FloatBarrier 

\begin{table*}[!ht]
    \centering
    \caption{The HV of all the solutions obtained by the proposed SPMO and the peer methods on the noiseless problems with 3 objectives (\textbf{top}) and 10 objectives (\textbf{bottom}) on 30 independent runs, respectively. 
    The method with the best mean is highlighted in bold. The symbols ``$+$'', ``$\sim$'', and ``$-$'' indicate that a method is statistically worse than, equivalent to, and better than SPMO, respectively.}
    \resizebox{\textwidth}{!}{%
    \begin{tabular}{lllllllc}
    \toprule
    \bfseries Method
    & \multicolumn{1}{c}{\bfseries DTLZ1 (3)} 
    & \multicolumn{1}{c}{\bfseries DTLZ2 (3)} 
    & \multicolumn{1}{c}{\bfseries Inverted DTLZ1 (3)} 
    & \multicolumn{1}{c}{\bfseries Inverted DTLZ2 (3)} 
    & \multicolumn{1}{c}{\bfseries Convex DTLZ2 (3)} 
    & \multicolumn{1}{c}{\bfseries Scaled DTLZ2 (3)} 
    & {\bfseries Sum up} \\ 
    & \multicolumn{1}{c }{Mean (Std)}
    & \multicolumn{1}{c }{Mean (Std)}
    & \multicolumn{1}{c }{Mean (Std)}
    & \multicolumn{1}{c }{Mean (Std)}
    & \multicolumn{1}{c }{Mean (Std)}
    & \multicolumn{1}{c }{Mean (Std)}
    & \multicolumn{1}{c}{$+$/$\sim$/$-$}  \\ \midrule
    \bfseries Sobol & 6.3e+7 (2.8e+5)$^\sim$ & 4.9e--2 (2.7e--2)$^+$ & 5.2e+7 (2.9e+6)$^+$ & 2.1e--1 (2.6e--2)$^+$ & 2.5e--1 (1.3e--1)$^+$ & 6.2e--2 (2.5e--2)$^+$ & \bfseries 5/ 1/ 0\\
    \bfseries ParEGO & 6.3e+7 (1.1e+6)$^-$ & 5.4e--1 (5.2e--2)$^-$ & 5.8e+7 (2.6e+6)$^+$ & 6.7e--1 (1.1e--2)$^-$ & 1.0e+0 (9.0e--2)$^\sim$ & \best {5.4e--1} (\best {7.5e--2})$^-$ & \bfseries 1/ 1/ 4\\
    \bfseries TS-TCH & 6.2e+7 (7.8e+5)$^\sim$ & 7.5e--2 (4.2e--2)$^+$ & 5.2e+7 (3.3e+6)$^+$ & 4.1e--1 (1.2e--2)$^-$ & 5.4e--1 (1.1e--1)$^+$ & 8.8e--2 (4.6e--2)$^+$ & \bfseries 4/ 1/ 1\\
    \bfseries EHVI & 6.4e+7 (1.0e+5)$^-$ & \best {6.4e--1} (\best {2.2e--2})$^-$ & 5.8e+7 (2.4e+6)$^+$ & \best {7.0e--1} (\best {3.9e--3})$^-$ & \best {1.0e+0} (\best {6.9e--2})$^\sim$ & 2.6e--1 (7.7e--2)$^-$ & \bfseries 1/ 1/ 4\\
    \bfseries C-EHVI & 6.3e+7 (6.8e+5)$^-$ & 3.3e--1 (5.4e--2)$^+$ & 5.7e+7 (3.2e+6)$^+$ & 3.2e--1 (3.5e--2)$^+$ & 5.7e--1 (2.0e--1)$^+$ & 3.3e--1 (6.5e--2)$^-$ & \bfseries 4/ 0/ 2\\
    \bfseries JES & \best {6.4e+7} (\best {5.0e+4})$^-$ & 5.6e--1 (5.3e--2)$^-$ & 6.1e+7 (1.6e+6)$^\sim$ & 6.6e--1 (1.1e--2)$^-$ & 1.0e+0 (6.3e--2)$^\sim$ & 5.0e--1 (8.5e--2)$^-$ & \bfseries 0/ 2/ 4\\
    \bfseries SPMO & 6.2e+7 (1.0e+6)  & 4.6e--1 (6.3e--2)  & \best {6.1e+7} (\best {1.5e+6})  & 3.7e--1 (1.5e--2)  & 1.0e+0 (4.2e--2)  & 1.6e--1 (5.2e--2)  & \\
    \toprule
    \bfseries Method
    & \multicolumn{1}{c}{\bfseries DTLZ1 (10)} 
    & \multicolumn{1}{c}{\bfseries DTLZ2 (10)} 
    & \multicolumn{1}{c}{\bfseries Inverted DTLZ1 (10)} 
    & \multicolumn{1}{c}{\bfseries Inverted DTLZ2 (10)} 
    & \multicolumn{1}{c}{\bfseries Convex DTLZ2 (10)} 
    & \multicolumn{1}{c}{\bfseries Scaled DTLZ2 (10)} 
    & {\bfseries Sum up} \\ 
    & \multicolumn{1}{c }{Mean (Std)}
    & \multicolumn{1}{c }{Mean (Std)}
    & \multicolumn{1}{c }{Mean (Std)}
    & \multicolumn{1}{c }{Mean (Std)}
    & \multicolumn{1}{c }{Mean (Std)}
    & \multicolumn{1}{c }{Mean (Std)}
    & \multicolumn{1}{c}{$+$/$\sim$/$-$}  \\ \midrule

    \bfseries Sobol & 2.4e+28 (1.3e+28)$^-$ & 3.6e--1 (1.7e--1)$^+$ & 5.3e+25 (5.9e+25)$^+$ & 5.9e-11 (3.2e-10)$^+$ & 5.2e+0 (3.2e+0)$^+$ & 3.6e--1 (1.6e--1)$^+$ & \bfseries 5/ 0/ 1\\
    \bfseries ParEGO & 9.0e+26 (9.8e+26)$^+$ & 1.7e+0 (3.5e+0)$^+$ & 3.6e+27 (1.3e+28)$^+$ & 7.2e--4 (4.9e--4)$^+$ & 2.7e+2 (2.2e+2)$^+$ & 6.8e--1 (7.6e--1)$^+$ & \bfseries 6/ 0/ 0\\
    \bfseries TS-TCH & \best {2.9e+28} (\best {2.5e+28})$^-$ & 1.8e--1 (2.0e--1)$^+$ & 4.7e+25 (4.4e+25)$^+$ & 1.9e--8 (3.3e--8)$^+$ & 2.8e+1 (2.4e+1)$^+$ & 1.5e--1 (1.0e--1)$^+$ & \bfseries 5/ 0/ 1\\
    \bfseries C-EHVI & 1.0e+26 (1.6e+24)$^+$ & 5.2e--1 (1.2e--1)$^+$ & 7.1e+25 (1.3e+25)$^+$ & 6.6e--5 (2.6e--5)$^+$ & 1.3e+0 (4.0e--1)$^+$ & 3.8e--1 (2.2e--1)$^+$ & \bfseries 6/ 0/ 0\\
    \bfseries JES & 8.8e+26 (1.3e+27)$^+$ & 6.3e--1 (1.1e+0)$^+$ & 5.7e+27 (1.9e+28)$^+$ & 7.5e--4 (3.9e--4)$^+$ & 2.8e+2 (1.9e+2)$^+$ & 1.5e+0 (2.2e+0)$^+$ & \bfseries 6/ 0/ 0\\
    \bfseries SPMO & 1.2e+28 (1.5e+28)  & \best {5.2e+1} (\best {3.3e+1})  & \best {2.3e+28} (\best {5.3e+28})  & \best {1.4e--1} (\best {1.5e--1})  & \best {6.3e+3} (\best {8.0e+3})  & \best {5.2e+1} (\best {6.5e+1})  & \\
    \bottomrule
    
    \end{tabular}
    }
    \label{tbl:HV_M3}
\end{table*}

\FloatBarrier 

\begin{figure*}[!ht]
    \centering
    
    \begin{subfigure}[b]{0.6\textwidth}
    
    \begin{minipage}{\textwidth}
        \centering
        \includegraphics[width=\textwidth]{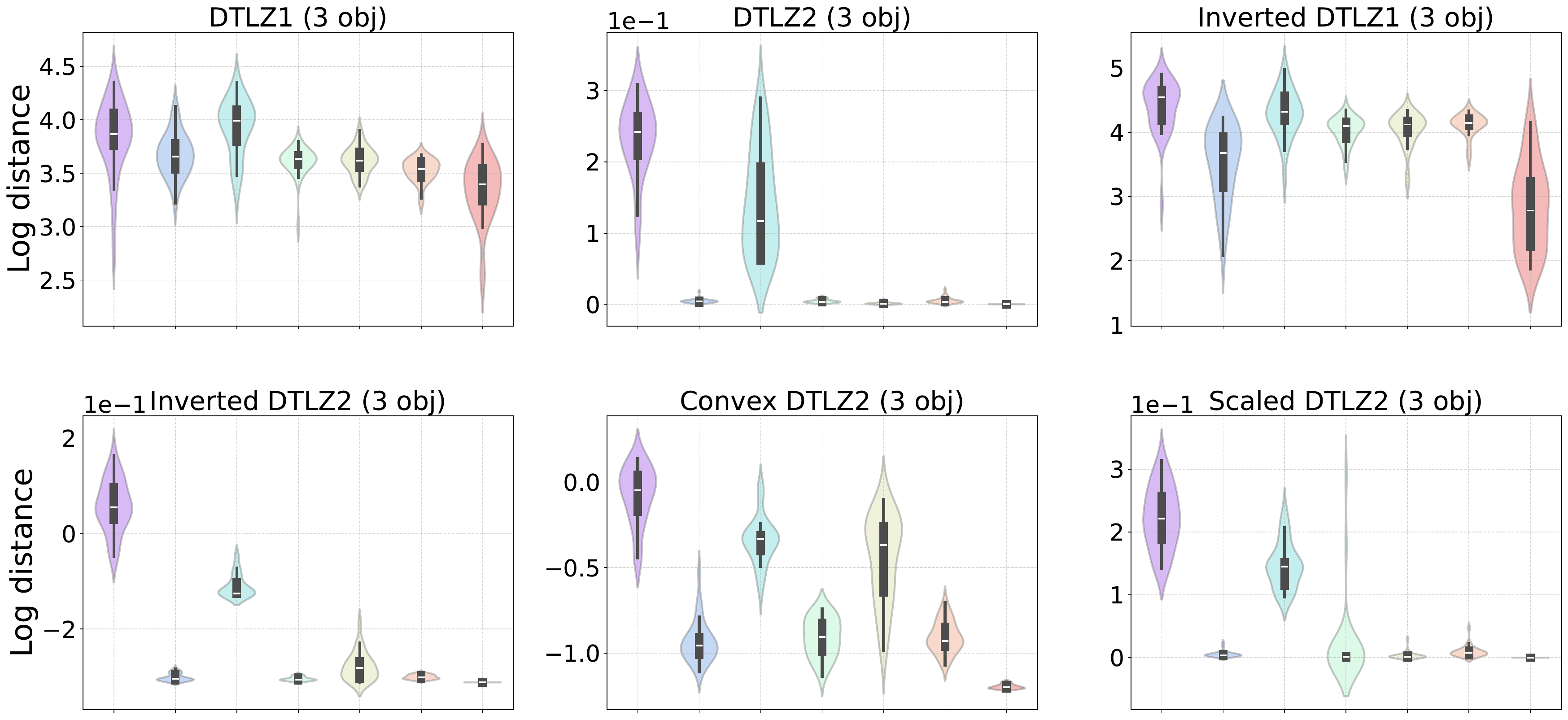}
    \end{minipage}

    \begin{minipage}{\textwidth}
        \centering
        \includegraphics[width=0.8\textwidth]{figures/distance/legend.pdf}
    \end{minipage}
    \caption{Violin plots of the distance-based metric (log distance) obtained by each method on problems with 3 objectives. }
    \end{subfigure}

    \begin{subfigure}[b]{0.6\textwidth}
    
    \begin{minipage}{\textwidth}
        \centering
        \includegraphics[width=\textwidth]{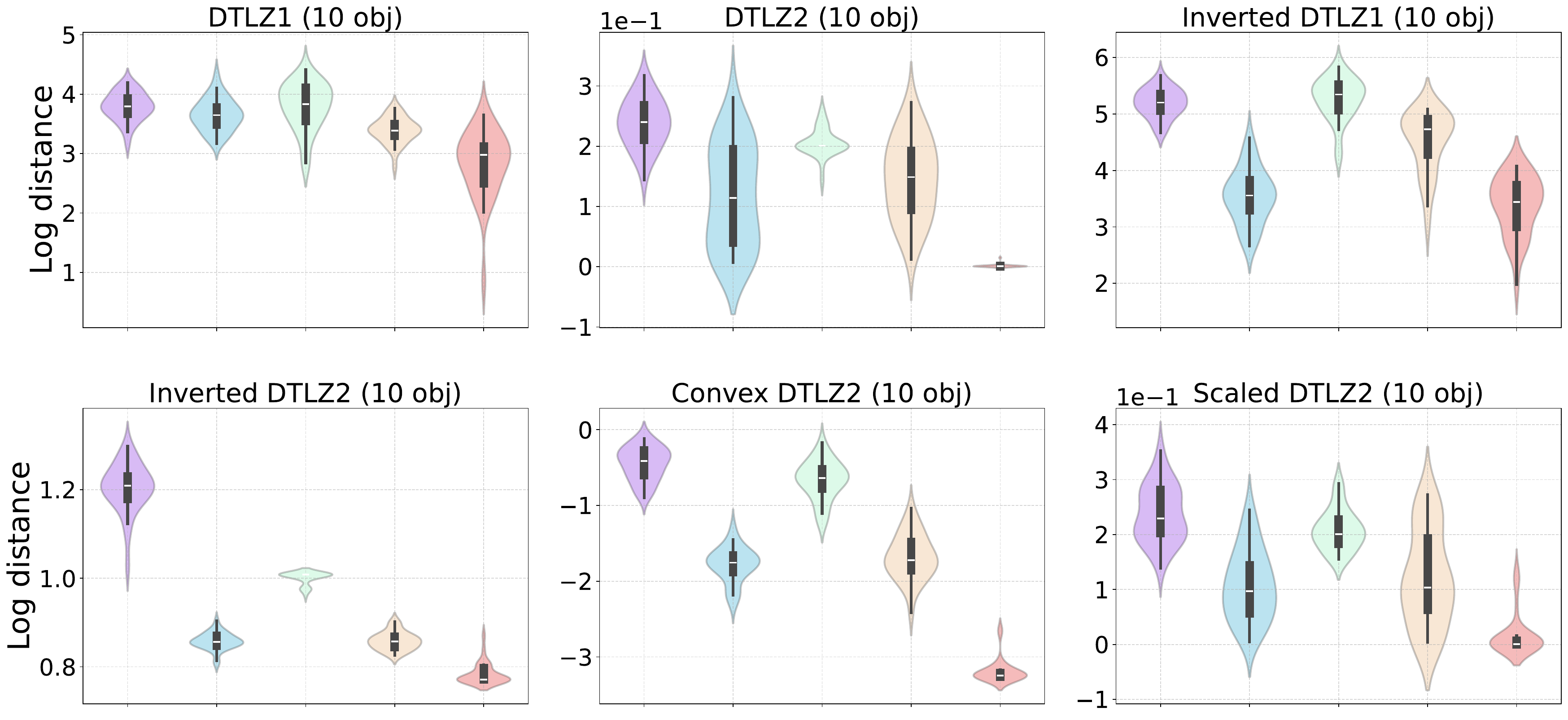}
    \end{minipage}

    \begin{minipage}{\textwidth}
        \centering
        \includegraphics[width=0.6\textwidth]{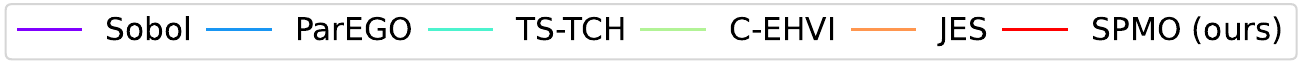}
    \end{minipage}
    \caption{Violin plots of the distance-based metric (log distance) obtained by each method on problems with 10 objectives. }
    \end{subfigure}
    
    \caption{Violin plots of the distance-based metric (log distance) obtained by the proposed SPMO and the peer methods on the noiseless problems with 3 and 10 objectives. 
    Each violin represents the distribution of the distance-based metric obtained by a method over 30 independent runs. 
    \label{fig:violin_dist_M3}
    }
\end{figure*}
\FloatBarrier

\begin{figure*}[!ht]
    \centering
    
    \begin{subfigure}[b]{\textwidth}
    
    \begin{minipage}{\textwidth}
        \centering
        \includegraphics[width=\textwidth]{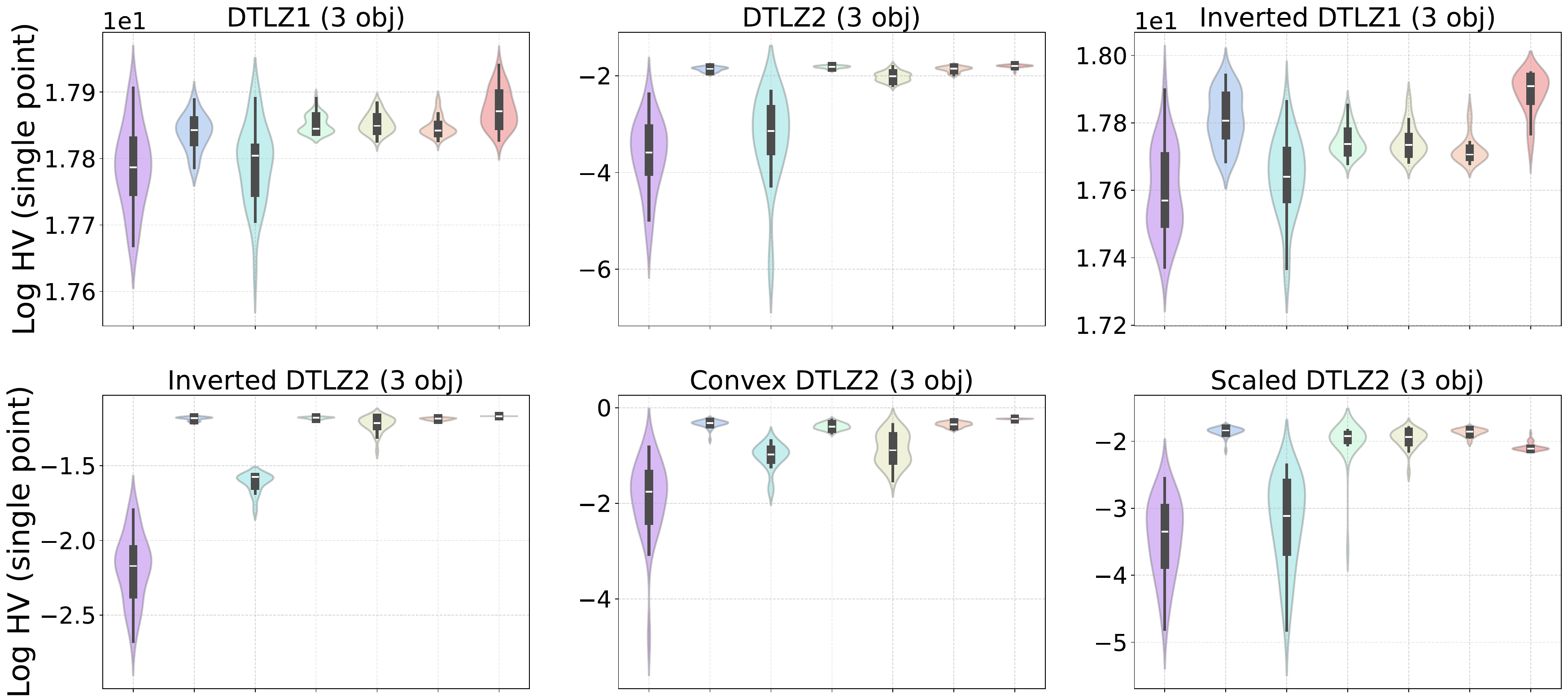}
    \end{minipage}

    \begin{minipage}{\textwidth}
        \centering
        \includegraphics[width=0.75\textwidth]{figures/distance/legend.pdf}
    \end{minipage}
    \caption{Violin plots of the HV of the best solution obtained by each method on problems with 3 objectives. }
    \end{subfigure}

    \begin{subfigure}[b]{\textwidth}
    
    \begin{minipage}{\textwidth}
        \centering
        \includegraphics[width=\textwidth]{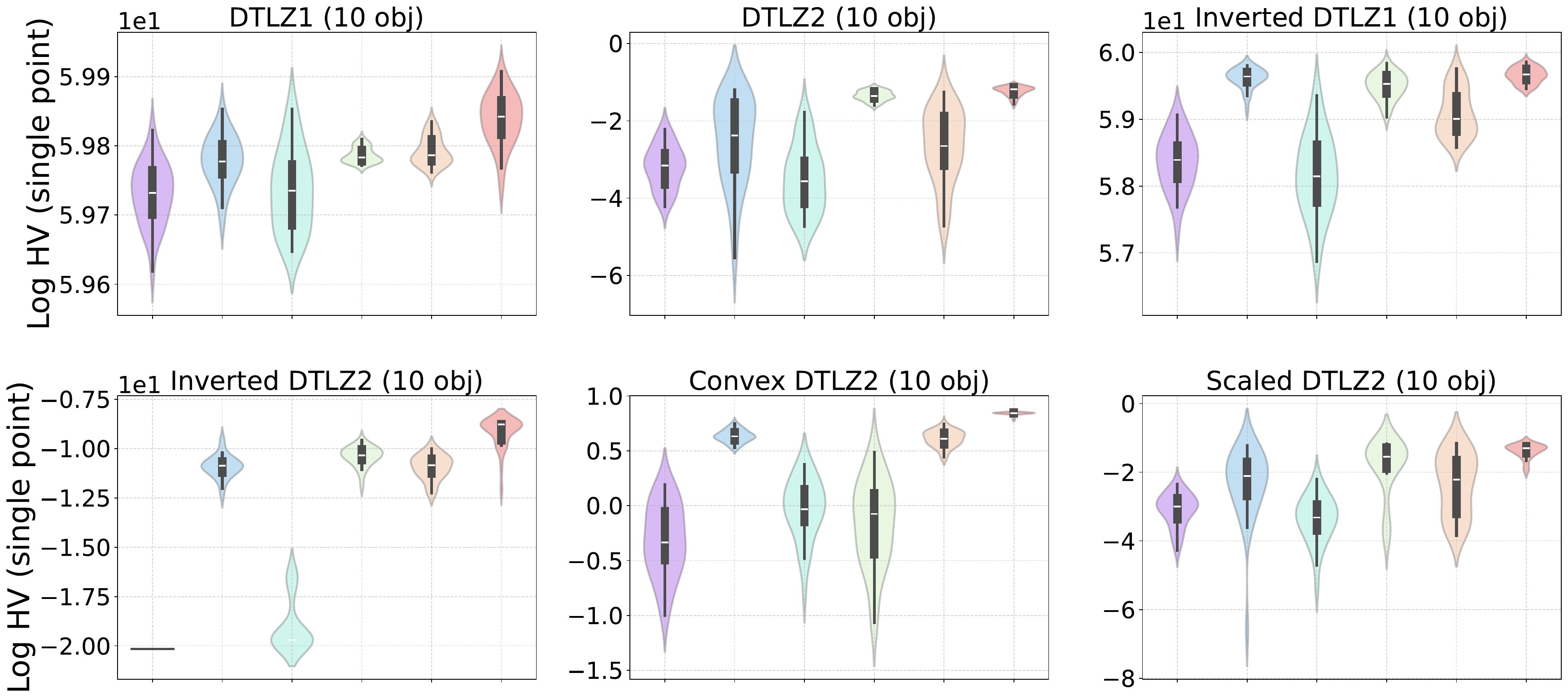}
    \end{minipage}

    \begin{minipage}{\textwidth}
        \centering
        \includegraphics[width=0.7\textwidth]{figures/distance/Legend_M10_False.pdf}
    \end{minipage}
    \caption{Violin plots of the HV of the best solution obtained by each method on problems with 10 objectives. }
    \end{subfigure}
    
    \caption{Violin plots of the HV of the best solution (in terms of its HV value) obtained by the proposed SPMO and the peer methods on the noiseless problems with 3 and 10 objectives. 
    Each violin represents the distribution of HV values obtained by a method over 30 independent runs. 
    \label{fig:violin_single_HV_M3}
    }
\end{figure*}
\FloatBarrier

\begin{figure*}[!ht]
    \centering

    \begin{subfigure}[b]{\textwidth}
        \begin{minipage}{\textwidth}
            \centering
            \includegraphics[width=\textwidth]{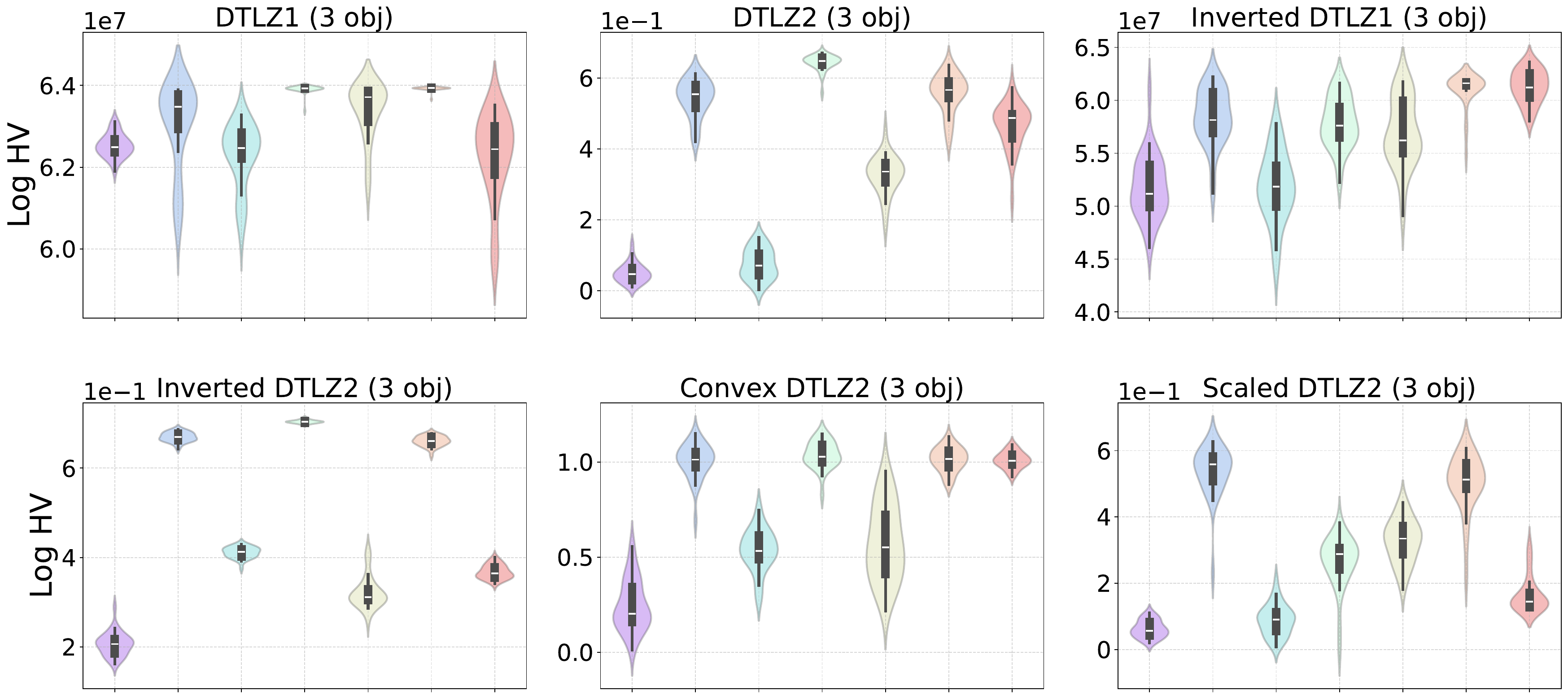}
        \end{minipage}

        \begin{minipage}{\textwidth}
            \centering
            \includegraphics[width=0.8\textwidth]{figures/distance/legend.pdf}
        \end{minipage}
        \caption{Violin plots of the HV of all evaluated solutions obtained by each method on problems with 3 objectives.}
    \end{subfigure}

    \begin{subfigure}[b]{\textwidth}
        \begin{minipage}{\textwidth}
            \centering
            \includegraphics[width=\textwidth]{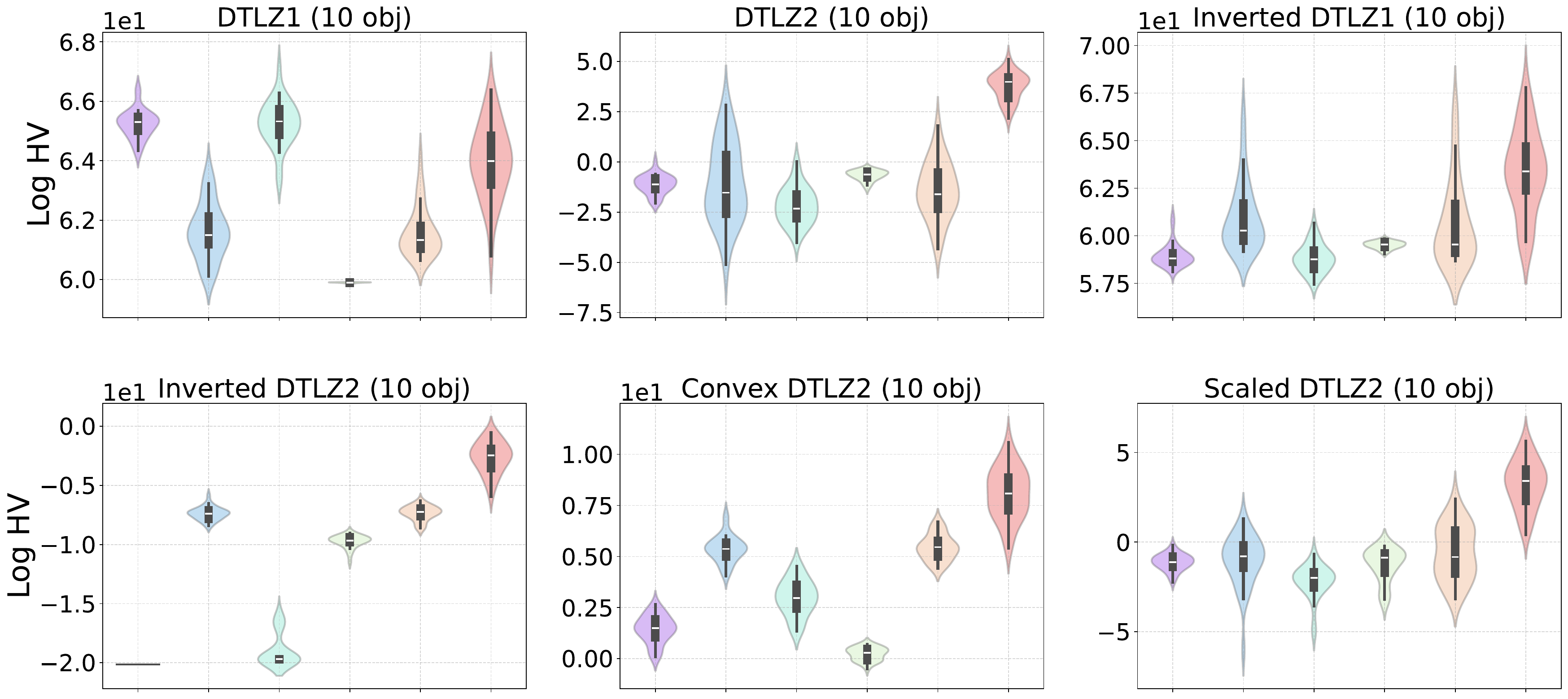}
        \end{minipage}

        \begin{minipage}{\textwidth}
            \centering
            \includegraphics[width=0.7\textwidth]{figures/distance/Legend_M10_False.pdf}
        \end{minipage}
        \caption{Violin plots of the HV of all evaluated solutions obtained by each method on problems with 10 objectives.}
    \end{subfigure}
    
    \caption{Violin plots of the HV of all evaluated solutions obtained by the proposed SPMO and the peer methods on the noiseless problems with 3 objectives (\textbf{top}) and 10 objectives (\textbf{bottom}), respectively. 
    Each violin represents the distribution of HV values obtained by a method over 30 independent runs. 
    \label{fig:violin_all_HV_M3}
    }
\end{figure*}
\FloatBarrier

\begin{figure*}[!ht]
    \centering
    
    \begin{subfigure}[b]{\textwidth}
    
    \begin{minipage}{\textwidth}
        \centering
        \includegraphics[width=\textwidth]{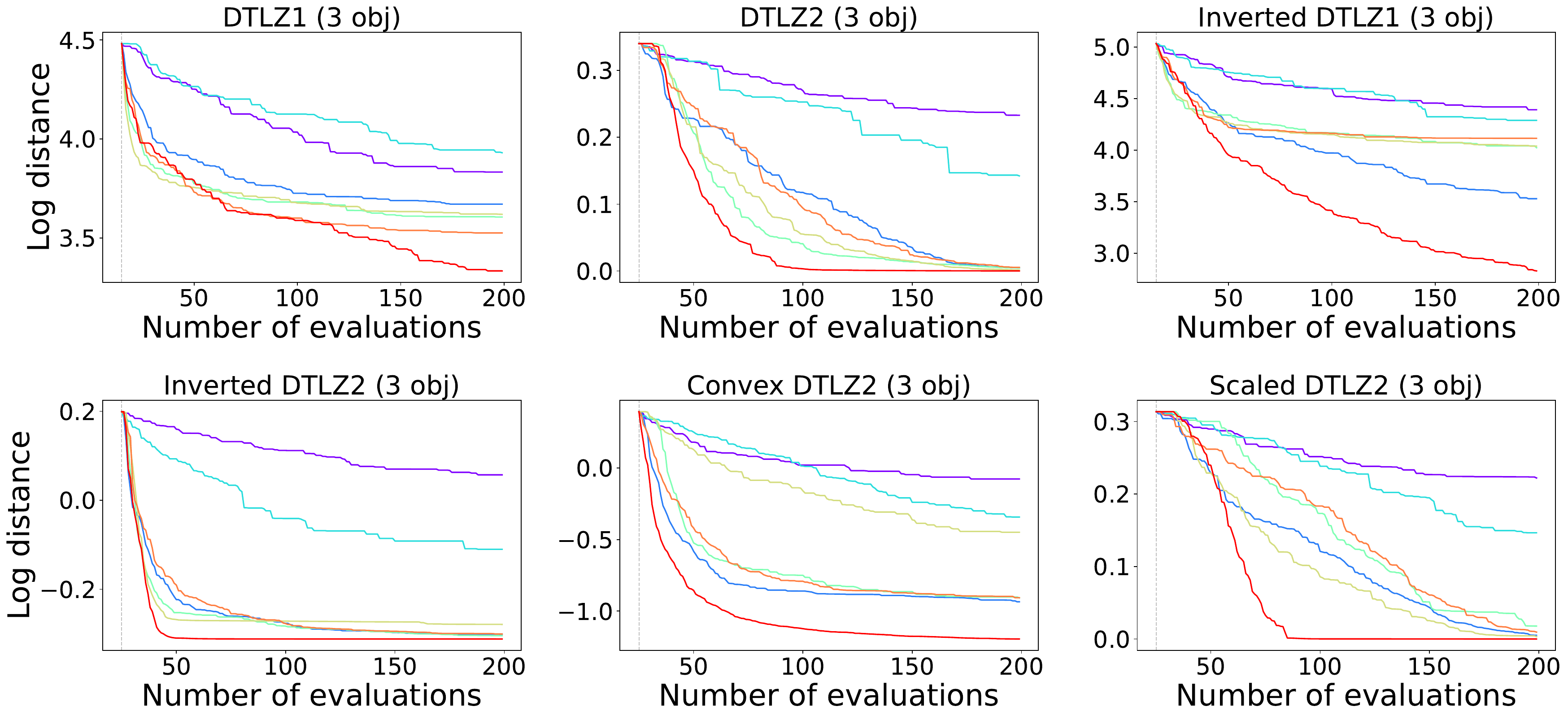}
    \end{minipage}
    \begin{minipage}{\textwidth}
        \centering
        \includegraphics[width=0.8\textwidth]{figures/distance/legend.pdf}
    \end{minipage}
    \caption{Trajectories of the distance metric on the problems with 3 objectives.}
    \end{subfigure}

    \begin{subfigure}[b]{\textwidth}
    
    \begin{minipage}{\textwidth}
        \centering
        \includegraphics[width=\textwidth]{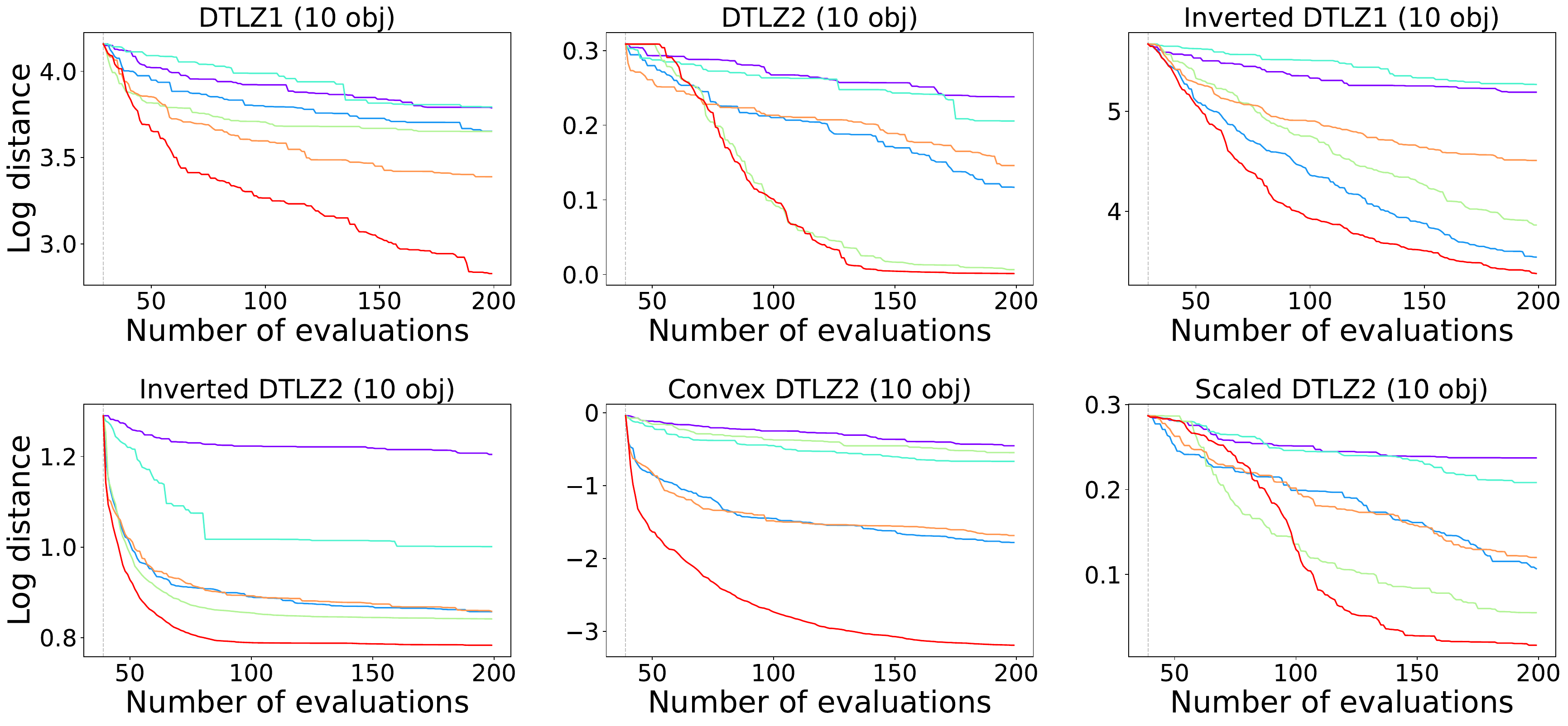}
    \end{minipage}
    
    \begin{minipage}{\textwidth}
        \centering
        \includegraphics[width=0.7\textwidth]{figures/distance/Legend_M10_False.pdf}
    \end{minipage}
    \caption{Trajectories of the distance metric on the problems with 10 objectives.}
    \end{subfigure}

    \caption{Trajectories of the distance metric obtained by the SPMO and the peer methods on the noiseless problems with 3 and 10 objectives. Each coloured line represents the mean distance of the closest solution to the utopian point on 30 independent runs (after the initial Sobol samples, represented by the dashed grey line). 
    }

    \label{fig:convergence_distance_M3}
\end{figure*}

\FloatBarrier

\newpage
\begin{table*}[!ht]
    \centering
    \caption{Results of the distance-based metric (log distance) obtained by obtained by the SPMO and the peer methods on DTLZ3--DTLZ7 with 5 objectives on 30 independent runs. The method with the best mean is highlighted in bold. The symbols ``$+$'', ``$\sim$'' and ``$-$'' indicate that the method is statistically worse than, equivalent to and better than our SPMO, respectively.}
    \resizebox{\textwidth}{!}{%
    \begin{tabular}{llllllc}
    \toprule
    \bfseries Method
    & \multicolumn{1}{c}{\bfseries DTLZ3} 
    & \multicolumn{1}{c}{\bfseries DTLZ4} 
    & \multicolumn{1}{c}{\bfseries DTLZ5} 
    & \multicolumn{1}{c}{\bfseries DTLZ6} 
    & \multicolumn{1}{c}{\bfseries DTLZ7} 
    & {\bfseries Sum up} \\ 
    & \multicolumn{1}{c}{Mean (Std)}
    & \multicolumn{1}{c}{Mean (Std)}
    & \multicolumn{1}{c}{Mean (Std)}
    & \multicolumn{1}{c}{Mean (Std)}
    & \multicolumn{1}{c}{Mean (Std)}
    & \multicolumn{1}{c}{$+$/$\sim$/$-$}  \\ \midrule
    \bfseries Sobol & 6.3e+0 (1.6e--1)$^+$ & 2.7e--1 (6.0e--2)$^+$ & 2.5e--1 (4.1e--2)$^+$ & 2.2e+0 (2.1e--2)$^+$ & 3.1e+0 (5.0e--2)$^+$ & \bfseries 5/ 0/ 0\\
    \bfseries ParEGO & 5.5e+0 (1.1e--1)$^\sim$ & 2.3e--1 (7.1e--2)$^+$ & 4.0e--2 (3.1e--2)$^+$ & 4.3e--2 (1.6e--1)$^+$ & 2.2e+0 (8.0e--2)$^+$ & \bfseries 4/ 1/ 0\\
    \bfseries TS-TCH & 6.3e+0 (1.6e--1)$^+$ & 2.6e--1 (5.0e--2)$^+$ & 2.3e--1 (5.9e--2)$^+$ & 2.2e+0 (1.6e--2)$^+$ & 3.1e+0 (4.8e--2)$^+$ & \bfseries 5/ 0/ 0\\
    \bfseries EHVI & \best {5.4e+0} (\best {4.6e--2})$^-$ & 8.7e--2 (9.4e--2)$^+$ & 3.0e--1 (6.0e--2)$^+$ & -9.4e--7 (8.4e--7)$^+$ & 1.7e+0 (8.5e--2)$^+$ & \bfseries 4/ 0/ 1\\
    \bfseries JES & 5.5e+0 (5.0e--2)$^\sim$ & 2.3e--1 (8.1e--2)$^+$ & 5.2e--2 (5.8e--2)$^+$ & 2.1e--2 (1.1e--1)$^+$ & 2.2e+0 (5.8e--2)$^+$ & \bfseries 4/ 1/ 0\\
    \bfseries SPMO & 5.5e+0 (1.8e--1)  & \best {1.2e--2} (\best {1.9e--2})  & \best {3.2e--3} (\best {5.0e--3})  & \best {-1.9e--6} (\best {5.6e--7})  & \best {1.7e+0} (\best {1.1e--1})  & \\
    \bottomrule
    \end{tabular}
    }
    \vspace*{0.1mm}
    \label{tbl:Dist_M5_dtlz3_7}
\end{table*}

\FloatBarrier

    
    


\begin{table*}[!ht]
    \centering
    \caption{The HV of the best solution (in terms of its HV value) obtained by SPMO and the peer methods on the DTLZ3--DTLZ7 with 5 objectives on 30 independent runs. 
    The method with the best mean is highlighted in bold. The symbols ``$+$'', ``$\sim$'' and ``$-$'' indicate that the method is statistically worse than, equivalent to and better than our SPMO, respectively.
    } 
    \resizebox{\textwidth}{!}{%
    \begin{tabular}{llllllc}
    \toprule
    \bfseries Method
    & \multicolumn{1}{c}{\bfseries DTLZ3} 
    & \multicolumn{1}{c}{\bfseries DTLZ4} 
    & \multicolumn{1}{c}{\bfseries DTLZ5} 
    & \multicolumn{1}{c}{\bfseries DTLZ6} 
    & \multicolumn{1}{c}{\bfseries DTLZ7} 
    & {\bfseries Sum up} \\ 
    & \multicolumn{1}{c }{Mean (Std)}
    & \multicolumn{1}{c }{Mean (Std)}
    & \multicolumn{1}{c }{Mean (Std)}
    & \multicolumn{1}{c }{Mean (Std)}
    & \multicolumn{1}{c }{Mean (Std)}
    & \multicolumn{1}{c}{$+$/$\sim$/$-$}  \\ \midrule
    \bfseries Sobol & 9.2e+19 (1.1e+18)$^+$ & 2.2e--2 (1.4e--2)$^+$ & 8.4e+4 (9.5e+2)$^\sim$ & 8.7e+3 (8.2e+2)$^+$ & -0.0e+0 (0.0e+0)$^+$ & \bfseries 4/ 1/ 0\\
    \bfseries ParEGO & 9.7e+19 (5.2e+17)$^\sim$ & 2.5e--2 (3.0e--2)$^+$ & \best {8.9e+4} (\best {1.4e+3})$^-$ & 8.8e+4 (3.8e+3)$^-$ & 2.8e+5 (2.7e+4)$^+$ & \bfseries 2/ 1/ 2\\
    \bfseries TS-TCH & 9.3e+19 (1.0e+18)$^+$ & 1.3e--2 (1.1e--2)$^+$ & 8.6e+4 (1.1e+3)$^-$ & 9.3e+3 (9.9e+2)$^+$ & -0.0e+0 (0.0e+0)$^+$ & \bfseries 4/ 0/ 1\\
    \bfseries EHVI & \best {9.8e+19} (\best {1.1e+17})$^-$ & 1.3e--1 (5.6e--2)$^+$ & 8.3e+4 (1.6e+3)$^+$ & \best {9.0e+4} (\best {1.3e--2})$^-$ & 3.8e+5 (2.2e+4)$^\sim$ & \bfseries 2/ 1/ 2\\
    \bfseries JES & 9.8e+19 (1.4e+17)$^-$ & 3.6e--2 (3.5e--2)$^+$ & 8.9e+4 (1.5e+3)$^-$ & 8.8e+4 (3.5e+3)$^-$ & 2.8e+5 (2.0e+4)$^+$ & \bfseries 2/ 0/ 3\\
    \bfseries SPMO & 9.7e+19 (1.0e+18)  & \best {1.6e--1} (\best {2.5e--2})  & 8.5e+4 (2.6e+3)  & 8.3e+4 (2.9e+3)  & \best {3.8e+5} (\best {2.8e+4})  & \\
    \bottomrule
    \end{tabular}
    }
    \vspace*{0.1mm}
    \label{tbl:HV_SP_M5_dtlz3_7}
\end{table*}

\FloatBarrier 

\begin{table*}[!ht]
    \centering
    \caption{The HV of all evaluated solutions obtained by SPMO and the peer methods on DTLZ3--DTLZ7 with 5 objectives on 30 independent runs. 
    The method with the best mean is highlighted in bold. The symbols ``$+$'', ``$\sim$'' and ``$-$'' indicate that the method is statistically worse than, equivalent to and better than our SPMO, respectively.
    } 
    \resizebox{\textwidth}{!}{%
    \begin{tabular}{llllllllc}
    \toprule
    \bfseries Method
    & \multicolumn{1}{c}{\bfseries DTLZ3} 
    & \multicolumn{1}{c}{\bfseries DTLZ4} 
    & \multicolumn{1}{c}{\bfseries DTLZ5} 
    & \multicolumn{1}{c}{\bfseries DTLZ6} 
    & \multicolumn{1}{c}{\bfseries DTLZ7} 
    & {\bfseries Sum up} \\ 
    & \multicolumn{1}{c }{Mean (Std)}
    & \multicolumn{1}{c }{Mean (Std)}
    & \multicolumn{1}{c }{Mean (Std)}
    & \multicolumn{1}{c }{Mean (Std)}
    & \multicolumn{1}{c }{Mean (Std)}
    & \multicolumn{1}{c}{$+$/$\sim$/$-$}  \\ \midrule

    \bfseries Sobol & 3.9e+20 (1.7e+20)$^\sim$ & 4.9e--2 (2.4e--2)$^+$ & 2.1e+5 (1.1e+5)$^\sim$ & 1.4e+5 (1.7e+4)$^+$ & -0.0e+0 (0.0e+0)$^+$ & \bfseries 3/ 2/ 0\\

    \bfseries ParEGO & 2.1e+20 (2.1e+20)$^\sim$ & 5.3e--2 (7.4e--2)$^+$ & 2.4e+5 (8.7e+4)$^\sim$ & 3.9e+5 (1.6e+5)$^-$ & 9.1e+5 (4.1e+5)$^+$ & \bfseries 2/ 2/ 1\\
    \bfseries TS-TCH & \best {5.0e+20} (\best {3.0e+20})$^-$ & 2.1e--2 (1.8e--2)$^+$ & 1.8e+5 (8.2e+4)$^+$ & 1.8e+5 (4.2e+4)$^+$ & -0.0e+0 (0.0e+0)$^+$ & \bfseries 4/ 0/ 1\\
    \bfseries EHVI & 1.7e+20 (1.5e+20)$^\sim$ & 9.3e--1 (7.1e--1)$^\sim$ & 3.5e+5 (1.8e+5)$^-$ & \best {4.2e+5} (\best {1.2e+5})$^-$ & 1.9e+6 (5.3e+5)$^\sim$ & \bfseries 0/ 3/ 2\\
    \bfseries JES & 2.7e+20 (2.7e+20)$^\sim$ & 8.8e--2 (1.3e--1)$^+$ & \best {3.6e+5} (\best {3.5e+5})$^\sim$ & 4.0e+5 (1.6e+5)$^-$ & 1.0e+6 (4.3e+5)$^+$ & \bfseries 2/ 2/ 1\\
    \bfseries SPMO & 4.2e+20 (7.0e+20)  & \best {1.1e+0} (\best {9.3e--1})  & 2.5e+5 (1.4e+5)  & 3.1e+5 (1.9e+5)  & \best {2.3e+6} (\best {1.3e+6})  & \\
    \bottomrule
    \end{tabular}
    }
    \vspace*{0.1mm}
    \label{tbl:HV_M5_dtlz3_7}
\end{table*}

    

    

\newpage
\subsection{Noisy Cases}\label{appendix:sec:noisy}

In this section, we present the results on the noisy problems.  
Tables~\ref{tbl:Dist_M5_noisy},~\ref{tbl:HV_SP_M5_noisy} and~\ref{tbl:HV_M5_noisy} show the distance-based metric (log distance), the HV of the best solution (in terms of its HV value) and the HV of all evaluated solutions obtained by the SPMO and the peer methods, respectively. 
Figures~\ref{fig:violin_dist_M5_noisy},~\ref{fig:violin_single_HV_M5_noisy} and~\ref{fig:violin_all_HV_M5_noisy} present the violin plots, illustrating the distributions of the corresponding results reported in Tables~\ref{tbl:Dist_M5_noisy},~\ref{tbl:HV_SP_M5_noisy} and~\ref{tbl:HV_M5_noisy}, respectively.

\begin{table*}[!ht]
    \centering
    \caption{Results of the distance-based metric (log distance) obtained by the SPMO and the peer methods on the noisy problems with 5 objectives on 30 independent runs. 
    The method with the best mean is highlighted in bold. The symbols ``$+$'', ``$\sim$'' and ``$-$'' indicate that the method is statistically worse than, equivalent to and better than our SPMO, respectively.}
    \resizebox{\textwidth}{!}{%
    \begin{tabular}{lllllllc}
    \toprule
    \bfseries Method
    & \multicolumn{1}{c}{\bfseries DTLZ1} 
    & \multicolumn{1}{c}{\bfseries DTLZ2} 
    & \multicolumn{1}{c}{\bfseries Inverted DTLZ1} 
    & \multicolumn{1}{c}{\bfseries Inverted DTLZ2} 
    & \multicolumn{1}{c}{\bfseries Convex DTLZ2} 
    & \multicolumn{1}{c}{\bfseries Scaled DTLZ2} 
    & {\bfseries Sum up} \\ 
    & \multicolumn{1}{c}{Mean (Std)}
    & \multicolumn{1}{c}{Mean (Std)}
    & \multicolumn{1}{c}{Mean (Std)}
    & \multicolumn{1}{c}{Mean (Std)}
    & \multicolumn{1}{c}{Mean (Std)}
    & \multicolumn{1}{c}{Mean (Std)}
    & \multicolumn{1}{c}{$+$/$\sim$/$-$}  \\ \midrule

    \bfseries Sobol & 3.7e+0 (4.2e--1)$^+$ & 1.9e--1 (5.6e--2)$^+$ & 4.7e+0 (3.4e--1)$^+$ & 6.1e--1 (6.1e--2)$^+$ & -2.9e--1 (2.2e--1)$^+$ & 2.2e--1 (4.4e--2)$^+$ & \bfseries 6/ 0/ 0\\
    \bfseries NParEGO & 3.6e+0 (3.1e--1)$^+$ & 1.0e--1 (8.2e--2)$^+$ & 3.1e+0 (3.8e--1)$^+$ & 1.8e--1 (5.0e--2)$^+$ & -1.3e+0 (2.7e--1)$^+$ & 5.2e--2 (5.9e--2)$^+$ & \bfseries 6/ 0/ 0\\
    \bfseries TS-TCH & 3.8e+0 (3.5e--1)$^+$ & 2.0e--1 (4.2e--2)$^+$ & 4.9e+0 (3.2e--1)$^+$ & 4.6e--1 (5.5e--2)$^+$ & -4.7e--1 (2.3e--1)$^+$ & 2.1e--1 (5.0e--2)$^+$ & \bfseries 6/ 0/ 0\\
    \bfseries NEHVI & 3.5e+0 (1.4e--1)$^+$ & -1.1e--1 (8.1e--2)$^+$ & 4.0e+0 (5.8e--1)$^+$ & 1.5e--1 (3.3e--2)$^+$ & -1.3e+0 (3.0e--1)$^+$ & 2.9e--1 (9.5e--2)$^+$ & \bfseries 6/ 0/ 0\\
    \bfseries JES & 3.4e+0 (1.2e--1)$^+$ & 1.3e--1 (1.2e--1)$^+$ & 4.5e+0 (1.1e--1)$^+$ & 1.9e--1 (6.5e--2)$^+$ & -6.9e--1 (3.5e--1)$^+$ & 1.0e--1 (8.7e--2)$^+$ & \bfseries 6/ 0/ 0\\
    \bfseries SPMO & \best {3.0e+0} (\best {6.0e--1})  & \best {-1.8e--1} (\best {5.8e--2})  & \best {2.9e+0} (\best {4.8e--1})  & \best {5.8e--2} (\best {6.8e--2})  & \best {-2.1e+0} (\best {3.3e--1})  & \best {-1.9e--1} (\best {1.8e--1})  & \\
    \bottomrule
    \end{tabular}
    }
    \vspace*{0.1mm}
    \label{tbl:Dist_M5_noisy}
\end{table*}

    
    



\begin{table*}[!ht]
    \centering
    \caption{The HV of the best solution (in terms of its HV value) obtained by SPMO and the peer methods on the noisy problems with 5 objectives on 30 independent runs. 
    The method with the best mean is highlighted in bold. The symbols ``$+$'', ``$\sim$'' and ``$-$'' indicate that the method is statistically worse than, equivalent to and better than our SPMO, respectively. 
    } 
    \resizebox{\textwidth}{!}{%
    \begin{tabular}{lllllllc}
    \toprule
    \bfseries Method
    & \multicolumn{1}{c}{\bfseries DTLZ1} 
    & \multicolumn{1}{c}{\bfseries DTLZ2} 
    & \multicolumn{1}{c}{\bfseries Inverted DTLZ1} 
    & \multicolumn{1}{c}{\bfseries Inverted DTLZ2} 
    & \multicolumn{1}{c}{\bfseries Convex DTLZ2} 
    & \multicolumn{1}{c}{\bfseries Scaled DTLZ2} 
    & {\bfseries Sum up} \\ 
    & \multicolumn{1}{c}{Mean (Std)}
    & \multicolumn{1}{c}{Mean (Std)}
    & \multicolumn{1}{c}{Mean (Std)}
    & \multicolumn{1}{c}{Mean (Std)}
    & \multicolumn{1}{c}{Mean (Std)}
    & \multicolumn{1}{c}{Mean (Std)}
    & \multicolumn{1}{c}{$+$/$\sim$/$-$}  \\ \midrule
    \bfseries Sobol & 8.6e+12 (5.2e+11)$^+$ & 5.1e--2 (3.0e--2)$^+$ & 5.4e+12 (1.2e+12)$^+$ & 8.2e--4 (1.5e--3)$^+$ & 4.0e--1 (1.8e--1)$^+$ & 3.8e--2 (1.9e--2)$^+$ & \bfseries 6/ 0/ 0\\
    \bfseries NParEGO & 9.1e+12 (2.5e+11)$^+$ & 1.0e--1 (4.9e--2)$^+$ & 9.0e+12 (4.1e+11)$^+$ & 5.8e--2 (1.6e--2)$^+$ & 1.2e+0 (2.0e--1)$^+$ & 1.3e--1 (4.0e--2)$^+$ & \bfseries 6/ 0/ 0\\
    \bfseries TS-TCH & 8.5e+12 (5.1e+11)$^+$ & 5.0e--2 (1.4e--2)$^+$ & 4.8e+12 (1.2e+12)$^+$ & 8.1e--3 (5.4e--3)$^+$ & 5.3e--1 (1.7e--1)$^+$ & 3.1e--2 (1.6e--2)$^+$ & \bfseries 6/ 0/ 0\\
    \bfseries NEHVI & 9.1e+12 (1.5e+11)$^+$ & 2.8e--1 (7.6e--2)$^+$ & 7.4e+12 (1.2e+12)$^+$ & 6.7e--2 (1.0e--2)$^+$ & 1.2e+0 (2.1e--1)$^+$ & 1.2e--2 (1.0e--2)$^+$ & \bfseries 6/ 0/ 0\\
    \bfseries JES & 9.0e+12 (1.3e+11)$^+$ & 8.7e--2 (7.2e--2)$^+$ & 6.1e+12 (3.2e+11)$^+$ & 5.8e--2 (2.0e--2)$^+$ & 7.8e--1 (3.1e--1)$^+$ & 1.0e--1 (5.7e--2)$^+$ & \bfseries 6/ 0/ 0\\
    \bfseries SPMO & \best {9.4e+12} (\best {4.4e+11})  & \best {3.2e--1} (\best {4.9e--2})  & \best {9.2e+12} (\best {4.2e+11})  & \best {1.0e--1} (\best {2.7e--2})  & \best {1.8e+0} (\best {2.2e--1})  & \best {3.8e--1} (\best {1.6e--1})  & \\
    \bottomrule
    \end{tabular}
    }
    \vspace*{0.1mm}
    \label{tbl:HV_SP_M5_noisy}
\end{table*}
\FloatBarrier

\begin{table*}[!ht]
    \centering
    \caption{The HV of all the solutions obtained by the proposed SPMO and the peer methods on the noisy problems with five objectives on 30 independent runs. 
    The method with the best mean is highlighted in bold. The symbols ``$+$'', ``$\sim$'', and ``$-$'' indicate that a method is statistically worse than, equivalent to, and better than SPMO, respectively.}
    \resizebox{\textwidth}{!}{%
    \begin{tabular}{lllllllc}
    \toprule
    \bfseries Method
    & \multicolumn{1}{c}{\bfseries DTLZ1} 
    & \multicolumn{1}{c}{\bfseries DTLZ2} 
    & \multicolumn{1}{c}{\bfseries Inverted DTLZ1} 
    & \multicolumn{1}{c}{\bfseries Inverted DTLZ2} 
    & \multicolumn{1}{c}{\bfseries Convex DTLZ2} 
    & \multicolumn{1}{c}{\bfseries Scaled DTLZ2} 
    & {\bfseries Sum up} \\ 
    & \multicolumn{1}{c }{Mean (Std)}
    & \multicolumn{1}{c }{Mean (Std)}
    & \multicolumn{1}{c }{Mean (Std)}
    & \multicolumn{1}{c }{Mean (Std)}
    & \multicolumn{1}{c }{Mean (Std)}
    & \multicolumn{1}{c }{Mean (Std)}
    & \multicolumn{1}{c}{$+$/$\sim$/$-$}  \\ \midrule
    \bfseries Sobol & 4.5e+13 (1.4e+13)$^+$ & 2.1e--1 (8.4e--2)$^+$ & 8.2e+12 (4.7e+12)$^+$ & 8.5e--4 (1.5e--3)$^+$ & 9.3e--1 (5.1e--1)$^+$ & 1.6e--1 (5.7e--2)$^+$ & \bfseries 6/ 0/ 0\\
    \bfseries NParEGO & 5.8e+13 (2.3e+13)$^\sim$ & 4.3e--1 (3.9e--1)$^+$ & 1.7e+13 (9.6e+12)$^+$ & 5.1e--1 (1.3e--1)$^+$ & 1.1e+1 (4.8e+0)$^+$ & 7.7e--1 (6.5e--1)$^+$ & \bfseries 5/ 1/ 0\\
    \bfseries TS-TCH & 5.1e+13 (1.3e+13)$^\sim$ & 1.9e--1 (8.3e--2)$^+$ & 6.9e+12 (2.6e+12)$^+$ & 3.2e--2 (1.8e--2)$^+$ & 2.2e+0 (9.8e--1)$^+$ & 9.0e--2 (6.5e--2)$^+$ & \bfseries 5/ 1/ 0\\
    \bfseries NEHVI & 7.2e+13 (2.1e+13)$^\sim$ & \best {3.1e+0} (\best {8.9e--1})$^-$ & 1.5e+13 (1.7e+13)$^+$ & 7.7e--1 (2.0e--1)$^+$ & 9.5e+0 (3.8e+0)$^+$ & 1.5e--2 (1.3e--2)$^+$ & \bfseries 4/ 1/ 1\\
    \bfseries JES & \best {7.5e+13} (\best {2.3e+13})$^\sim$ & 4.3e--1 (5.0e--1)$^+$ & 3.5e+13 (2.0e+13)$^\sim$ & 5.7e--1 (1.5e--1)$^+$ & 4.6e+0 (2.8e+0)$^+$ & 4.3e--1 (3.3e--1)$^+$ & \bfseries 4/ 2/ 0\\
    \bfseries SPMO & 6.2e+13 (2.9e+13)  & 2.0e+0 (5.6e--1)  & \best {4.4e+13} (\best {3.6e+13})  & \best {9.6e--1} (\best {3.1e--1})  & \best {1.4e+1} (\best {4.4e+0})  & \best {2.9e+0} (\best {1.8e+0})  & \\
    \bottomrule
    \end{tabular}
    }
    \vspace*{0.1mm}
    \label{tbl:HV_M5_noisy}
\end{table*}
\FloatBarrier

\begin{figure*}[!ht]
    \centering

    \begin{subfigure}[b]{0.65\textwidth}
        \begin{minipage}{\textwidth}
            \centering
            \includegraphics[width=\textwidth]{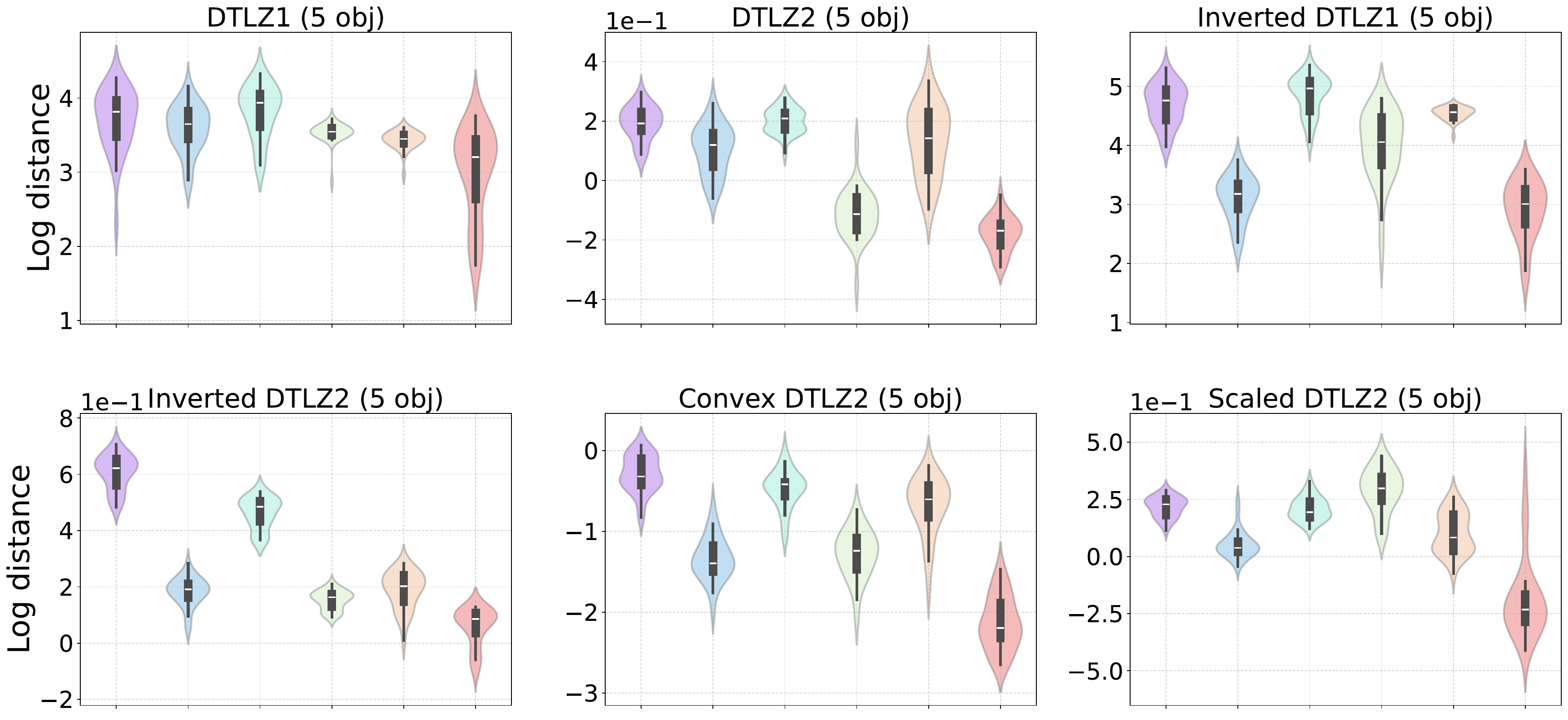}
        \end{minipage}

        \begin{minipage}{\textwidth}
            \centering
            \includegraphics[width=0.7\textwidth]{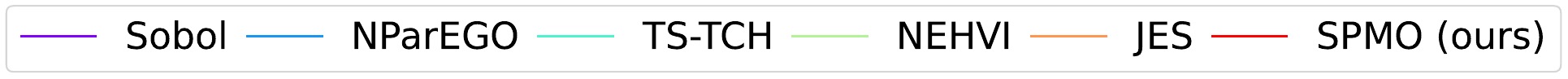}
        \end{minipage}
    \end{subfigure}

    \caption{Violin plots of the distance-based metric (log distance) obtained by the six methods on the noisy problems with five objectives. 
    Each violin represents the distribution of the distance-based metric obtained by a method over 30 independent runs. 
    }
    \label{fig:violin_dist_M5_noisy}
\end{figure*}
\FloatBarrier

\begin{figure*}[!ht]
    \centering

    \begin{subfigure}[b]{0.65\textwidth}
        \begin{minipage}{\textwidth}
            \centering
            \includegraphics[width=\textwidth]{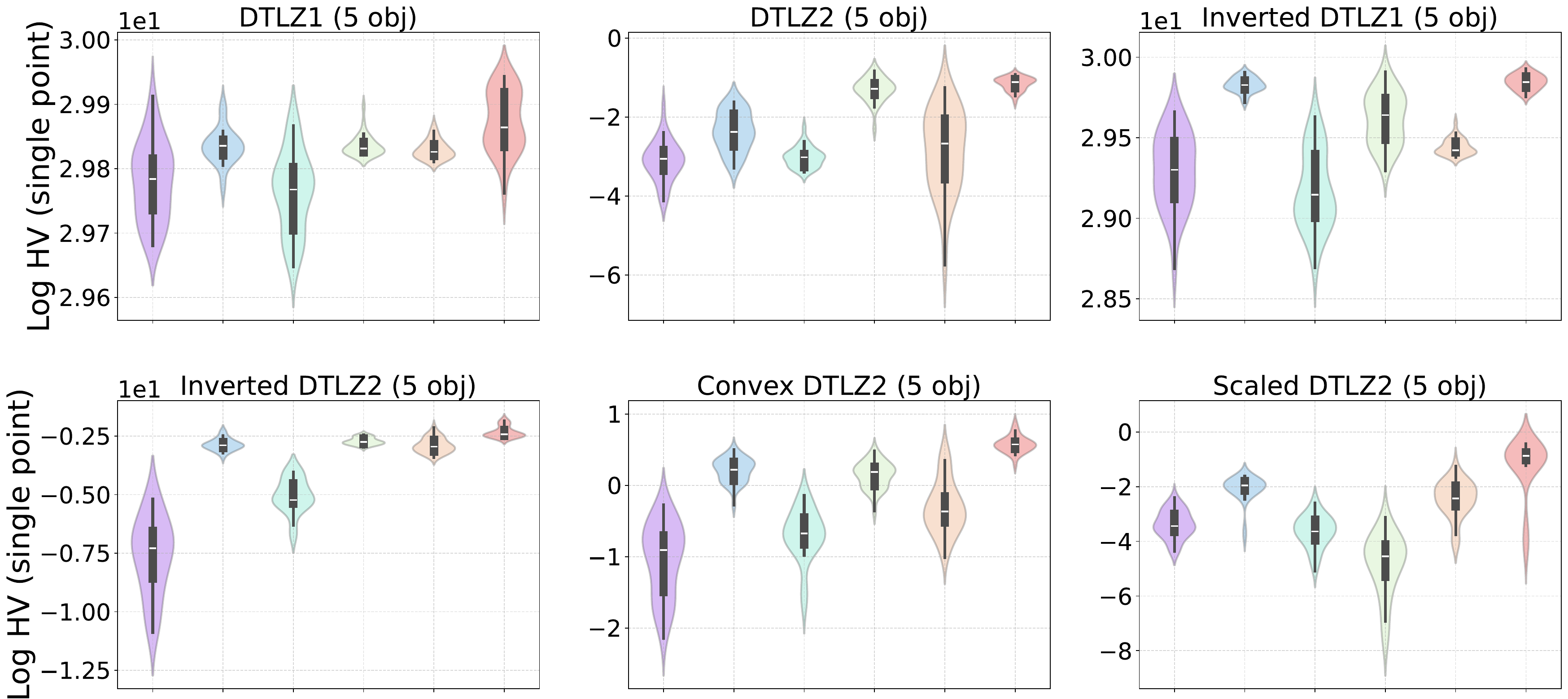}
        \end{minipage}

        \begin{minipage}{\textwidth}
            \centering
            \includegraphics[width=0.7\textwidth]{figures/distance/Legend_M5_True.pdf}
        \end{minipage}
    \end{subfigure}

    \caption{Violin plots of the HV of the best solution (in terms of its HV value) obtained by the six methods on the noisy problems with five objectives. 
    Each violin represents the distribution of maximum HV values obtained by a method over 30 independent runs.
    }
    \label{fig:violin_single_HV_M5_noisy}
\end{figure*}
\FloatBarrier

\begin{figure*}[!ht]
    \centering

    \begin{subfigure}[b]{0.65\textwidth}
    \begin{minipage}{\textwidth}
        \centering
        \includegraphics[width=\textwidth]{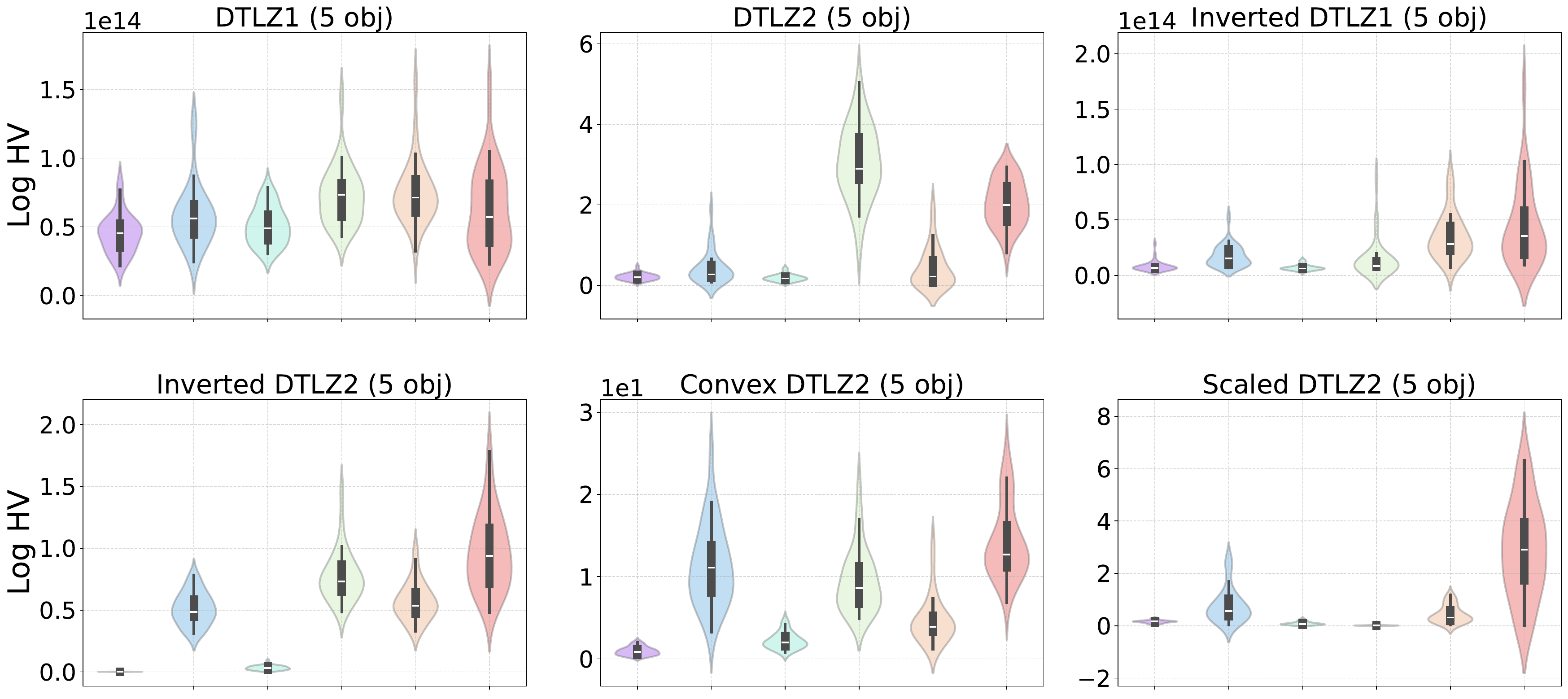}
    \end{minipage}

    \begin{minipage}{\textwidth}
        \centering
        \includegraphics[width=0.7\textwidth]{figures/distance/Legend_M5_True.pdf}
    \end{minipage}
    \end{subfigure}

    \caption{Violin plots of the HV of all evaluated solutions obtained by the six methods on the noisy problems with five objectives. 
    Each violin represents the distribution of maximum HV values obtained by a method over 30 independent runs. 
    \label{fig:violin_all_HV_M5_noisy}
    }
\end{figure*}
\FloatBarrier



\newpage
\subsection{Batch Setting}\label{appendix:sec:parallel}

We compare the proposed SPMO with the peer methods in the batch setting where the batch size $q$ is set to 5 (a commonly used value~\citep{lin2022pareto}). 
Tables~\ref{tbl:Dist_M5_batch},~\ref{tbl:HV_SP_M5_batch} and~\ref{tbl:HV_M5_batch} show the distance-based metric (log distance), the HV of the best solution (in terms of its HV value) and the HV of all evaluated solutions obtained by the SPMO and the peer methods, respectively. 
Figures~\ref{fig:violin_dist_M5_batch},~\ref{fig:violin_single_HV_M5_batch} and~\ref{fig:violin_all_HV_M5_batch} present the violin plots, illustrating the distributions of the corresponding results reported in Tables~\ref{tbl:Dist_M5_batch},~\ref{tbl:HV_SP_M5_batch} and~\ref{tbl:HV_M5_batch}, respectively.

\begin{table}[!ht]
    \centering
    \caption{Results of the distance-based metric (log distance) obtained by the SPMO and the five peer methods with a batch size $q=5$ on the problems with 5 objectives on 30 independent runs. 
    The method with the best mean is highlighted in bold. The symbols ``$+$'', ``$\sim$'' and ``$-$'' indicate that the method is statistically worse than, equivalent to and better than our SPMO, respectively.}
    \resizebox{\textwidth}{!}{%
    \begin{tabular}{lllllllc}
    \toprule
    \bfseries Method
    & \multicolumn{1}{c}{\bfseries DTLZ1} 
    & \multicolumn{1}{c}{\bfseries DTLZ2} 
    & \multicolumn{1}{c}{\bfseries Inverted DTLZ1} 
    & \multicolumn{1}{c}{\bfseries Inverted DTLZ2} 
    & \multicolumn{1}{c}{\bfseries Convex DTLZ2} 
    & \multicolumn{1}{c}{\bfseries Scaled DTLZ2} 
    & {\bfseries Sum up} \\ 
    & \multicolumn{1}{c}{Mean (Std)}
    & \multicolumn{1}{c}{Mean (Std)}
    & \multicolumn{1}{c}{Mean (Std)}
    & \multicolumn{1}{c}{Mean (Std)}
    & \multicolumn{1}{c}{Mean (Std)}
    & \multicolumn{1}{c}{Mean (Std)}
    & \multicolumn{1}{c}{$+$/$\sim$/$-$}  \\ \midrule
    \bfseries Sobol & 3.7e+0 (3.0e--1)$^+$ & 2.4e--1 (4.6e--2)$^+$ & 4.8e+0 (3.1e--1)$^+$ & 6.0e--1 (5.7e--2)$^+$ & -3.1e--1 (2.5e--1)$^+$ & 2.3e--1 (5.1e--2)$^+$ & \bfseries 6/ 0/ 0\\
    \bfseries ParEGO & 3.4e+0 (1.9e--1)$^+$ & 6.7e--2 (8.1e--2)$^+$ & 3.2e+0 (5.4e--1)$^+$ & 2.5e--1 (1.3e--2)$^+$ & -1.7e+0 (2.0e--1)$^+$ & 1.3e--1 (8.9e--2)$^+$ & \bfseries 6/ 0/ 0\\
    \bfseries TS-TCH & 3.9e+0 (2.2e--1)$^+$ & 2.0e--1 (4.1e--2)$^+$ & 4.8e+0 (3.2e--1)$^+$ & 4.6e--1 (1.2e--2)$^+$ & -7.1e--1 (2.1e--1)$^+$ & 2.6e--1 (3.8e--2)$^+$ & \bfseries 6/ 0/ 0\\
    \bfseries EHVI & 3.5e+0 (9.6e--2)$^+$ & 9.3e--3 (3.5e--3)$^+$ & 4.0e+0 (4.4e--1)$^+$ & 2.3e--1 (5.5e--3)$^+$ & -1.4e+0 (2.1e--1)$^+$ & 3.1e--1 (6.0e--2)$^+$ & \bfseries 6/ 0/ 0\\
    \bfseries JES & 3.4e+0 (1.3e--1)$^+$ & 1.1e--1 (8.0e--2)$^+$ & 4.5e+0 (1.7e--1)$^+$ & 2.6e--1 (2.0e--2)$^+$ & -1.0e+0 (4.5e--1)$^+$ & 8.8e--2 (9.3e--2)$^+$ & \bfseries 6/ 0/ 0\\
    \bfseries SPMO & \best {3.1e+0} (\best {3.0e--1})  & \best {9.0e--4} (\best {8.3e--4})  & \best {2.9e+0} (\best {4.9e--1})  & \best {2.1e--1} (\best {5.0e--5})  & \best {-2.1e+0} (\best {7.7e--3})  & \best {1.7e--4} (\best {1.2e--4})  & \\
    \bottomrule
    \end{tabular}
    }
    \vspace*{0.1mm}
    \label{tbl:Dist_M5_batch}
\end{table}
\FloatBarrier 

    
    



\begin{table}[!ht]
    \centering
    \caption{The HV of the best solution (in terms of its HV value) obtained by SPMO and the five peer methods with a batch size $q=5$ on the problems with 5 objectives on 30 independent runs. 
    The method with the best mean is highlighted in bold. The symbols ``$+$'', ``$\sim$'' and ``$-$'' indicate that the method is statistically worse than, equivalent to and better than our SPMO, respectively.
    } 
    \resizebox{\textwidth}{!}{%
    \begin{tabular}{lllllllc}
    \toprule
    \bfseries Method
    & \multicolumn{1}{c}{\bfseries DTLZ1} 
    & \multicolumn{1}{c}{\bfseries DTLZ2} 
    & \multicolumn{1}{c}{\bfseries Inverted DTLZ1} 
    & \multicolumn{1}{c}{\bfseries Inverted DTLZ2} 
    & \multicolumn{1}{c}{\bfseries Convex DTLZ2} 
    & \multicolumn{1}{c}{\bfseries Scaled DTLZ2} 
    & {\bfseries Sum up} \\ 
    & \multicolumn{1}{c}{Mean (Std)}
    & \multicolumn{1}{c}{Mean (Std)}
    & \multicolumn{1}{c}{Mean (Std)}
    & \multicolumn{1}{c}{Mean (Std)}
    & \multicolumn{1}{c}{Mean (Std)}
    & \multicolumn{1}{c}{Mean (Std)}
    & \multicolumn{1}{c}{$+$/$\sim$/$-$}  \\ \midrule
    \bfseries Sobol & 8.7e+12 (3.7e+11)$^+$ & 3.5e--2 (1.5e--2)$^+$ & 5.1e+12 (1.1e+12)$^+$ & 8.7e--4 (1.5e--3)$^+$ & 3.8e--1 (1.7e--1)$^+$ & 3.8e--2 (2.1e--2)$^+$ & \bfseries 6/ 0/ 0\\
    \bfseries ParEGO & 9.1e+12 (1.8e+11)$^+$ & 1.3e--1 (5.6e--2)$^+$ & 8.8e+12 (7.0e+11)$^+$ & 4.0e--2 (2.9e--3)$^+$ & 1.1e+0 (6.2e--2)$^+$ & 8.7e--2 (5.3e--2)$^+$ & \bfseries 6/ 0/ 0\\
    \bfseries TS-TCH & 8.4e+12 (3.8e+11)$^+$ & 3.2e--2 (1.3e--2)$^+$ & 4.9e+12 (1.2e+12)$^+$ & 7.1e--3 (1.1e--3)$^+$ & 6.5e--1 (1.4e--1)$^+$ & 2.8e--2 (1.5e--2)$^+$ & \bfseries 6/ 0/ 0\\
    \bfseries EHVI & 9.0e+12 (9.5e+10)$^+$ & \best {1.9e--1} (\best {7.2e--3})$^\sim$ & 7.5e+12 (9.7e+11)$^+$ & 4.5e--2 (1.4e--3)$^+$ & 1.0e+0 (9.7e--2)$^+$ & 1.5e--2 (1.4e--2)$^+$ & \bfseries 5/ 1/ 0\\
    \bfseries JES & 9.0e+12 (1.2e+11)$^+$ & 1.0e--1 (3.9e--2)$^+$ & 6.2e+12 (5.0e+11)$^+$ & 3.9e--2 (4.5e--3)$^+$ & 8.5e--1 (2.4e--1)$^+$ & 1.1e--1 (5.5e--2)$^+$ & \bfseries 6/ 0/ 0\\
    \bfseries SPMO & \best {9.3e+12} (\best {2.8e+11})  & 1.9e--1 (1.4e--2)  & \best {9.2e+12} (\best {4.8e+11})  & \best {4.9e--2} (\best {1.2e--5})  & \best {1.3e+0} (\best {5.0e--3})  & \best {1.6e--1} (\best {1.6e--2})  & \\
    \bottomrule
    \end{tabular}
    }
    \vspace*{0.1mm}
    \label{tbl:HV_SP_M5_batch}
\end{table}
\FloatBarrier

\begin{table}[!ht]
    \centering
    \caption{The HV of all the solutions obtained by the six methods on the problems with five objectives on 30 independent runs. 
    The method with the best mean is highlighted in bold. The symbols ``$+$'', ``$\sim$'', and ``$-$'' indicate that a method is statistically worse than, equivalent to, and better than SPMO, respectively.}
    \resizebox{\textwidth}{!}{%
    \begin{tabular}{llllllllc}
    \toprule
    \bfseries Method
    & \multicolumn{1}{c}{\bfseries DTLZ1} 
    & \multicolumn{1}{c}{\bfseries DTLZ2} 
    & \multicolumn{1}{c}{\bfseries Inverted DTLZ1} 
    & \multicolumn{1}{c}{\bfseries Inverted DTLZ2} 
    & \multicolumn{1}{c}{\bfseries Convex DTLZ2} 
    & \multicolumn{1}{c}{\bfseries Scaled DTLZ2} 
    & {\bfseries Sum up} \\ 
    & \multicolumn{1}{c }{Mean (Std)}
    & \multicolumn{1}{c }{Mean (Std)}
    & \multicolumn{1}{c }{Mean (Std)}
    & \multicolumn{1}{c }{Mean (Std)}
    & \multicolumn{1}{c }{Mean (Std)}
    & \multicolumn{1}{c }{Mean (Std)}
    & \multicolumn{1}{c}{$+$/$\sim$/$-$}  \\ \midrule

    \bfseries Sobol & 4.4e+13 (1.4e+13)$^\sim$ & 1.4e--1 (5.6e--2)$^+$ & 7.7e+12 (2.9e+12)$^+$ & 9.5e--4 (1.6e--3)$^+$ & 9.5e--1 (4.6e--1)$^+$ & 1.6e--1 (6.6e--2)$^+$ & \bfseries 5/ 1/ 0\\
    \bfseries ParEGO & 2.9e+13 (1.2e+13)$^\sim$ & 6.5e--1 (5.3e--1)$^+$ & 1.5e+13 (7.8e+12)$^+$ & 5.0e--1 (9.6e--2)$^\sim$ & 9.6e+0 (4.8e+0)$^+$ & 4.2e--1 (3.9e--1)$^+$ & \bfseries 4/ 2/ 0\\
    \bfseries TS-TCH & \best {4.5e+13} (\best {1.3e+13})$^\sim$ & 8.9e--2 (5.3e--2)$^+$ & 6.8e+12 (1.8e+12)$^+$ & 5.0e--2 (1.7e--2)$^+$ & 3.2e+0 (1.7e+0)$^+$ & 7.4e--2 (5.2e--2)$^+$ & \bfseries 5/ 1/ 0\\
    \bfseries EHVI & 2.9e+13 (1.8e+13)$^\sim$ & \best {2.1e+0} (\best {5.9e--1})$^-$ & 1.9e+13 (1.7e+13)$^+$ & \best {6.1e--1} (\best {1.5e--1})$^\sim$ & 7.4e+0 (2.2e+0)$^+$ & 2.0e--2 (1.7e--2)$^+$ & \bfseries 3/ 2/ 1\\
    \bfseries JES & 3.1e+13 (2.8e+13)$^\sim$ & 6.9e--1 (5.2e--1)$^+$ & \best {7.4e+13} (\best {9.1e+13})$^-$ & 4.7e--1 (1.1e--1)$^\sim$ & 5.7e+0 (4.6e+0)$^+$ & 7.5e--1 (6.3e--1)$^+$ & \bfseries 3/ 2/ 1\\
    \bfseries SPMO & 3.7e+13 (2.2e+13)  & 1.5e+0 (7.0e--1)  & 3.7e+13 (3.5e+13)  & 5.9e--1 (3.3e--1)  & \best {2.6e+1} (\best {1.7e+1})  & \best {1.9e+0} (\best {1.6e+0})  & \\
    \bottomrule
    \end{tabular}
    }
    \vspace*{0.1mm}
    \label{tbl:HV_M5_batch}
\end{table}
\FloatBarrier

\vspace{-2em}
\begin{figure}[!ht]
    \centering

    \begin{subfigure}[b]{0.65\textwidth}
        \begin{minipage}{\textwidth}
            \centering
            \includegraphics[width=\textwidth]{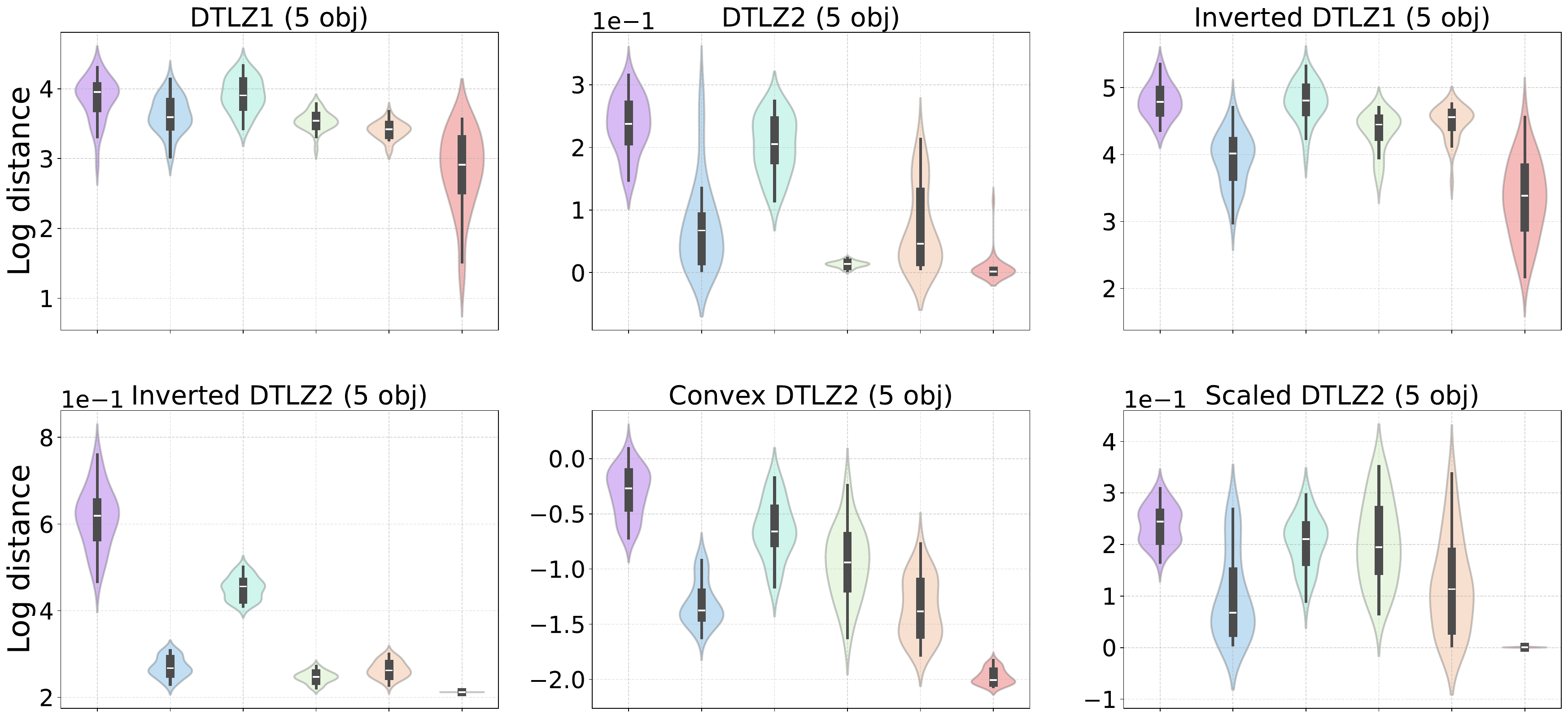}
        \end{minipage}

        \begin{minipage}{\textwidth}
            \centering
            \includegraphics[width=0.7\textwidth]{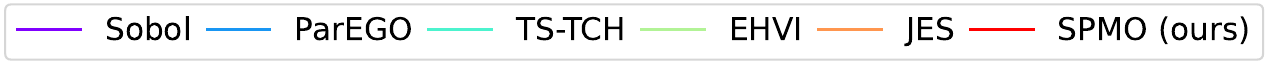}
        \end{minipage}
    \end{subfigure}

    \caption{Violin plots of the distance-based metric (log distance) obtained by the proposed SPMO and the peer methods on the problems with five objectives. 
    Each violin represents the distribution of the distance-based metric obtained by a method over 30 independent runs. 
    }
    \label{fig:violin_dist_M5_batch}
\end{figure}
\FloatBarrier 

\begin{figure}[!ht]
    \centering
    
    \begin{subfigure}[b]{0.65\textwidth}
    \begin{minipage}{\textwidth}
        \centering
        \includegraphics[width=\textwidth]{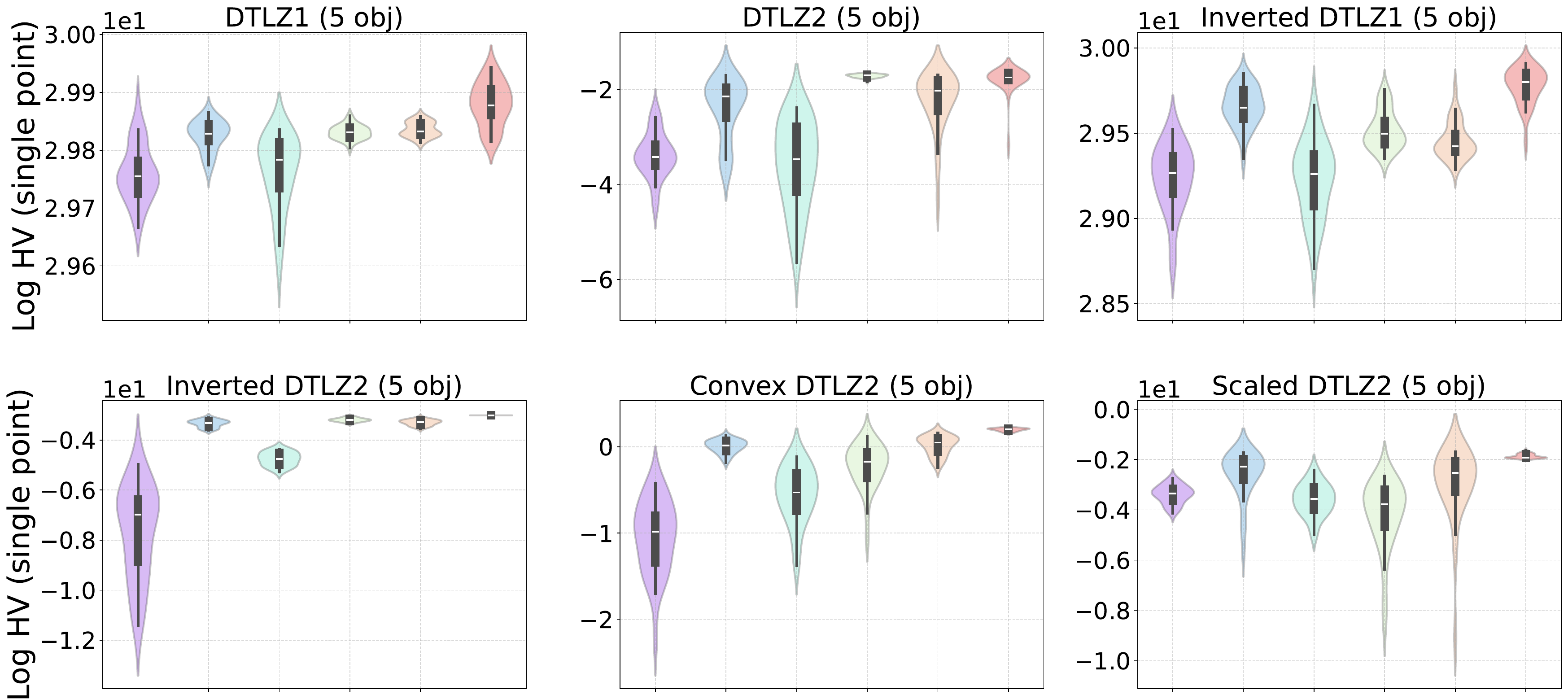}
    \end{minipage}

    \begin{minipage}{\textwidth}
        \centering
        \includegraphics[width=0.7\textwidth]{figures/distance/legend_M5_batch.pdf}
    \end{minipage}
    \end{subfigure}
    
    \caption{Violin plots of the HV of the best solution (in terms of its HV value) obtained by the six methods with a batch size $q=5$ on the problems with five objectives. 
    Each violin represents the distribution of maximum HV values obtained by a method over 30 independent runs. 
    \label{fig:violin_single_HV_M5_batch}
    }
\end{figure}
\FloatBarrier

\begin{figure}[!ht]
    \centering

    \begin{subfigure}[b]{0.65\textwidth}
    \begin{minipage}{\textwidth}
        \centering
        \includegraphics[width=\textwidth]{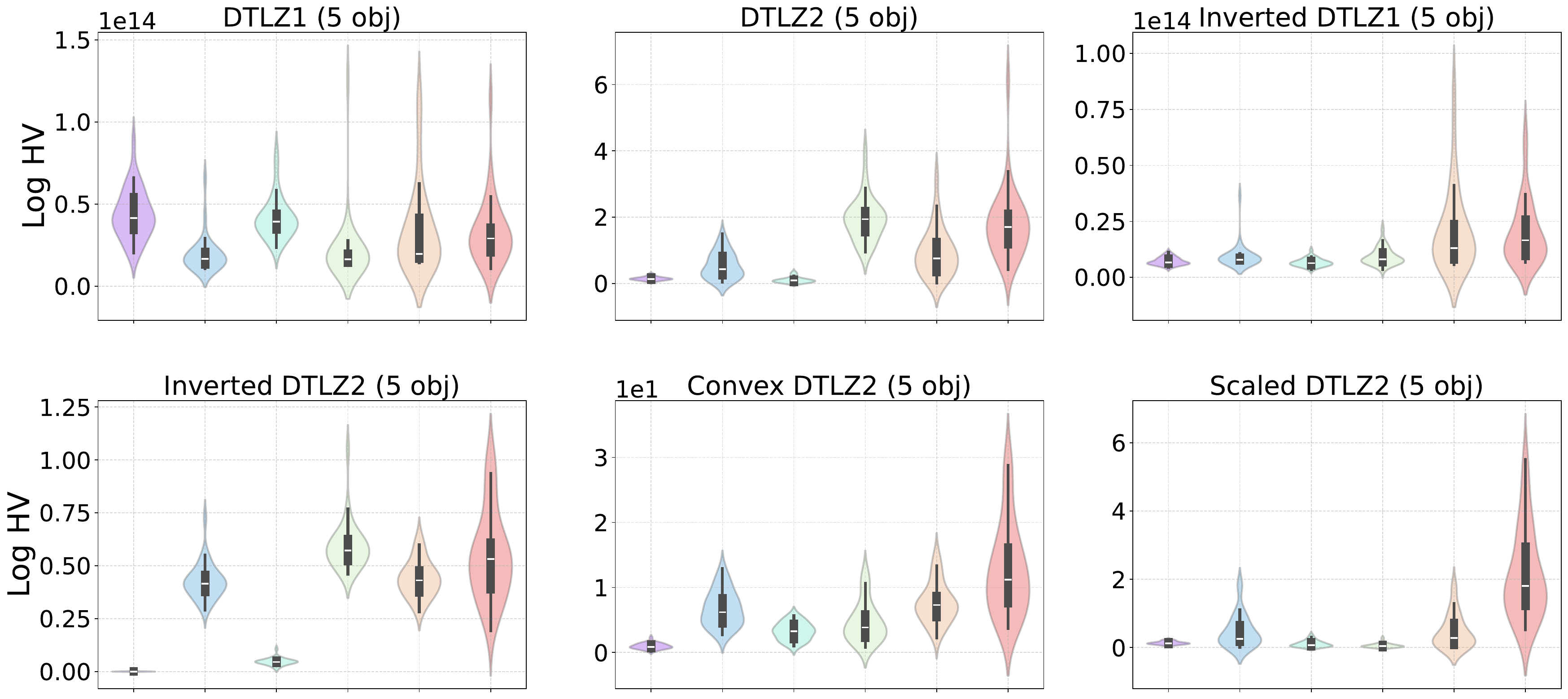}
    \end{minipage}

    \begin{minipage}{\textwidth}
        \centering
        \includegraphics[width=0.7\textwidth]{figures/distance/legend_M5_batch.pdf}
    \end{minipage}
    \end{subfigure}

    \caption{Violin plots of the HV of all evaluated solutions obtained by the six methods on the problems with five objectives. 
    Each violin represents the distribution of maximum HV values obtained by a method over 30 independent runs. 
    \label{fig:violin_all_HV_M5_batch}
    }
\end{figure}
\FloatBarrier

\newpage
\subsection{Sensitivity analysis}\label{appendix:sec:sensitivity}

In this section, we conduct a sensitivity analysis to assess the effect of different utopian points. 
We consider three different settings. The first one is slightly better than the ideal point, i.e., with a difference of $0.01$, the second is fairly better than the ideal point (i.e. $0.1$), and the last one is significantly better than the ideal point (i.e. $1.0$). 
Tables~\ref{tbl:Dist_M5_s},~\ref{tbl:HV_SP_M5_s} and~\ref{tbl:HV_M5_s} show the distance-based metric (log distance), the HV of the best solution (in terms of its HV value), and the HV of all evaluated solutions obtained by the SPMO with four
different utopian points, respectively. 
Figures~\ref{fig:violin_dist_M5_s},~\ref{fig:violin_single_HV_M5_s} and~\ref{fig:violin_all_HV_M5_s} present the violin plots, illustrating the distributions of the corresponding results reported in Tables~\ref{tbl:Dist_M5_s},~\ref{tbl:HV_SP_M5_s} and~\ref{tbl:HV_M5_s}, respectively.

\begin{table}[!ht]
    \centering
    \caption{Results of the distance-based metric (log distance) obtained by the SPMO with four different utopian points on the problems with five objectives on 30 independent runs. 
    The method with the best mean is highlighted in bold. The symbols ``$+$'', ``$\sim$'' and ``$-$'' indicate that the method is statistically worse than, equivalent to and better than SPMO (current), respectively.}
    \resizebox{\textwidth}{!}{%
    \begin{tabular}{llllllllc}
    \toprule
    \bfseries Method
    & \multicolumn{1}{c}{\bfseries DTLZ1} 
    & \multicolumn{1}{c}{\bfseries DTLZ2} 
    & \multicolumn{1}{c}{\bfseries Inverted DTLZ1} 
    & \multicolumn{1}{c}{\bfseries Inverted DTLZ2} 
    & \multicolumn{1}{c}{\bfseries Convex DTLZ2} 
    & \multicolumn{1}{c}{\bfseries Scaled DTLZ2} 
    & {\bfseries Sum up} \\ 
    & \multicolumn{1}{c }{Mean (Std)}
    & \multicolumn{1}{c }{Mean (Std)}
    & \multicolumn{1}{c }{Mean (Std)}
    & \multicolumn{1}{c }{Mean (Std)}
    & \multicolumn{1}{c }{Mean (Std)}
    & \multicolumn{1}{c }{Mean (Std)}
    & \multicolumn{1}{c}{$+$/$\sim$/$-$}  \\ \midrule

    \bfseries SPMO\_0.01 & 3.1e+0 (4.9e--1)$^\sim$ & 6.2e--4 (1.4e--3)$^-$ & \best {2.8e+0} (\best {4.2e--1})$^\sim$ & 2.1e--1 (3.6e--5)$^\sim$ & -2.1e+0 (1.6e--2)$^\sim$ & 6.4e--5 (3.9e--5)$^-$ & \bfseries 0/ 4/ 2\\
    \bfseries SPMO\_0.1 & 2.9e+0 (3.6e--1)$^-$ & \best {3.0e--4} (\best {2.3e--4})$^-$ & 3.0e+0 (5.3e--1)$^\sim$ & 2.1e--1 (4.6e--5)$^\sim$ & -2.1e+0 (2.6e--2)$^\sim$ & \best {5.3e--5} (\best {2.4e--5})$^-$ & \bfseries 0/ 3/ 3\\
    \bfseries SPMO\_1.0 & \best {2.8e+0} (\best {6.1e--1})$^\sim$ & 3.3e--4 (2.0e--4)$^-$ & 2.9e+0 (5.0e--1)$^\sim$ & \best {2.1e--1} (\best {3.4e--5})$^\sim$ & -2.1e+0 (1.6e--2)$^\sim$ & 5.7e--5 (2.4e--5)$^-$ & \bfseries 0/ 4/ 2\\
    \bfseries SPMO & 3.1e+0 (3.0e--1)  & 9.0e--4 (8.3e--4)  & 2.9e+0 (4.9e--1)  & 2.1e--1 (5.0e--5)  & \best {-2.1e+0} (\best {7.7e--3})  & 1.7e--4 (1.2e--4)  & \\
    \bottomrule
    \end{tabular}
    }
    \vspace*{0.1mm}
    \label{tbl:Dist_M5_s}
\end{table}

    



\begin{table}[!ht]
    \centering
    \caption{The HV of the best solution (in terms of its HV value) obtained by the proposed SPMO with four different utopian points on the problems with five objectives on 30 independent runs. 
    The method exhibiting the best mean is highlighted in bold. The symbols ``$+$'', ``$\sim$'', and ``$-$'' denote that a method is statistically worse than, equivalent to, or better than SPMO (current), respectively.}
    \resizebox{\textwidth}{!}{%
    \begin{tabular}{lllllllc}
    \toprule
    \bfseries Method
    & \multicolumn{1}{c}{\bfseries DTLZ1} 
    & \multicolumn{1}{c}{\bfseries DTLZ2} 
    & \multicolumn{1}{c}{\bfseries Inverted DTLZ1} 
    & \multicolumn{1}{c}{\bfseries Inverted DTLZ2} 
    & \multicolumn{1}{c}{\bfseries Convex DTLZ2} 
    & \multicolumn{1}{c}{\bfseries Scaled DTLZ2} 
    & {\bfseries Sum up} \\ 
    & \multicolumn{1}{c}{Mean (Std)}
    & \multicolumn{1}{c}{Mean (Std)}
    & \multicolumn{1}{c}{Mean (Std)}
    & \multicolumn{1}{c}{Mean (Std)}
    & \multicolumn{1}{c}{Mean (Std)}
    & \multicolumn{1}{c}{Mean (Std)}
    & \multicolumn{1}{c}{$+$/$\sim$/$-$}  \\ \midrule

    \bfseries SPMO\_0.01 & 9.4e+12 (3.7e+11)$^\sim$ & \best {1.9e--1} (\best {1.0e--2})$^\sim$ & \best {9.3e+12} (\best {3.8e+11})$^\sim$ & 4.9e--2 (9.0e--6)$^\sim$ & \best {1.3e+0} (\best {7.8e--3})$^\sim$ & 1.6e--1 (1.4e--2)$^\sim$ & \bfseries 0/ 6/ 0\\
    \bfseries SPMO\_0.1 & 9.5e+12 (2.7e+11)$^-$ & 1.9e--1 (1.6e--2)$^\sim$ & 9.0e+12 (5.6e+11)$^\sim$ & 4.9e--2 (1.1e--5)$^\sim$ & 1.2e+0 (1.1e--2)$^\sim$ & 1.6e--1 (1.0e--2)$^\sim$ & \bfseries 0/ 5/ 1\\
    \bfseries SPMO\_1.0 & \best {9.5e+12} (\best {3.8e+11})$^-$ & 1.9e--1 (1.3e--2)$^\sim$ & 9.1e+12 (4.2e+11)$^\sim$ & \best {4.9e--2} (\best {8.5e--6})$^\sim$ & 1.3e+0 (7.8e--3)$^\sim$ & 1.6e--1 (1.4e--2)$^\sim$ & \bfseries 0/ 5/ 1\\
    \bfseries SPMO & 9.3e+12 (2.8e+11)  & 1.9e--1 (1.4e--2)  & 9.2e+12 (4.8e+11)  & 4.9e--2 (1.2e--5)  & 1.3e+0 (5.0e--3)  & \best {1.6e--1} (\best {1.6e--2})  & \\
    \bottomrule
    \end{tabular}
    }
    \vspace*{0.1mm}
    \label{tbl:HV_SP_M5_s}
\end{table}
\FloatBarrier

\begin{table}[!ht]
    \centering
    \caption{The HV of all the solutions obtained by the SPMO with four different utopian points on the problems with five objectives on 30 independent runs. 
    The method with the best mean is highlighted in bold. The symbols ``$+$'', ``$\sim$'', and ``$-$'' indicate that a method is statistically worse than, equivalent to, and better than SPMO (current), respectively.}
    \resizebox{\textwidth}{!}{%
    \begin{tabular}{llllllllc}
    \toprule
    \bfseries Method
    & \multicolumn{1}{c}{\bfseries DTLZ1} 
    & \multicolumn{1}{c}{\bfseries DTLZ2} 
    & \multicolumn{1}{c}{\bfseries Inverted DTLZ1} 
    & \multicolumn{1}{c}{\bfseries Inverted DTLZ2} 
    & \multicolumn{1}{c}{\bfseries Convex DTLZ2} 
    & \multicolumn{1}{c}{\bfseries Scaled DTLZ2} 
    & {\bfseries Sum up} \\ 
    & \multicolumn{1}{c }{Mean (Std)}
    & \multicolumn{1}{c }{Mean (Std)}
    & \multicolumn{1}{c }{Mean (Std)}
    & \multicolumn{1}{c }{Mean (Std)}
    & \multicolumn{1}{c }{Mean (Std)}
    & \multicolumn{1}{c }{Mean (Std)}
    & \multicolumn{1}{c}{$+$/$\sim$/$-$}  \\ \midrule

    \bfseries SPMO\_0.01 & 3.1e+13 (1.7e+13)$^\sim$ & 2.7e+0 (1.2e+0)$^-$ & 3.5e+13 (3.7e+13)$^\sim$ & 6.5e--1 (7.0e--1)$^\sim$ & 2.5e+1 (1.8e+1)$^\sim$ & \best {2.2e+0} (\best {1.5e+0})$^\sim$ & \bfseries 0/ 5/ 1\\
    \bfseries SPMO\_0.1 & 3.5e+13 (1.5e+13)$^\sim$ & 2.7e+0 (1.2e+0)$^-$ & 2.6e+13 (1.7e+13)$^\sim$ & 6.1e--1 (4.5e--1)$^\sim$ & 2.7e+1 (1.7e+1)$^\sim$ & 1.8e+0 (1.0e+0)$^\sim$ & \bfseries 0/ 5/ 1\\
    \bfseries SPMO\_1.0 & 3.3e+13 (1.4e+13)$^\sim$ & \best {3.0e+0} (\best {1.6e+0})$^-$ & 3.2e+13 (1.9e+13)$^\sim$ & \best {7.1e--1} (\best {6.3e--1})$^\sim$ & \best {2.9e+1} (\best {2.2e+1})$^\sim$ & 2.1e+0 (1.5e+0)$^\sim$ & \bfseries 0/ 5/ 1\\
    \bfseries SPMO & \best {3.7e+13} (\best {2.2e+13})  & 1.5e+0 (7.0e--1)  & \best {3.7e+13} (\best {3.5e+13})  & 5.9e--1 (3.3e--1)  & 2.6e+1 (1.7e+1)  & 1.9e+0 (1.6e+0)  & \\
    \bottomrule
    \end{tabular}
    }
    \vspace*{0.1mm}
    \label{tbl:HV_M5_s}
\end{table}
\FloatBarrier

\vspace{-2em}
\begin{figure}[!ht]
    \centering

    \begin{subfigure}[b]{0.65\textwidth}
        \begin{minipage}{\textwidth}
            \centering
            \includegraphics[width=\textwidth]{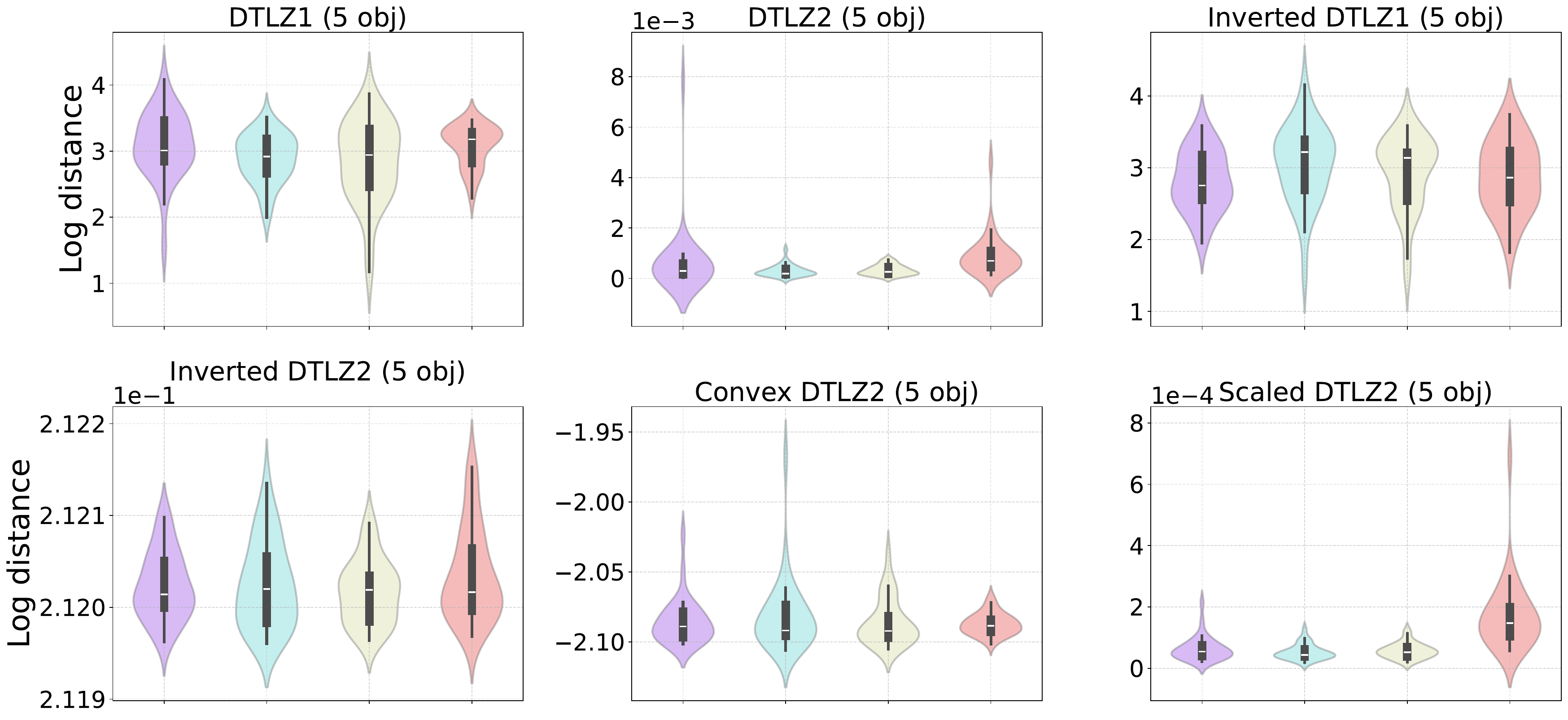}
        \end{minipage}

        \begin{minipage}{\textwidth}
            \centering
            \includegraphics[width=0.7\textwidth]{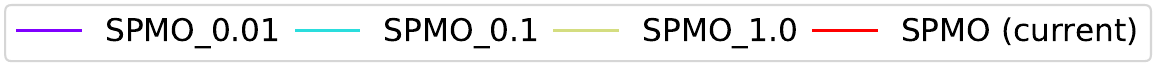}
        \end{minipage}
    \end{subfigure}

    \caption{Violin plots of the distance-based metric (log distance) obtained by the four methods on the problems with five objectives. 
    Each violin represents the distribution of the distance-based metric obtained by a method over 30 independent runs.
    }
    \label{fig:violin_dist_M5_s}
\end{figure}
\FloatBarrier

\begin{figure}[!ht]
    \centering

    \begin{subfigure}[b]{0.65\textwidth}
    \begin{minipage}{\textwidth}
        \centering
        \includegraphics[width=\textwidth]{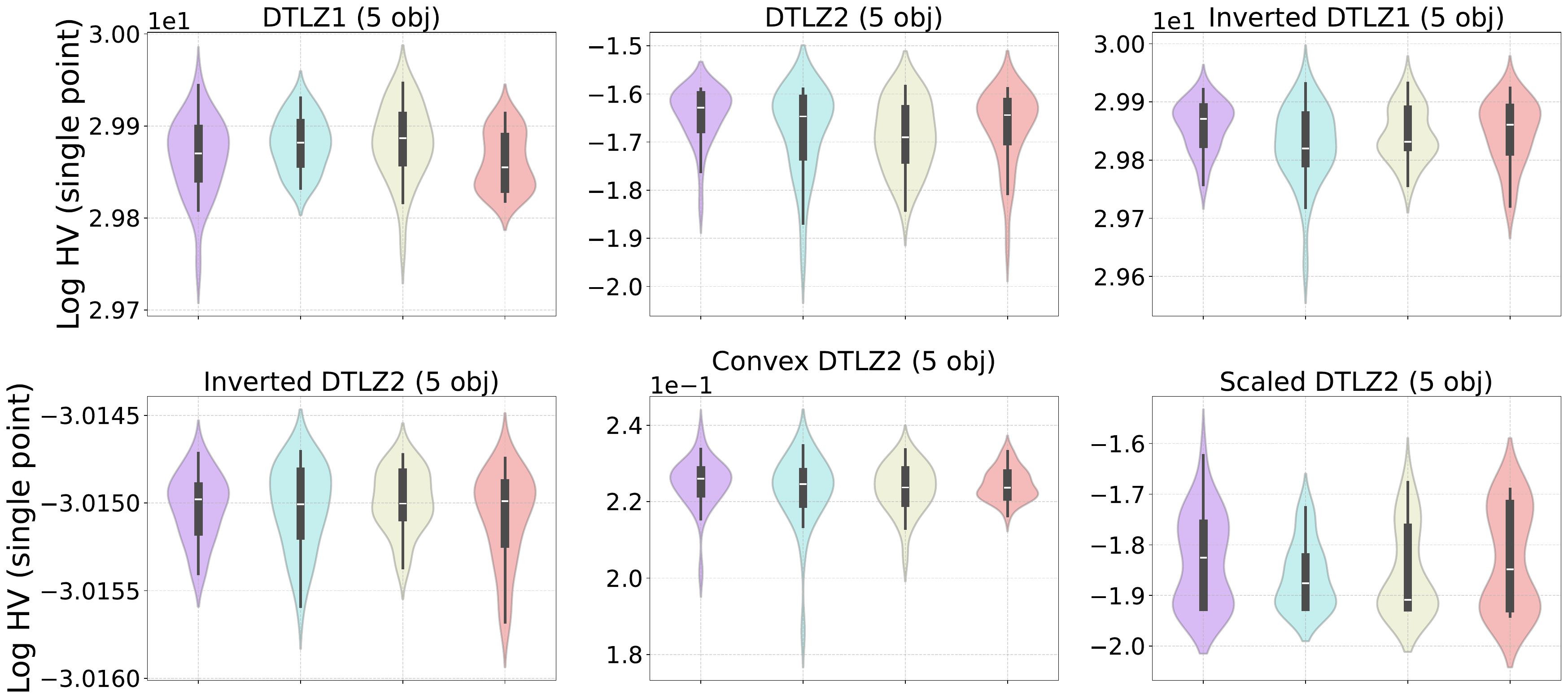}
    \end{minipage}

    \begin{minipage}{\textwidth}
        \centering
        \includegraphics[width=0.7\textwidth]{figures/distance/legend_s.pdf}
    \end{minipage}
    \end{subfigure}
    
    \caption{Violin plots of the HV of the best solution (in terms of its HV value) obtained by the four methods on the problems with five objectives. 
    Each violin represents the distribution of maximum HV values obtained by a method over 30 independent runs. 
    \label{fig:violin_single_HV_M5_s}
    }
\end{figure}
\FloatBarrier

\begin{figure}[!ht]
    \centering

    \begin{subfigure}[b]{0.65\textwidth}
    \begin{minipage}{\textwidth}
        \centering
        \includegraphics[width=\textwidth]{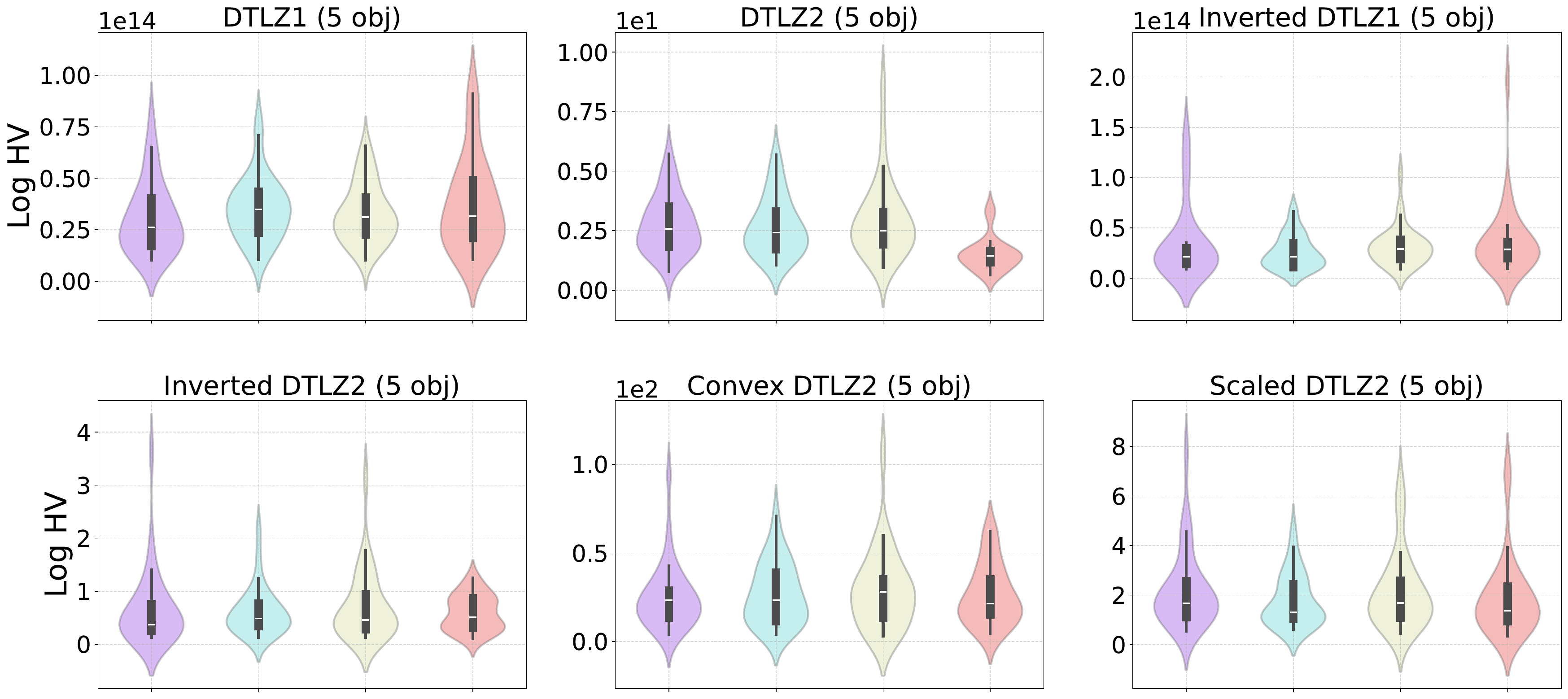}
    \end{minipage}

    \begin{minipage}{\textwidth}
        \centering
        \includegraphics[width=0.7\textwidth]{figures/distance/Legend_s.pdf}
    \end{minipage}
    \end{subfigure}

    \caption{Violin plots of the HV of all evaluated solutions obtained by the four methods the problems with five objectives. 
    Each violin represents the distribution of maximum HV values obtained by a method over 30 independent runs. 
    \label{fig:violin_all_HV_M5_s}
    }
\end{figure}
\FloatBarrier

\newpage
\subsection{Comparison of Single-Point Metrics within SPMO}\label{appendix:sec:metric}

In the proposed SPMO framework, we employ a distance metric (i.e., the distance of a solution to the utopian point). However, different metrics can be adopted provided that it can reflect the quality of a solution in achieving a good trade-off between objectives, such as the weighted sum and Tchebycheff scalarisation with the same weights $(\frac{1}{m},\dots,\frac{1}{m})$, where $m$ denotes the number of objectives. 
Here, we compare these three variants of SPMO, i.e., SPMO$_{dist}$, SPMO$_{Tch}$, and SPMO$_{ws}$. 
Tables~\ref{tbl:Dist_M5_metric},~\ref{tbl:HV_SP_M5_metric} and~\ref{tbl:HV_M5_metric} show the distance-based metric (log distance), the HV of the best solution (in terms of its HV value) and the HV of all evaluated solutions obtained by the SPMO with three different single-point metrics, respectively. 
Figures~\ref{fig:violin_dist_M5_metric},~\ref{fig:violin_single_HV_M5_metric} and~\ref{fig:violin_all_HV_M5_metric} present the violin plots, illustrating the distributions of the corresponding results reported in Tables~\ref{tbl:Dist_M5_metric},~\ref{tbl:HV_SP_M5_metric} and~\ref{tbl:HV_M5_metric}, respectively.

\begin{table}[!ht]
    \centering
    \caption{Results of the distance-based metric (log distance) obtained by the SPMO using three different single-point metrics on the problems with five objectives on 30 independent runs. The method with the best mean is highlighted in bold. The symbols ``$+$'', ``$\sim$'' and ``$-$'' indicate that the method is statistically worse than, equivalent to and better than our SPMO (i.e., SPMO$_{dist}$), respectively.}
    \resizebox{\textwidth}{!}{%
    \begin{tabular}{llllllllc}
    \toprule
    \bfseries Method
    & \multicolumn{1}{c}{\bfseries DTLZ1} 
    & \multicolumn{1}{c}{\bfseries DTLZ2} 
    & \multicolumn{1}{c}{\bfseries Inverted DTLZ1} 
    & \multicolumn{1}{c}{\bfseries Inverted DTLZ2} 
    & \multicolumn{1}{c}{\bfseries Convex DTLZ2} 
    & \multicolumn{1}{c}{\bfseries Scaled DTLZ2} 
    & {\bfseries Sum up} \\ 
    & \multicolumn{1}{c }{Mean (Std)}
    & \multicolumn{1}{c }{Mean (Std)}
    & \multicolumn{1}{c }{Mean (Std)}
    & \multicolumn{1}{c }{Mean (Std)}
    & \multicolumn{1}{c }{Mean (Std)}
    & \multicolumn{1}{c }{Mean (Std)}
    & \multicolumn{1}{c}{$+$/$\sim$/$-$}  \\ \midrule

    \bfseries SPMO$_{Tch}$ & 3.5e+0 (3.2e--1)$^+$ & \best {3.5e--5} (\best {4.3e--5})$^-$ & 2.9e+0 (6.5e--1)$^\sim$ & 3.1e--1 (3.3e--2)$^+$ & -1.8e+0 (1.7e--1)$^+$ & \best {2.9e--5} (\best {1.9e--5})$^-$ & \bfseries 3/ 1/ 2\\
    \bfseries SPMO$_{ws}$ & 3.6e+0 (1.3e--1)$^+$ & 2.1e--4 (1.5e--4)$^-$ & 3.1e+0 (5.4e--1)$^\sim$ & 2.2e--1 (2.7e--3)$^+$ & -1.2e+0 (2.7e--1)$^+$ & 3.2e--4 (3.4e--4)$^+$ & \bfseries 4/ 1/ 1\\
    \bfseries SPMO & \best {3.1e+0} (\best {3.0e--1})  & 9.0e--4 (8.3e--4)  & \best {2.9e+0} (\best {4.9e--1})  & \best {2.1e--1} (\best {5.0e--5})  & \best {-2.1e+0} (\best {7.7e--3})  & 1.7e--4 (1.2e--4)  & \\
    \bottomrule
    \end{tabular}
    }
    \vspace*{0.1mm}
    \label{tbl:Dist_M5_metric}
\end{table}

    
    



\begin{table}[!ht]
    \centering
    \caption{The HV of the best solution (in terms of its HV value) obtained by the proposed SPMO using three different single-point metrics on the problems with five objectives on 30 independent runs. 
    The method exhibiting the best mean is highlighted in bold. The symbols ``$+$'', ``$\sim$'', and ``$-$'' denote that a method is statistically worse than, equivalent to, or better than SPMO (i.e., SPMO$_{dist}$), respectively.}
    \resizebox{\textwidth}{!}{%
    \begin{tabular}{lllllllc}
    \toprule
    \bfseries Method
    & \multicolumn{1}{c}{\bfseries DTLZ1} 
    & \multicolumn{1}{c}{\bfseries DTLZ2} 
    & \multicolumn{1}{c}{\bfseries Inverted DTLZ1} 
    & \multicolumn{1}{c}{\bfseries Inverted DTLZ2} 
    & \multicolumn{1}{c}{\bfseries Convex DTLZ2} 
    & \multicolumn{1}{c}{\bfseries Scaled DTLZ2} 
    & {\bfseries Sum up} \\ 
    & \multicolumn{1}{c}{Mean (Std)}
    & \multicolumn{1}{c}{Mean (Std)}
    & \multicolumn{1}{c}{Mean (Std)}
    & \multicolumn{1}{c}{Mean (Std)}
    & \multicolumn{1}{c}{Mean (Std)}
    & \multicolumn{1}{c}{Mean (Std)}
    & \multicolumn{1}{c}{$+$/$\sim$/$-$}  \\ \midrule
    \bfseries SPMO$_{Tch}$ & 9.1e+12 (3.0e+11)$^+$ & 1.6e--1 (1.9e--2)$^+$ & 9.1e+12 (5.7e+11)$^\sim$ & 2.7e--2 (5.9e--3)$^+$ & 1.2e+0 (4.2e--2)$^+$ & 1.6e--1 (2.2e--2)$^\sim$ & \bfseries 4/ 2/ 0\\
    \bfseries SPMO$_{ws}$ & 9.1e+12 (1.3e+11)$^+$ & 1.7e--1 (1.4e--2)$^+$ & 9.0e+12 (6.5e+11)$^\sim$ & 4.8e--2 (6.7e--4)$^+$ & 9.7e--1 (1.2e--1)$^+$ & \best {1.7e--1} (\best {1.5e--2})$^\sim$ & \bfseries 4/ 2/ 0\\
    \bfseries SPMO & \best {9.3e+12} (\best {2.8e+11})  & \best {1.9e--1} (\best {1.4e--2})  & \best {9.2e+12} (\best {4.8e+11})  & \best {4.9e--2} (\best {1.2e--5})  & \best {1.3e+0} (\best {5.0e--3})  & 1.6e--1 (1.6e--2)  & \\
    \bottomrule
    \end{tabular}
    }
    \vspace*{0.1mm}
    \label{tbl:HV_SP_M5_metric}
\end{table}
\FloatBarrier

\begin{table}[!ht]
    \centering
    \caption{The HV of all the solutions obtained by the SPMO using three different single-point metrics on the problems with five objectives on 30 independent runs. 
    The method with the best mean is highlighted in bold. The symbols ``$+$'', ``$\sim$'', and ``$-$'' indicate that a method is statistically worse than, equivalent to, and better than SPMO (i.e., SPMO$_{dist}$), respectively.}
    \resizebox{\textwidth}{!}{%
    \begin{tabular}{llllllllc}
    \toprule
    \bfseries Method
    & \multicolumn{1}{c}{\bfseries DTLZ1} 
    & \multicolumn{1}{c}{\bfseries DTLZ2} 
    & \multicolumn{1}{c}{\bfseries Inverted DTLZ1} 
    & \multicolumn{1}{c}{\bfseries Inverted DTLZ2} 
    & \multicolumn{1}{c}{\bfseries Convex DTLZ2} 
    & \multicolumn{1}{c}{\bfseries Scaled DTLZ2} 
    & {\bfseries Sum up} \\ 
    & \multicolumn{1}{c }{Mean (Std)}
    & \multicolumn{1}{c }{Mean (Std)}
    & \multicolumn{1}{c }{Mean (Std)}
    & \multicolumn{1}{c }{Mean (Std)}
    & \multicolumn{1}{c }{Mean (Std)}
    & \multicolumn{1}{c }{Mean (Std)}
    & \multicolumn{1}{c}{$+$/$\sim$/$-$}  \\ \midrule

    \bfseries SPMO$_{Tch}$ & 1.5e+13 (6.2e+12)$^+$ & \best {4.9e+0} (\best {7.6e+0})$^-$ & 2.1e+13 (1.5e+13)$^+$ & \best {2.1e+0} (\best {2.6e+0})$^-$ & 1.3e+1 (7.4e+0)$^+$ & \best {3.3e+0} (\best {3.4e+0})$^-$ & \bfseries 3/ 0/ 3\\
    \bfseries SPMO$_{ws}$ & 1.3e+13 (3.1e+12)$^+$ & 1.6e+0 (7.5e--1)$^\sim$ & 2.4e+13 (1.8e+13)$^+$ & 7.4e--1 (7.5e--1)$^\sim$ & 6.4e+0 (3.8e+0)$^+$ & 1.2e+0 (3.5e--1)$^\sim$ & \bfseries 3/ 3/ 0\\
    \bfseries SPMO & \best {3.7e+13} (\best {2.2e+13})  & 1.5e+0 (7.0e--1)  & \best {3.7e+13} (\best {3.5e+13})  & 5.9e--1 (3.3e--1)  & \best {2.6e+1} (\best {1.7e+1})  & 1.9e+0 (1.6e+0)  & \\
    \bottomrule
    \end{tabular}
    }
    \vspace*{0.1mm}
    \label{tbl:HV_M5_metric}
\end{table}
\FloatBarrier

\newpage
\begin{figure}[!ht]
    \centering

    \begin{subfigure}[b]{0.63\textwidth}
        \begin{minipage}{\textwidth}
            \centering
            \includegraphics[width=\textwidth]{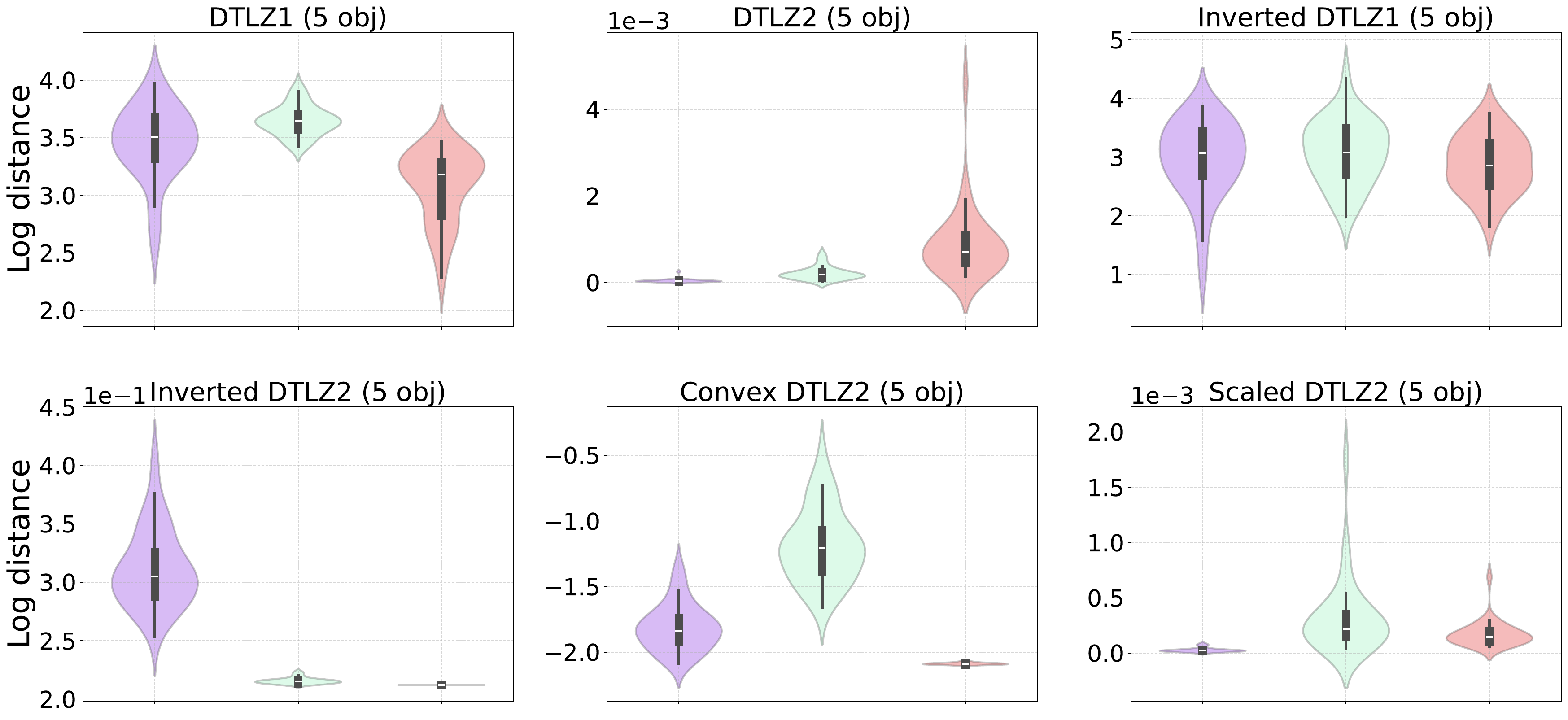}
        \end{minipage}

        \begin{minipage}{\textwidth}
            \centering
            \includegraphics[width=0.7\textwidth]{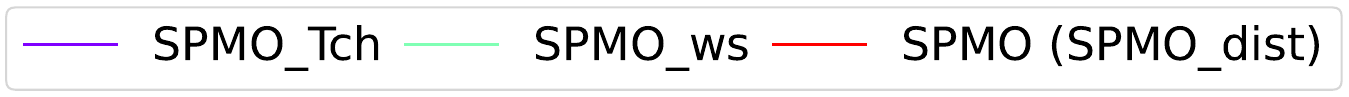}
        \end{minipage}
    \end{subfigure}

    \caption{Violin plots of the distance-based metric (log distance) obtained by the SPMO using three different single-point metrics on the problems with five objectives. 
    Each violin represents the distribution of the distance-based metric obtained by a method over 30 independent runs. 
    }
    \label{fig:violin_dist_M5_metric}
\end{figure}
\FloatBarrier

\begin{figure}[!ht]
    \centering

    \begin{subfigure}[b]{0.63\textwidth}
    \begin{minipage}{\textwidth}
        \centering
        \includegraphics[width=\textwidth]{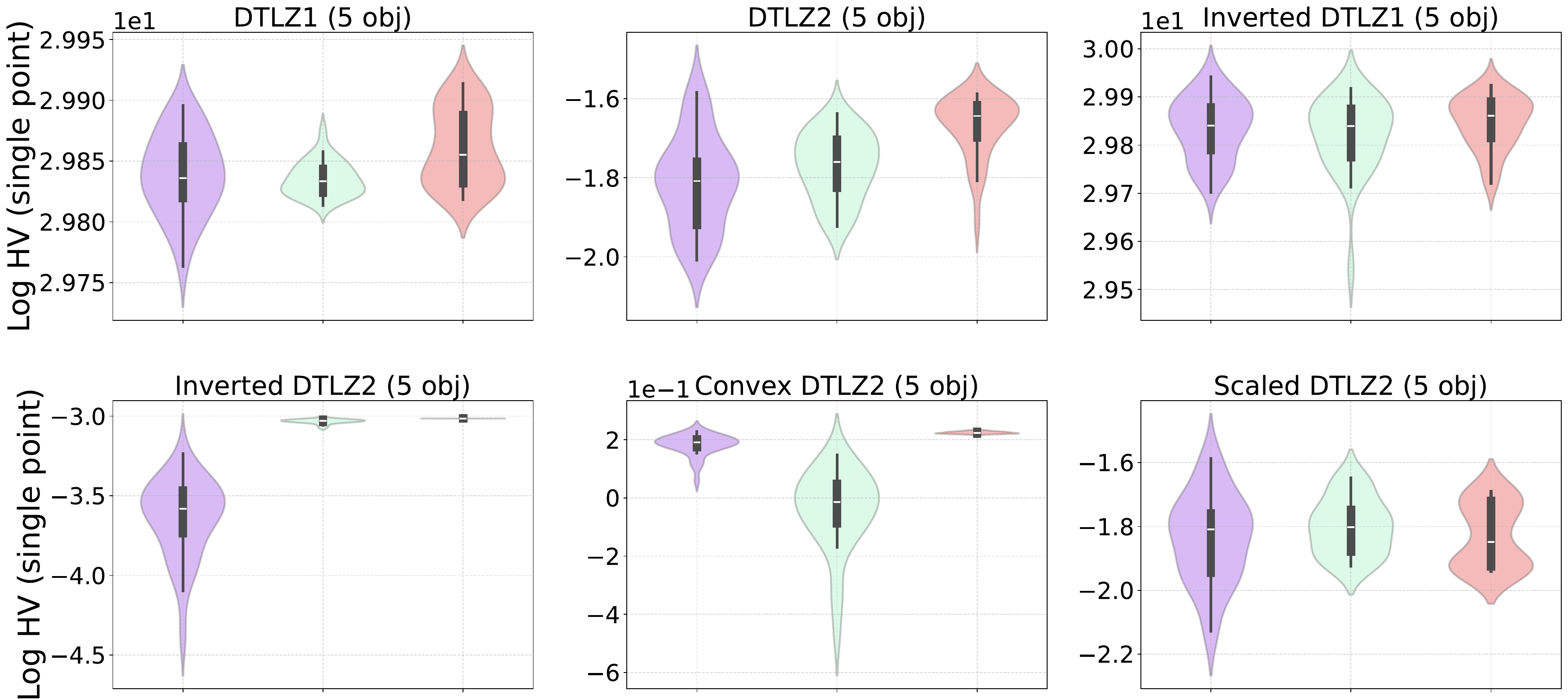}
    \end{minipage}

    \begin{minipage}{\textwidth}
        \centering
        \includegraphics[width=0.7\textwidth]{figures/distance/legend_metric.pdf}
    \end{minipage}
    \end{subfigure}
    
    \caption{Violin plots of the HV of the best solution (in terms of its HV value) obtained by the SPMO using three different single-point metrics on the problems with 5 objectives. 
    Each violin represents the distribution of the single-point HV obtained by a method over 30 independent runs. 
    \label{fig:violin_single_HV_M5_metric}
    }
\end{figure}
\FloatBarrier

\begin{figure}[!ht]
    \centering

    \begin{subfigure}[b]{0.63\textwidth}
    \begin{minipage}{\textwidth}
        \centering
        \includegraphics[width=\textwidth]{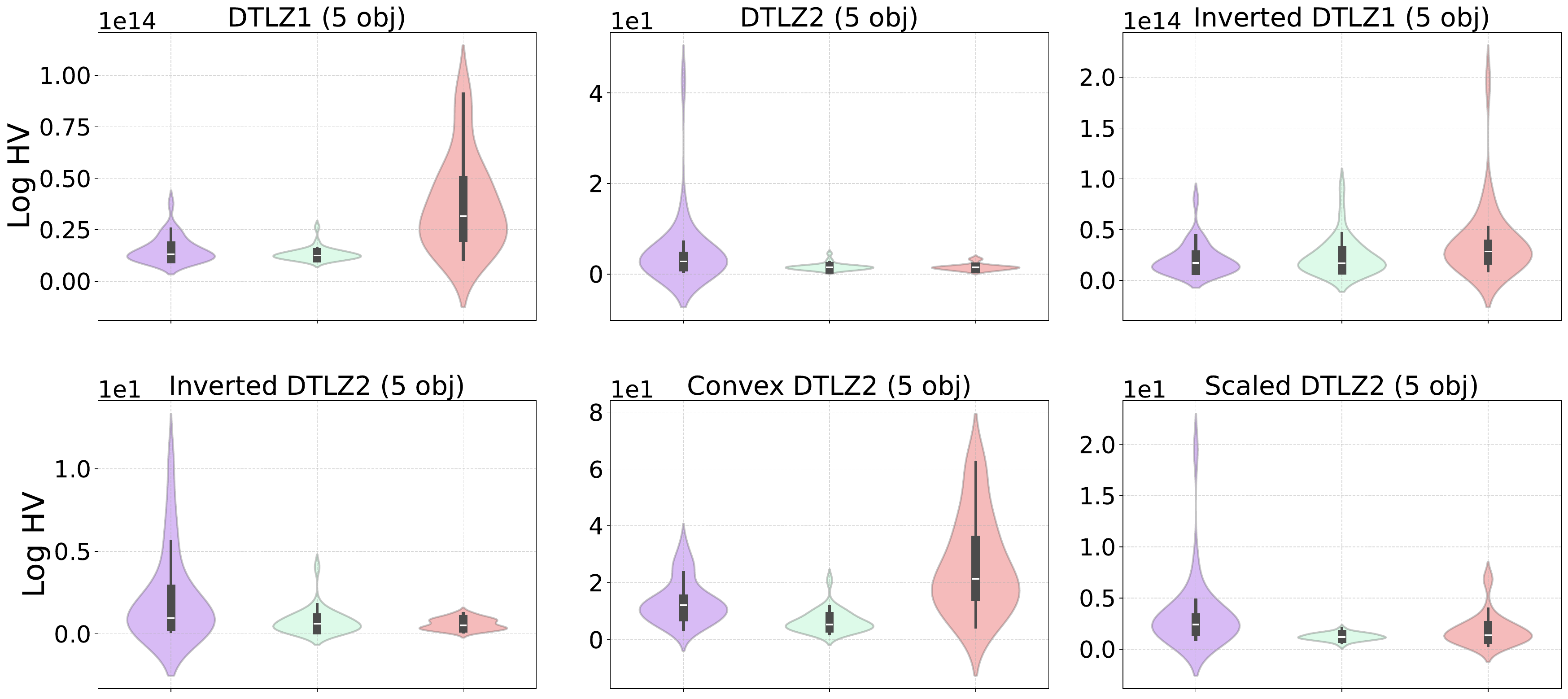}
    \end{minipage}

    \begin{minipage}{\textwidth}
        \centering
        \includegraphics[width=0.7\textwidth]{figures/distance/legend_metric.pdf}
    \end{minipage}
    \end{subfigure}

    \caption{Violin plots of the HV of all evaluated solutions obtained by the SPMO using three different single-point metrics on the problems with five objectives. 
    Each violin represents the distribution of maximum HV values obtained by a method over 30 independent runs. 
    \label{fig:violin_all_HV_M5_metric}
    }
\end{figure}
\FloatBarrier

\newpage
\subsection{Acquisition Wall Time}\label{appendix:sec:time}

Table~\ref{tbl:wall_time} presents the mean acquisition optimisation wall time of the eight methods. 
As shown, when the number of objectives is 3 or 5, the time of all the methods is acceptable with a maximum of 98 seconds. 
As the number of objectives increases to 10, hypervolume-based methods (i.e., EHVI and NEHVI) become very expensive (taking about half an hour and more than 3 hours, respectively). 
The proposed SPMO method shows high computational efficiency, achieving the lowest time requirement in four out of the six instances. 


\begin{table*}[!ht]
    \centering
    \caption{Mean acquisition optimisation wall time in seconds based on the $2(d + 1)$ initial Sobol samples on DTLZ1 problems with $m=3,5,10$ objectives, where $d = m + 4$, over 30 runs. 
    Experiments are conducted using a CPU (Intel Xeon CPU Platinum 8360Y @ 2.40 GHz) and a GPU (NVIDIA A100). 
    Note that N/A means the wall time of NEHVI on DTLZ1 with 10 objectives exceeds 3 hours. 
    }
    \resizebox{0.8\textwidth}{!}{%
    \begin{tabular}{ccccccccc}
    \toprule
    \bfseries Device\textbackslash Method & \bfseries ParEGO & \bfseries NParEGO & \bfseries TS-TCH & \bfseries EHVI & \bfseries NEHVI & \bfseries C-EHVI & \bfseries JES  & \bfseries SPMO (ours) \\
    \midrule
    \bfseries CPU (3 obj)  & 3.46  & 3.49  & 12.29  & 4.23  & 6.15  & 10.23 & 9.24  & 2.40 \\
    \bfseries GPU (3 obj)  & 5.19  & 3.94 &  14.74 & 3.05  & 5.73  & 9.48 & 14.02  & 2.24 \\
    
    \bfseries CPU (5 obj)  & 2.59  & 2.14  & 28.32  & 36.03  & 97.98  & 29.89 & 25.16  & 2.58 \\
    \bfseries GPU (5 obj)  & 9.02  & 10.33 & 22.29  & 10.32  & 63.90  & 23.89 & 25.80  & 5.31 \\
    \bfseries CPU (10 obj) & 8.76  & 5.97  & 87.44  & 1134.35 & N/A     & 77.94& 388.37 & 6.95 \\
    \bfseries GPU (10 obj) & 41.33 & 30.30 & 64.78  & 2426.04 & N/A     & 86.45 & 676.31 & 24.05 \\
    \bottomrule
    \end{tabular}
    \label{tbl:wall_time}
    }
    
\end{table*}

\section{Applicability of the Proposed SPMO}\label{appendix:sec:discussion}

In conventional multi-objective optimisation, algorithms are designed to approximate the entire Pareto front, so that a decision-maker can later select a preferred solution based on their own preferences. 
This is the ideal situation, as a well-represented Pareto front provides the most comprehensive view of the possible trade-offs~\citep{jiang2025multi}. However, under tight evaluation budgets - especially when many objectives are involved - it is often unrealistic, if not impossible, to obtain a good approximation of the Pareto front.
In such settings, the decision-maker may benefit more from receiving a well-balanced solution that is close to the Pareto front, rather than a well-distributed solution set far from the front. 
The proposed SPMO framework is designed for this purpose: instead of spreading search effort across the whole front, it directs the optimisation towards a well-balanced solution. 
With a focused search effort, the obtained solution is often closer to the front, and thus has a higher likelihood of being selected by the decision-maker.


It is worth pointing out that if the optimisation problem under consideration is extremely simple (e.g., smooth, unimodal landscape with a very limited search space), on which finding a good representation of the entire Pareto front is possible under tight budgets, then our approach may not be desirable. 
Moreover, exploring the Pareto front can help decision-makers better understand the optimisation problem and facilitate the elicitation or refinement of their preferences. In such cases, our framework is not applicable.

\section{Extensions}\label{appendix:sec:extensions}

The preceding discussion has addressed multi-objective optimisation problems in which evaluations are performed either sequentially or in batches, under both noiseless and noisy scenarios. 
However, not all multi-objective settings conform to these scenarios. 
To accommodate a broader class of problems, we propose several extensions that enable the methodology to handle more optimisation scenarios. 

\paragraph{High-Dimensional Bayesian Optimisation (HDBO).} 

High-dimensional black-box optimisation problems are highly challenging and frequently encountered in a wide range of applications. 
The dimensionality may range from tens to a billion~\citep{gonzalez2024survey,papenmeier2023bounce,santoni2024comparison,wang2016bayesian,hoang2025high}. 
To tackle such optimisation problems, various HDBO methods have been proposed~\citep{binois2022survey,chen2024pg,nayebi2019framework,wang2018batched,wang2016bayesian,xu2025standard}.
They can loosely be categorised into four classes~\citep{santoni2024comparison}, i.e., variable selection~\citep{eriksson2021high}, additive models~\citep{delbridge2020randomly,han2021high,wang2018batched,ziomek2023random}, embeddings~\citep{antonov2022high,letham2020re,raponi2020high}, and trust regions~\citep{daulton2022multi,diouane2023trego,eriksson2019scalable}. 
However, recent studies show that standard Gaussian processes without the above techniques can perform well in high-dimensional spaces~\citep{hvarfner2024vanilla,papenmeier2025understanding,xu2025standard} and suggest that the main issue in high-dimensional BO is the gradient vanishing. 
Our work can be naturally extended to the high-dimensional setting by mitigating the gradient vanishing issue~\citep{papenmeier2025understanding,xu2025standard}.

\paragraph{Multi-Fidelity Bayesian Optimisation (MFBO).} 

In many real-world optimisation scenarios, the evaluation is often available at multiple fidelity levels, where increasing fidelity typically leads to improved accuracy at the expense of higher computational cost. 
Many MFBO methods have been proposed to tackle such optimisation problems~\citep{belakaria2020multi,kandasamy2017multi,li2020multi,moss2021gibbon,song2019general,takeno2020multi,wu2020practical,zhang2017information}. 
Our proposed SPMO can be potentially extended to the multi-fidelity setting by integrating prior techniques, e.g., building multiple surrogate models of different levels of fidelity. 






\end{document}